\numberwithin{figure}{section}
\numberwithin{table}{section}
\newcommand{\proda}{Case}
\newcommand{\prodb}{Adapter}
\newcommand{\prodc}{Laptop}
\newcommand{\prodd}{Tablet}
\newcommand{\prode}{Smartphone}
\newcommand{\businessa}{Accessories}
\newcommand{\businessb}{Devices}
\newtheorem{mytheorem}{Theorem}[section]
\newtheorem{mylemma}{Lemma}[section]
\newtheorem{mycorollary}{Corollary}[section]
\newenvironment{repeattheorem}[1][Theorem]{\begin{trivlist}
\item[\hskip \labelsep {\bfseries Theorem~#1}] \itshape}{\end{trivlist}}
\newenvironment{repeatlemma}[1][Lemma]{\begin{trivlist}
\item[\hskip \labelsep {\bfseries Lemma~#1}] \itshape}{\end{trivlist}}
\newenvironment{repeatdefinition}[2][Definition]{\begin{trivlist}
\item[\hskip \labelsep {\bfseries Definition~#1 (#2)}]}{\end{trivlist}}
\renewenvironment{proof}[1][Proof]{\begin{trivlist}
\item[\hskip \labelsep {\bfseries #1.}]}{\end{trivlist}}
\newcounter{definition}[section]
\renewcommand{\thedefinition}{\thesection.\arabic{definition}}
\renewenvironment{definition}[1][Definition]{\refstepcounter{definition}\begin{trivlist}
\item[\hskip \labelsep {\bfseries Definition~\thedefinition\ (#1)}]}{\end{trivlist}}
\newcounter{examplectr}[section]
\renewcommand{\theexamplectr}{\thesection.\arabic{examplectr}}
\renewenvironment{example}[1][Example]{\refstepcounter{examplectr}\begin{trivlist}
\item[\hskip \labelsep {\bfseries Example~\theexamplectr\ }]}{\end{trivlist}}
\newcounter{remarkctr}[section]
\renewcommand{\theremarkctr}{\thesection.\arabic{remarkctr}}
\newcommand{\Perp}{\perp \! \! \! \perp}
\newcommand{\Perpn}{\perp \! \! \! \perp \! \! \! \! \! \!/ \ \ }
\begin{document}

\title{Reasoning about Independence\\in Probabilistic Models of Relational Data}

\author{\name Marc Maier \email maier@cs.umass.edu \\
		\AND
		\name Katerina Marazopoulou \email kmarazo@cs.umass.edu \\
       \AND
       \name David Jensen \email jensen@cs.umass.edu \\
       \addr School of Computer Science\\
       University of Massachusetts Amherst\\
       Amherst, MA 01003, USA}

\editor{}

\maketitle

\begin{abstract}%   <- trailing '%' for backward compatibility of .sty file
%Bayesian networks leverage conditional independence to compactly encode joint probability distributions.
%Many learning algorithms exploit the constraints implied by observed conditional independencies to learn the structure of Bayesian networks.
%The rules of \textit{d}-separation provide a theoretical and algorithmic framework for deriving conditional independence facts from model structure.
%However, this theory only applies to Bayesian networks.
%Many real-world systems, such as social or economic systems, are characterized by interacting heterogeneous entities and probabilistic dependencies among the variables on those entities.
%Consequently, researchers have developed extensions to Bayesian networks that can represent these relational dependencies.
We extend the theory of \textit{d}-separation to cases in which data instances are not independent and identically distributed.
We show that applying the rules of \textit{d}-separation directly to the structure of probabilistic models of relational data inaccurately infers conditional independence.
We introduce \textit{relational d-separation}, a theory for deriving conditional independence facts from relational models.
We provide a new representation, the \textit{abstract ground graph}, that enables a sound, complete, and computationally efficient method for answering \textit{d}-separation queries about relational models, and we present empirical results that demonstrate effectiveness.
\end{abstract}

\begin{keywords}
relational models, \textit{d}-separation, conditional independence, lifted representations, directed graphical models
\end{keywords}

\section{Introduction}
\label{sec:intro}

The rules of \textit{d}-separation can algorithmically derive all conditional independence facts that hold in distributions represented by a Bayesian network.
In this paper, we show that \textit{d}-separation may not correctly infer conditional independence when applied directly to the graphical structure of a relational model.
We introduce the notion of \textit{relational d-separation}---a graphical criterion for deriving conditional independence facts from relational models---and define its semantics to be consistent with traditional \textit{d}-separation (i.e., it claims independence only when it is guaranteed to hold for all model instantiations).
We present an alternative, lifted representation---the \textit{abstract ground graph}---that enables an algorithm for deriving conditional independence facts from relational models.
We show that this approach is sound, complete, and computationally efficient, and we provide an empirical demonstration of effectiveness across synthetic causal structures of relational domains.

The main contributions of this work are:
\begin{itemize}
\item A precise formalization of fundamental concepts of relational data and relational models necessary to reason about conditional independence (Section~\ref{sec:rel_concepts})
\item A formal definition of \textit{d}-separation for relational models analogous to \textit{d}-separation for Bayesian networks (Section~\ref{sec:rds})
\item The abstract ground graph---a lifted representation that \textit{abstracts} all possible ground graphs of a given relational model structure, as well as proofs of the soundness and completeness of this abstraction (Section~\ref{sec:aggs})
\item Proofs of soundness and completeness for a method that answers relational \textit{d}-separation queries (Section~\ref{sec:proof-of-correctness})
\end{itemize}

We also provide an empirical comparison of relational \textit{d}-separation to traditional \textit{d}-separation applied directly to relational model structure, showing that, not only would most queries be undefined, but those that can be represented yield an incorrect judgment of conditional independence up to 50\% of the time (Section~\ref{sec:naive_rds}).
Finally, we offer additional empirical results on synthetic data that demonstrate the effectiveness of relational \textit{d}-separation with respect to complexity and consistency (Section~\ref{sec:experiments}).
The remainder of this introductory section first gives a brief overview of Bayesian networks and their generalization to relational models and then describes why \textit{d}-separation is a useful theory.

\subsection{From Bayesian Networks to Relational Models}
\label{sec:intro-prop-rel}

Bayesian networks are a widely used class of graphical models that are capable of compactly representing a joint probability distribution over a set of variables.
The joint distribution can be factored into a product of conditional distributions by assuming that variables are independent of their non-descendants given their parents (the Markov condition).
The Markov condition ties the structure of the model to the set of conditional independencies that hold over all probability distributions the model can represent.
Accurate reasoning about such conditional independence facts is the basis for constraint-based algorithms, such as PC and FCI \citep{spirtes-etal-book00}, and hybrid approaches, such as MMHC \citep{tsamardinos-etal-ml06}, that are commonly used to learn the structure of Bayesian networks.
Under a small number of assumptions and with knowledge of the conditional independencies, these algorithms can identify causal structure \citep{pearl-causality00, spirtes-etal-book00}.

Deriving the full set of conditional independencies implied by the Markov condition is complex, requiring manipulation of the joint distribution and various probability axioms.
Fortunately, the exact same set of conditional independencies entailed by the Markov condition are also entailed by \textit{d}-separation, a set of graphical rules that algorithmically derive conditional independence facts directly from the graphical structure of the model.
That is, the Markov condition and \textit{d}-separation are equivalent approaches for producing conditional independence from Bayesian networks \citep{verma1988cnse, geiger1988lcm, neapolitan2004learning}.
When interpreting a Bayesian network causally, the causal Markov condition (variables are independent of their non-effects given their direct causes) and \textit{d}-separation have been shown to provide the correct connection between causal structure and conditional independence \citep{scheines-intro97}.

Bayesian networks assume that data instances are independent and identically distributed, but many real-world systems are characterized by interacting heterogeneous entities.
For example, social network data consist of individuals, groups, and their relationships; citation data involve researchers collaborating and authoring scholarly papers that cite prior work; and sports data include players, coaches, teams, referees, and their competitive interactions.
Over the past 15 years, researchers in statistics and computer science have devised more expressive classes of directed graphical models, such as probabilistic relational models (PRMs), which remove the assumptions of independent and identically distributed instances to more accurately describe these types of domains \citep{getoor2007srlbook}.
Relational models generalize other classes of models that incorporate interference, spillover effects, or violations of the stable unit treatment value assumption (SUTVA) \citep{hudgens2008tciwi, tchetgen2012interference} and multilevel or hierarchical models \citep{gelman2007data}.

Many practical applications have also benefited from learning and reasoning with relational models.
Examples include analysis of gene regulatory interactions \citep{segal2001rich}, scholarly citations \citep{taskar2001probabilistic}, ecosystems \citep{dambrosio2003ecosystem}, biological cellular networks \citep{friedman2004inferring}, epidemiology \citep{getoor2004tuberculosis}, and security in information systems \citep{sommestad2010prm-security}.
The structure and parameters of these models can be learned from a relational data set.
The model is typically used either to predict values of certain attributes (e.g., topics of papers) or the structure is examined directly (e.g., to determine predictors of disease spread).
A major goal in many of these applications is to promote understanding of a domain or to determine causes of various outcomes.
However, as with Bayesian networks, to effectively interpret and reason about relational models causally, it is necessary to understand their conditional independence implications.

\subsection{Why \textit{d}-Separation Is Useful}
\label{sec:utility}

A Bayesian network, as a model of a joint probability distribution, enables a wide array of useful tasks by supporting inference over an entire set of variables.
Bayesian networks have been successfully applied to model many domains, ranging from bioinformatics and medicine to computer vision and information retrieval.
Na\"{i}vely specifying a joint distribution by hand requires an exponential number of states; however, Bayesian networks leverage the Markov condition to factor a joint probability distribution into a compact product of conditional probability distributions.

The theory of \textit{d}-separation is an alternative to the Markov condition that provides equivalent implications.
It provides an algorithmic framework for deriving the conditional independencies encoded by the model.
These conditional independence facts are guaranteed to hold in every joint distribution the model represents and, consequently, in any data instance sampled from those distributions.
The semantics of holding across all distributions is the main reason why \textit{d}-separation is useful, enabling two large classes of applications:

(1) \textit{Identification of causal effects}:
The theory of \textit{d}-separation connects the causal structure encoded by a Bayesian network to the set of probability distributions it can represent.
On this basis, many researchers have developed accompanying theory that describes the conditions under which certain causal effects are identifiable (uniquely known) and algorithms for deriving those quantities from the joint distribution.
This work enables sound and complete identification of causal effects, not only with respect to conditioning, but also under counterfactuals and interventions---via the \textit{do}-calculus introduced by \citet{pearl-causality00}---and in the presence of latent variables \citep{tian2002identification, huang2006identifiability, shpitser2008hierarchy}.

(2) \textit{Constraint-based causal discovery algorithms}:
Causal discovery, the task of learning generative models of observational data, superficially appears to be a futile endeavor.
Yet learning and reasoning about the causal structure that underlies real domains is a primary goal for many researchers.
Fortunately, \textit{d}-separation offers a connection between causal structure and conditional independence.
The theory of \textit{d}-separation can be leveraged to constrain the hypothesis space by eliminating models that are inconsistent with observed conditional independence facts.
While many distributions do not lead to uniquely identifiable models, this approach (under simple assumptions) frequently discovers useful causal knowledge for domains that can be represented as a Bayesian network.
This approach to learning causal structure is referred to as the \textit{constraint-based} paradigm, and many algorithms that follow this approach have been developed over the past 20 years, including Inductive Causation (IC) \citep{pearl1991theory}, PC \citep{spirtes-etal-book00} and its variants, Three Phase Dependency Analysis (TPDA) \citep{cheng1997tpda}, Grow-Shrink \citep{margaritis1999growshrink}, Total Conditioning (TC) \citep{pellet2008tc}, Recursive Autonomy Identification (RAI) \citep{yehezkel2009rai}, and hybrid methods that partially employ this approach, including Max-Min Hill Climbing (MMHC) \citep{tsamardinos-etal-ml06} and Hybrid HPC (H2PC) \citep{gasse2012h2pc}.

As described above, relational models more closely represent the real-world domains that many social scientists and other researchers investigate.
To successfully learn causal models from observational data of relational domains, we need a theory for deriving conditional independence from relational models.
In this paper, we formalize the theory of \textit{relational \textit{d}-separation} and provide a method for deriving conditional independence facts from the structure of a relational model.
In another paper, we have used these results to provide a theoretical framework for a sound and complete constraint-based algorithm---the Relational Causal Discovery (RCD) algorithm \citep{maier2013rcd}---that learns causal models of relational domains.

\section{Example}
\label{sec:example}

	\begin{figure}[t]
	\centering
	\subfloat[Example relational schema for an organization consisting of employees working on products, which are funded by specific business units within a corporation.]{
	\includegraphics[width=150mm]{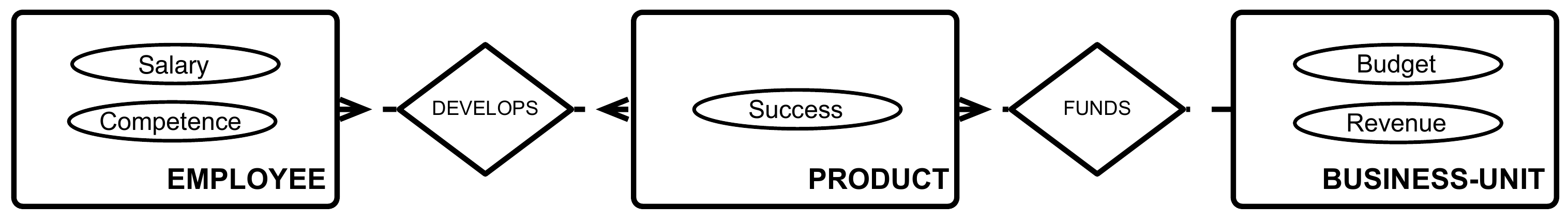}
	\label{fig:org_schema}}\\
	\subfloat[Example fragment of a relational skeleton.  Roger and Sally are employees, both of whom develop the {\prodc} product, but, of the two, only Sally works on product {\prodd}. Both products {\prodc} and {\prodd} are funded by business unit {\businessb}. For convenience, we depict attribute placeholders on each entity instance.]{
	\includegraphics[width=150mm]{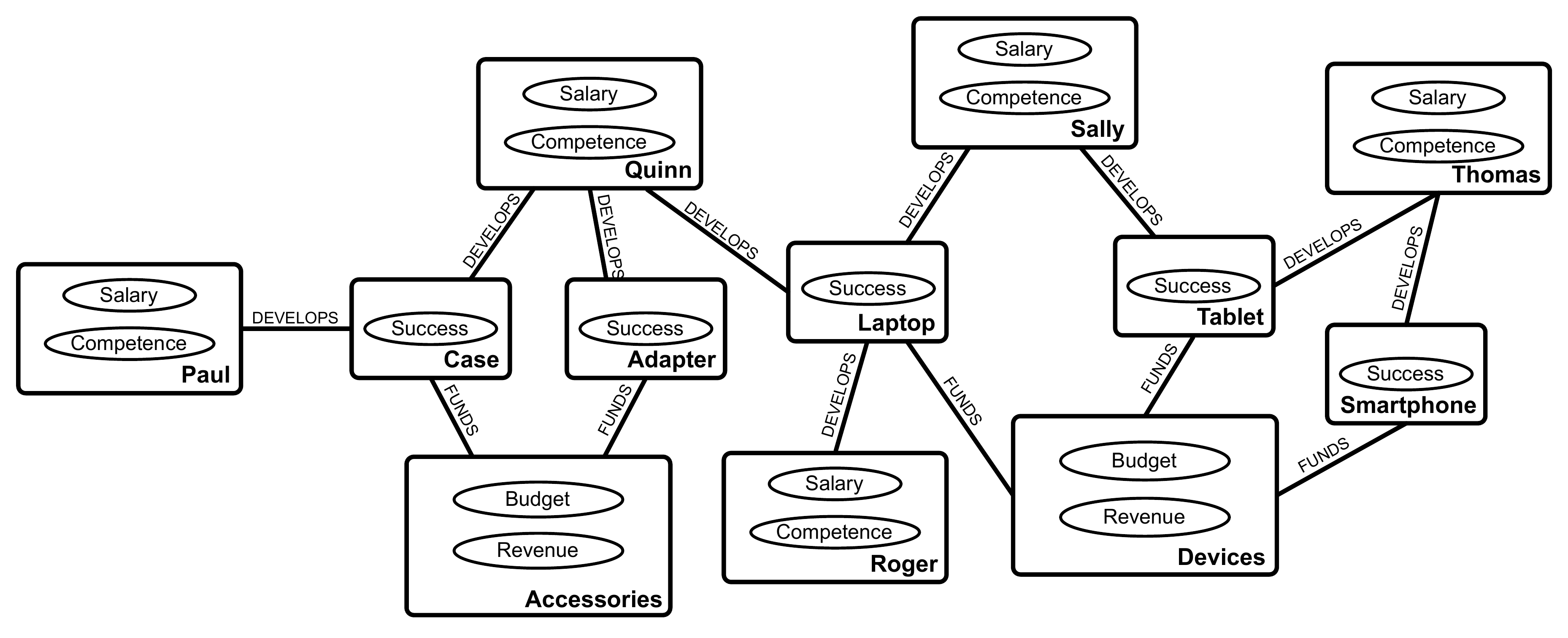}
	\label{fig:org_skeleton}}
	\caption{An example relational schema and skeleton for the organization domain.}
	\label{fig:org_example_data}
	\end{figure}

	\begin{figure}[t]
	\centering
	\subfloat[Example relational model.  Competence of employees cause the success of products they develop, which in turn influences the revenue received by the business unit funding the product.  Additional dependencies involve the budget of business units and employee salaries.  The dependencies are specified by relational paths, listed below the graphical model.]{
	\includegraphics[width=150mm]{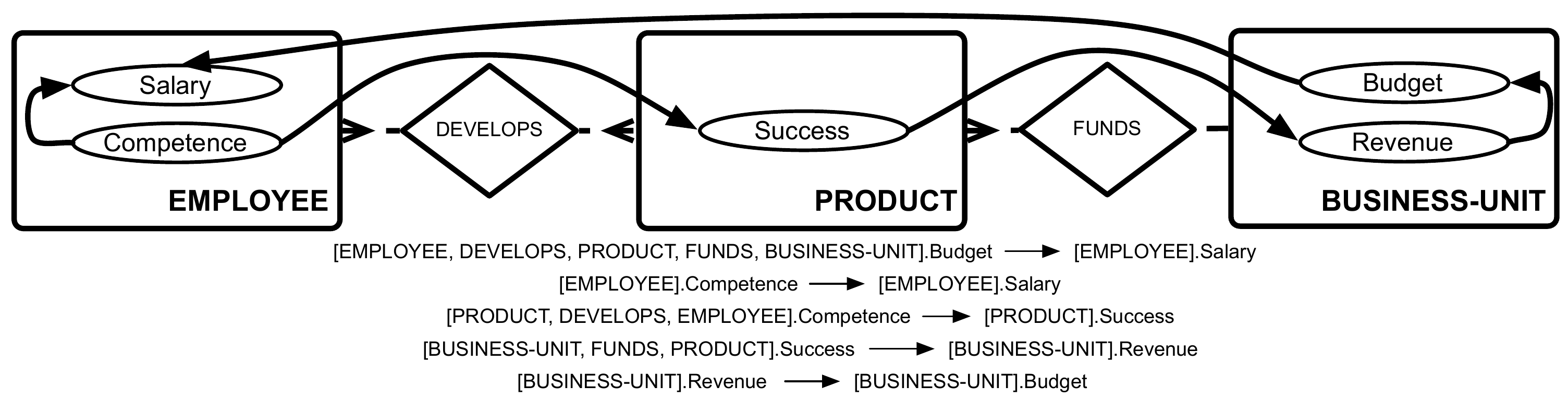}
	\label{fig:org_model}}\\
	\subfloat[Example fragment of a ground graph. The success of product {\prodc} is influenced by the competence of both Roger and Sally. The revenue of business unit {\businessb} is caused by the success of all its funded products---{\prodc}, {\prodd}, and {\prode}.]{
	\includegraphics[width=150mm]{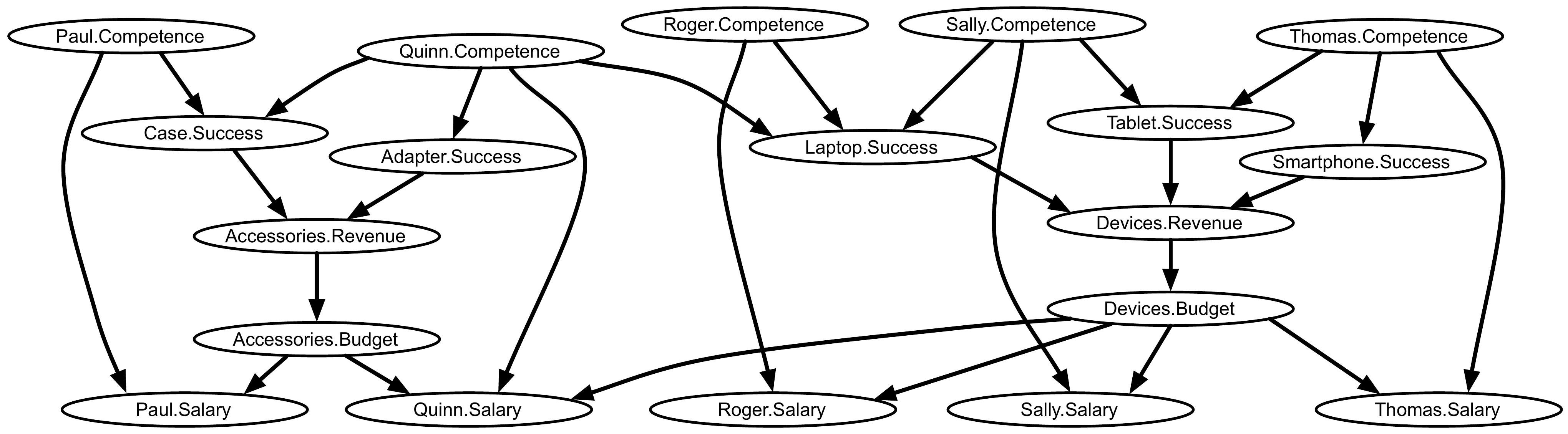}
	\label{fig:org_ground_graph}}
	\caption{An example relational model and ground graph for the organization domain.}
	\label{fig:org_example_model}
	\end{figure}

Consider a corporate analyst who was hired to identify which employees are effective and productive for some organization.
If the company is structured as a pure project-based organization (for which company personnel are structured around projects, not departments), the analyst may collect data as described by the relational schema in Figure~\ref{fig:org_example_data}\subref{fig:org_schema}.
The schema denotes that employees can collaborate and work on multiple products, each of which is funded by a specific business unit.
The analyst has also obtained attributes on each entity---salary and competence of employees, the success of each product, and the budget and revenue of business units.
In this example, the organization consists of five employees, five products, and two business units, which are shown in the relational skeleton in Figure~\ref{fig:org_example_data}\subref{fig:org_skeleton}.

Assume that the organization operates under the model depicted in Figure~\ref{fig:org_example_model}\subref{fig:org_model}.
For example, the success of a product depends on the competence of employees that develop it, and the revenue of a business unit is influenced by the success of products that it funds.
If this were known by the analyst (who happens to have experience in graphical models), then it would be conceivable to spot-check the model and test whether some of the conditional independencies encoded by the model are reflected in the data.
The analyst then na\"{i}vely applies \textit{d}-separation to the model structure in an attempt to derive conditional independencies to test.
However, applying \textit{d}-separation directly to the structure of relational models may not correctly derive conditional independencies, violating the Markov condition.
If the analyst were to discover significant and substantive effects, he may believe the model structure is incorrect and needlessly search for alternative dependencies.

%The analyst may believe that the organization operates under the model depicted in Figure~\ref{fig:org_example_model}\subref{fig:org_model}.
%For example, the success of a product depends on the competence of employees that develop it, and the revenue of a business unit is influenced by the success of products that it funds.
%The analyst then wishes to verify the model structure in order to accurately advise executive decisions, such as determining which business units should have increased funding, which employees are being fairly or unfairly compensated, etc.
%Perhaps the analyst has experience in graphical models and decides to check that the conditional independencies encoded by the model are reflected in the data.
%The analyst then na\"{i}vely applies \textit{d}-separation to the model structure in an attempt to derive these conditional independencies.
%However, applying \textit{d}-separation directly to relational models may not correctly derive the set of conditional independencies, violating the Markov condition.

Na\"{i}vely applying \textit{d}-separation to the model in Figure~\ref{fig:org_example_model}\subref{fig:org_model} suggests that employee competence is conditionally independent of the revenue of business units given the success of products:
\vspace{-2mm}
\begin{center}
\textsc{Employee}.$Competence \Perp$ \textsc{Business-Unit}$.Revenue\ \vert$ \textsc{Product}.Success
\end{center}
\vspace{-1mm}
\noindent To see why this approach is flawed, we must consider the \textit{ground graph}.
A necessary precondition for inference is to apply a model to a data instantiation, which yields a ground graph to which \textit{d}-separation can be applied.
For a Bayesian network, a ground graph consists of replicates of the model structure for each data instance.
In contrast, a relational model defines a template for how dependencies apply to a data instantiation, resulting in a ground graph with varying structure. See Section~\ref{sec:rel_concepts} for more details on ground graphs.

Figure~\ref{fig:org_example_model}\subref{fig:org_ground_graph} shows the ground graph for the relational model in Figure~\ref{fig:org_example_model}\subref{fig:org_model} applied to the relational skeleton in Figure~\ref{fig:org_example_data}\subref{fig:org_skeleton}.
This ground graph illustrates that, for a single employee, simply conditioning on the success of developed products can activate a path through the competence of other employees who develop the same products---we call this a \textit{relationally d-connecting path}.\footnote{The indirect effect attributed to a relationally \textit{d}-connecting path is often referred to as interference, a spillover effect, or a violation of the stable unit treatment value assumption (SUTVA) because the treatment of one instance (employee competence) affects the outcome of another (the revenue of another employee's business unit).}
Checking \textit{d}-separation on the ground graph indicates that to \textit{d}-separate an employee's competence from the revenue of funding business units, we should not only condition on the success of developed products, but also on the competence of other employees who work on those products (e.g., Roger.$Competence \Perp$ {\businessb}$.Revenue\ \vert\ \{${\prodc}$.Success$, Sally.$Competence\}$).

This example also demonstrates that the Markov condition can be violated when directly applied to the structure of a relational model.
In this case, the Markov condition according to the model structure in Figure~\ref{fig:org_example_model}\subref{fig:org_model} implies that $P(Competence, Revenue\ \vert\ Success) = P(Competence\ \vert\ Success)P(Revenue\ \vert\ Success)$, that revenue is independent of its non-descendants (competence) given its parents (success).
However, the ground graph shows the opposite, for example, $P($Roger$.Competence,$ {\businessb}$.Revenue\ \vert\ ${\prodc}$.Success)$ $\neq$\\ $P($Roger$.Competence\ \vert${\prodc}$.Success)$ $P(${\businessb}$.Revenue\ \vert$ {\prodc}$.Success)$.
%This is not surprising since the conditional independencies entailed by \textit{d}-separation are known to be equivalent to those entailed by the Markov condition \citep{neapolitan2004learning}.
In fact, for this model, \textit{d}-separation produces many other incorrect judgments of conditional independence.
Through simulation, we found that only 25\% of the pairs of variables can even be described by direct inspection of this model structure, and of those (such as the above example), 75\% yield an incorrect conclusion.
This is a single data point of a larger empirical evaluation presented in Section~\ref{sec:naive_rds}.
Those results provide quantitative details of how often to expect traditional \textit{d}-separation to fail when applied to the structure of relational models.

\section{Semantics and Alternatives}
\label{sec:semantics-alternatives}

The example in Section~\ref{sec:example} provides a useful basis to describe the semantics imposed by relational \textit{d}-separation and the characteristics of our approach.  
There are two primary concepts:

(1) \textit{All-ground-graphs semantics}:
It might appear that, since the standard rules of \textit{d}-separation apply to Bayesian networks and the ground graphs of relational models are also Bayesian networks, that applying \textit{d}-separation to relational models is a non-issue.
However, applying \textit{d}-separation to a single ground graph may result in potentially unbounded runtime if the instantiation is large (i.e., since relational databases can be arbitrarily large).
Further, and more importantly, the semantics of \textit{d}-separation require that conditional independencies hold across \textit{all possible} model instantiations.
Although \textit{d}-separation can apply directly to a ground graph, these semantics prohibit reasoning about a single ground graph.

The conditional independence facts derived from \textit{d}-separation hold for all distributions represented by a Bayesian network.
Analogously, the implications of \textit{relational} \textit{d}-separation should hold for all distributions represented by a relational model.
It is simple to show that these implications hold for all ground graphs of a Bayesian network---every ground graph consists of a set of disconnected subgraphs, each of which has a structure that is identical to that of the model.
However, the set of distributions represented by a relational model depends on both the relational skeleton (constrained by the schema) and the model parameters.
That is, the ground graphs of relational models vary with the structure of the underlying relational skeleton (e.g., different products are developed by varying numbers of employees).
As a result, answering relational \textit{d}-separation queries requires reasoning without respect to ground graphs.

(2) \textit{Perspective-based analysis}:
Relational models make explicit one implicit choice underlying nearly any form of data analysis.
This choice---what we refer to here as a \textit{perspective}---concerns the selection of a particular unit or subject of analysis.
%In machine learning and statistics, problems of classification, regression, and, more generally, modeling conditional distributions, require choosing a single outcome or dependent variable.
For example, in the social sciences, a commonly used acronym is \textit{UTOS}, for framing an analysis by choosing a unit, treatment, outcome, and setting.
Any method, such as Bayesian network modeling, that assumes IID data makes the implicit assumption that the attributes on data instances correspond to attributes of a single unit or perspective.
In the example, we targeted a specific conditional independence regarding employee instances (as opposed to products or business units).

The concept of perspectives is not new, but it is central to statistical relational learning because relational data sets may be heterogeneous, involving instances that refer to multiple, distinct perspectives.
The inductive logic programming (ILP) community has discussed individual-centered representations \citep{flach1999knowledge}, and many approaches to propositionalizing relational data have been developed to enforce a \textit{single} perspective in order to rely on existing propositional learning algorithms \citep{kramer-etal-rdmchapter01}.
An alternative strategy is to explicitly acknowledge the presence of multiple perspectives and learn jointly among them.
This approach underlies many algorithms that learn the types of probabilistic models of relational data applicable in this work, e.g., learning the structure of probabilistic relational models, relational dependency networks, or parametrized Bayesian networks \citep{friedman1999learning, neville-jensen-jmlr07, schulte2012recursive}.

Often, data sets are derivative, leading to little or no choice about which perspectives to analyze.
However, for relational domains, from which these data sets are derived, it is assumed that there are multiple perspectives, and we can dynamically analyze different perspectives.
In the example, we chose the employee perspective, and the analysis focused on the dependence between an employee's competence and the revenue of business units that fund developed products.
However, if the question were posed from the perspective of business units, then we could conceivably condition on the success of products for each business unit.
In this scenario, reasoning about \textit{d}-separation at the model level \textit{would} lead to a correct conditional independence statement.
Some (though fairly infrequent) \textit{d}-separation queries produce accurate conditional independence facts when applied to relational model structure (see Section~\ref{sec:naive_rds}).
However, the model is often unknown, a perspective may be chosen a priori, and a theory that is occasionally correct is clearly undesirable.
Additionally, to support constraint-based learning algorithms, it is important to reason about conditional independence implications from different perspectives.

One plausible alternative approach would be to answer \textit{d}-separation queries by ignoring perspectives and considering just the attribute classes (i.e., reason about \textit{Competence} and \textit{Revenue} given \textit{Success}).
However, it remains to define explicit semantics for grounding and evaluating the query based on the relational skeleton.
There are at least three options:
\begin{itemize}
\item \textit{Construct three sets of variables, including all instances of competence, revenue, and success variables}:
Although the ground graph has the semantics of a Bayesian network, there is only a \textit{single} ground graph---one data sample \citep{xiang2011onenetwork}.  Consequently, this analysis would be statistically meaningless and is the primary reason why relational learning algorithms dynamically \textit{generate} propositional data for each instance of a given perspective.
\item \textit{Test the Cartesian product of competence and revenue variables, conditioned on all success variables}:	Testing all pairs invariably leads to independence.  Moreover, these semantics are incoherent; only reachable pairs of variables should be compared.  For propositional data, variable pairs are constructed by choosing attribute values, e.g., height and weight, \textit{within} an individual.  The same is true for relational data: Only choose the success of products for employees that actually develop them, following the underlying relational connections.
\item \textit{Test relationally connected pairs of competence and revenue variables, conditioned on all success variables}: Again, this appears plausible based on traditional \textit{d}-separation; every instance in the table conditions on the \textit{same} set of success values.  Therefore, this is akin to not conditioning because the conditioning variable is a constant.
\end{itemize}
We argue that the desired semantics are essentially the explicit semantics of perspective-based queries.
Therefore, we advocate perspective-based analysis as the \textit{only} statistically and semantically meaningful approach for relational data and models.
 
\begin{figure*}[t]
\centering
\includegraphics[width=150mm]{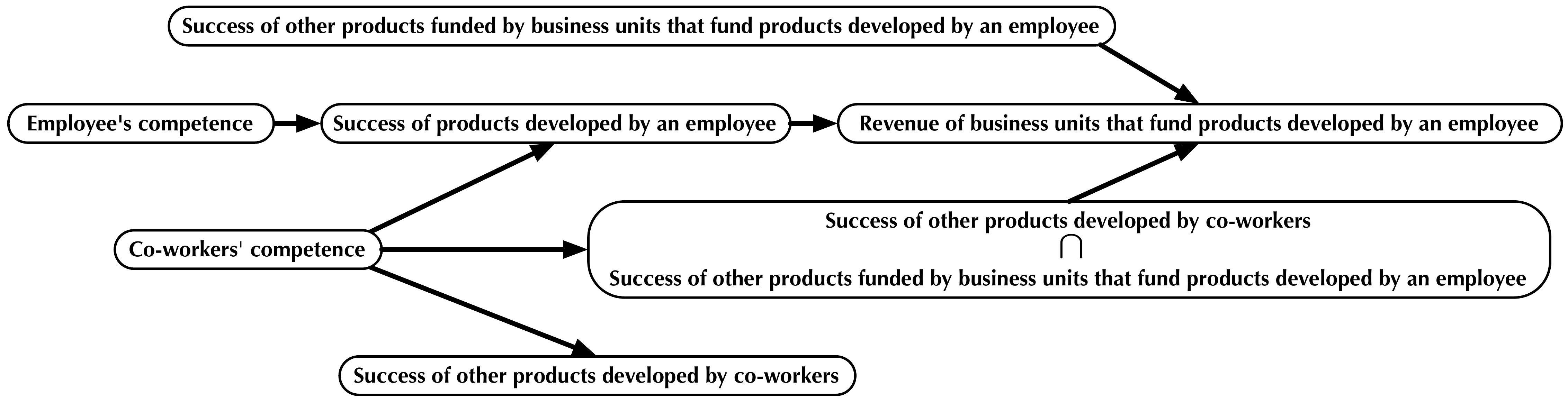}
\caption{Example abstract ground graph from the perspective of employees. Nodes are labeled with their intuitive meaning.}
\label{fig:org_agg_sentences}
\end{figure*}

Our approach to answering relational \textit{d}-separation queries incorporates the two aforementioned semantics.
In Section~\ref{sec:rds}, we describe a new, lifted representation---the abstract ground graph---that is provably sound and complete in its abstraction of all ground graphs for a given relational model.
As their name suggests, abstract ground graphs \textit{abstract} all ground graphs of a relational model, representing any potential relationally \textit{d}-connecting path (recall the example \textit{d}-connecting path that only manifests in the ground graph).
A relational model has a corresponding \textit{set} of abstract ground graphs, one for each perspective (i.e., entity or relationship class in its underlying schema), and can be used to reason about relational \textit{d}-separation with respect to any given perspective.
Figure~\ref{fig:org_agg_sentences} shows a fragment of an abstract ground graph from the employee perspective for the model in Figure~\ref{fig:org_model}.
The nodes are depicted with their intuitive meaning rather than their actual syntax for this example.
Representational details and accompanying theory are presented in Section~\ref{sec:rds}.

\section{Concepts of Relational Data and Models}
\label{sec:rel_concepts}

Propositional representations describe domains with a single entity type, but many real-world systems involve multiple types of interacting entities with probabilistic dependencies among their variables.
For example, in the model in Figure~\ref{fig:org_example_model}\subref{fig:org_model} the competence of employees affects the success of products they develop.
Many researchers have focused on modeling such domains, which are generally characterized as \textit{relational}.
These relational representations can be divided into two main categories: \textit{probabilistic graphical models}---such as probablistic relational models (PRMs) \citep{koller-pfeffer-aaai98}, directed acyclic probabilistic entity-relationship (DAPER) models \citep{heckerman-etal-tr04}, and relational Markov networks (RMNs) \citep{taskar-etal-uai02}---and \textit{probabilistic logic models}---such as Bayesian logic programs (BLPs) \citep{kersting-deraedt-tr02}, Markov logic networks (MLNs) \citep{richardson-domingos-ml06}, parametrized Bayesian networks (PBNs) \citep{poole2003pbn}, Bayesian logic (\textsc{Blog}) \citep{milch-etal-ijcai05}, multi-entity Bayesian networks (MEBNs) \citep{laskey2008mebn}, and relational probability models (RPMs) \citep{russell-norvig-aima}.

To facilitate an extension to the graphical criterion of \textit{d}-separation, we currently focus on directed, acyclic, graphical models of conditional independence.
As most of the above models have similar expressive power, the results in this paper could generalize across representations---even for \textit{undirected} relational models, such as RMNs and MLNs, after moralization.
However, we found it simpler to define and prove relevant theoretical properties for relational \textit{d}-separation in a representation most similar to Bayesian networks.
In this section, we formally define the concepts of relational data and models using a similar representation to PRMs and DAPER models.

A relational schema is a top-level description of what data exist in a particular domain.
Specifically (adapted from \citealp{heckerman-etal-introsrl07}):	

\begin{definition}[Relational schema]
\label{def:rel-schema}
A \textit{relational schema} $\mathcal{S} = (\mathcal{E}, \mathcal{R}, \mathcal{A}, \mathit{card})$ consists of a set of entity classes $\mathcal{E} = \{E_1,\dots,E_m\}$; a set of relationship classes $\mathcal{R} = \{R_1,\dots,R_n\}$, where each $R_i = \langle E^i_1,\dots, E^i_{a_i}\rangle$, with $E^i_j\in\mathcal{E}$ and $a_i$ is the arity for $R_i$; a set of attribute classes $\mathcal{A}(I)$ for each item class $I\in\mathcal{E}\cup\mathcal{R}$; and a cardinality function $\mathit{card}: \mathcal{R}\times\mathcal{E}\rightarrow \{\text{\textsc{one}, \textsc{many}}\}$.
\end{definition}

A relational schema can be represented graphically with an entity-relationship (ER) diagram.
We adopt a slightly modified ER diagram using Barker's notation \citeyearpar{barker-case90}, where entity classes are rectangular boxes, relationship classes are diamonds with dashed lines connecting their associated entity classes, attribute classes are ovals residing on entity and relationship classes, and cardinalities are represented with crow's foot notation.

\begin{example}
\label{ex:rel-schema}
The relational schema $\mathcal{S}$ for the organization domain example depicted in Figure~\ref{fig:org_example_data}\subref{fig:org_schema} consists of entities $\mathcal{E} = \{$\textsc{Employee}, \textsc{Product}, \textsc{Business-Unit}$\}$; relationships $\mathcal{R} = \{$\textsc{Develops}, \textsc{Funds}$\}$, where \textsc{Develops} = $\langle$\textsc{Employee}, \textsc{Product}$\rangle$, \textsc{Funds} = $\langle$\textsc{Business-Unit}, \textsc{Product}$\rangle$ and having cardinalities $\mathit{card}($\textsc{Develops}, \textsc{Employee}$) = $ \textsc{many}, $\mathit{card}($\textsc{Develops}, \textsc{Product}$) = $ \textsc{many}, $\mathit{card}($\textsc{Funds}, \textsc{Business-Unit}$) = $ \textsc{many}, and $\mathit{card}($\textsc{Funds}, \textsc{Product}$) = $ \textsc{one}; and attributes $\mathcal{A}($\textsc{Employee}$) = \{$\textit{Competence}, \textit{Salary}$\}$, $\mathcal{A}($\textsc{Product}$) = \{$\textit{Success}$\}$, and $\mathcal{A}($\textsc{Business-Unit}$) = \{$\textit{Budget}, \textit{Revenue}$\}$.
$\square$
\end{example}

A relational schema is a template for a relational skeleton (also referred to as a data graph by \citealp{neville-jensen-jmlr07}), an instantiation of entity and relationship classes.
Specifically (adapted from \citealp{heckerman-etal-introsrl07}):

\begin{definition}[Relational skeleton]
\label{def:skeleton}
A \textit{relational skeleton} $\sigma$ for relational schema $\mathcal{S} = (\mathcal{E}, \mathcal{R}, \mathcal{A}, \mathit{card})$ specifies a set of entity instances $\sigma(E)$ for each $E\in\mathcal{E}$ and relationship instances $\sigma(R)$ for each $R\in\mathcal{R}$.  
Relationship instances adhere to the cardinality constraints of $\mathcal{S}$: If $\mathit{card}(R, E) = $ \textsc{one}, then for each $e\in\sigma(E)$ there is at most one $r\in\sigma(R)$ such that $e$ participates in $r$.
\end{definition}

For convenience, we use the notation $E\in R$ if entity class $E$ is a component of relationship class $R$, and, similarly, $e\in r$ if entity instance $e$ is a component of the relationship instance $r$.
We also denote the set of all skeletons for schema $\mathcal{S}$ as $\Sigma_\mathcal{S}$.

\begin{example}
\label{ex:rel-skeleton}
The relational skeleton $\sigma$ for the organization example is depicted in Figure~\ref{fig:org_example_data}\subref{fig:org_skeleton}.  The sets of entity instances are $\sigma($\textsc{Employee}$)$ = $\{$Paul, Quinn, Roger, Sally, Thomas$\}$, $\sigma($\textsc{Product}$)$ = $\{${\proda}, {\prodb}, {\prodc}, {\prodd}, {\prode}$\}$, and $\sigma($\textsc{Business-Unit}$)$ = $\{${\businessa}, {\businessb}$\}$.  The sets of relationship instances are $\sigma($\textsc{Develops}$)$ = $\{\langle$Paul, {\proda}$\rangle$, $\langle$Quinn, {\proda}$\rangle$, $\dots$, $\langle$Thomas, {\prode}$\rangle\}$ and $\sigma($\textsc{Funds}$)$ = $\{\langle${\businessa}, {\proda}$\rangle$, $\langle${\businessa}, {\prodb}$\rangle$, $\dots$, $\langle${\businessb}, {\prode}$\rangle\}$.  The relationship instances adhere to their cardinality constraints (e.g., \textsc{Funds} is a \textsc{one}-to-\textsc{many} relationship---within $\sigma($\textsc{Funds}$)$, every product has a single business unit, and every business unit may have multiple products).
$\square$
\end{example}

In order to specify a model over a relational domain, we must define a space of possible variables and dependencies.
Consider the example dependency $[$\textsc{Product}, \textsc{Develops}, \textsc{Employee}$].Competence \rightarrow [$\textsc{Product}$].Success$ from the model in Figure~\ref{fig:org_example_model}\subref{fig:org_model}, expressing that the competence of employees developing a product affects the success of that product.
For relational data, the variable space includes not only intrinsic entity and relationship attributes (e.g., success of a product), but also the attributes on other entity and relationship classes that are reachable by paths along the relational schema (e.g., the competence of employees that develop a product).
We define relational paths to formalize the notion of which item classes are reachable on the schema from a given item class.\footnote{Because the term ``path" is also commonly used to describe chains of dependencies in graphical models, we will explicitly qualify each reference to avoid ambiguity.}

\begin{definition}[Relational path]
\label{def:rel-path}
A \textit{relational path} $[I_j,\dots, I_k]$ for relational schema $\mathcal{S}$ is an alternating sequence of entity and relationship classes $I_j,\dots, I_k\in\mathcal{E}\cup\mathcal{R}$ such that:
\begin{compactenum}
\item[(1)] For every pair of consecutive item classes $[E, R]$ or $[R, E]$ in the path, $E\in R$.
\item[(2)] For every triple of consecutive item classes $[E, R, E']$, $E\neq E'$.\footnotemark
\item[(3)] For every triple of consecutive item classes $[R, E, R']$, if $R = R'$, then $\mathit{card}(R, E)=$ \textsc{many}.
\end{compactenum}
$I_j$ is called the \textit{base item}, or \textit{perspective}, of the relational path.
\end{definition}

Condition (1) enforces that entity classes participate in adjacent relationship classes in the path.
Conditions (2) and (3) remove any paths that would invariably reach an empty terminal set (see Definition~\ref{def:terminal-set} and Appendix~\ref{sec:appendix_valid_relational_paths}).
This definition of relational paths is similar to ``meta-paths" and ``relevance paths" in similarity search and information retrieval in heterogeneous networks \citep{sun2011pathsim, shi2012relevance}.
Relational paths also extend the notion of ``slot chains" from the PRM framework \citep{getoor:prm-ch-srl-book07} by including cardinality constraints and formally describing the semantics under which repeated item classes may appear on a path.
Relational paths are also a specialization of the first-order constraints on arc classes imposed on DAPER models \citep{heckerman-etal-introsrl07}.

\footnotetext{This condition suggests at first glance that self-relationships (e.g., employees manage other employees, individuals in social networks maintain friendships, scholarly articles cite other articles) are prohibited.  We discuss this and other model assumptions in Section~\ref{sec:rel_model_assumptions}.}

\begin{example}
\label{ex:rel-path}
Consider the example relational schema in Figure~\ref{fig:org_example_data}\subref{fig:org_schema}.
Some example relational paths from the \textsc{Employee} perspective (with an intuitive meaning of what the paths describe) include the following: $[$\textsc{Employee}$]$ (an employee), $[$\textsc{Employee}, \textsc{Develops}, \textsc{Product}$]$ (products developed by an employee), $[$\textsc{Employee}, \textsc{Develops}, \textsc{Product}, \textsc{Funds}, \textsc{Business-Unit}$]$ (business units of the products developed by an employee), and $[$\textsc{Employee}, \textsc{Develops}, \textsc{Product}, \textsc{Develops}, \textsc{Employee}$]$ (co-workers developing the same products).
Invalid relational paths include $[$\textsc{Employee}, \textsc{Develops}, \textsc{Employee}$]$ (because \textsc{Employee}=\textsc{Employee} and \textsc{Develops} $\in\mathcal{R}$) and $[$\textsc{Business-Unit}, \textsc{Funds}, \textsc{Product}, \textsc{Funds}, \textsc{Business-Unit}$]$ (because \textsc{Product} $\in\mathcal{E}$ and $\mathit{card}($\textsc{Funds}, \textsc{Product}$)=$ \textsc{one}).
$\square$
\end{example}

Relational paths are defined at the level of relational schemas, and as such are templates for paths in a relational skeleton.
An instantiated relational path produces a set of traversals on a relational skeleton.
However, the quantity of interest is not the traversals, but the set of reachable item instances (i.e., entity or relationship instances).
These reachable instances are the fundamental elements that support model instantiations (i.e., ground graphs).

\begin{definition}[Terminal set]
\label{def:terminal-set}
For skeleton $\sigma\in\Sigma_\mathcal{S}$ and $i_j\in \sigma(I_j)$, the \textit{terminal set} $P\vert_{i_j}$ for relational path $P = [I_j,\dots, I_k]$ of length $n$ is defined inductively as
\begin{center}
$P^1\vert_{i_j} = [I_j]\vert_{i_j} = \lbrace i_j \rbrace$

$\vdots$
\end{center}

$P^n\vert_{i_j} = [I_j,\dots, I_k]\vert_{i_j} = \displaystyle\bigcup_{i_{m}\in P^{n-1}\vert_{i_j}} \Big\lbrace  i_k\ \vert\ \big((i_m\in i_k\mbox{ if }I_k\in\mathcal{R})\ \lor\ (i_k\in i_m\mbox{ if }I_k\in\mathcal{E})\big)$
\vspace{-5mm}

\hspace{70mm} $\land\ i_k\notin \displaystyle\bigcup_{l=1}^{n-1} P^{l}\vert_{i_j}\Big\rbrace$
\end{definition}

A terminal set of a relational path $P = [I_j,\dots, I_k]$ consists of instances of class $I_k$, the terminal item on the path.
Conceptually, a terminal set is produced by traversing a skeleton beginning at a single instance of the base item class, $i_j\in\sigma(I_j)$, following instances of the item classes in the relational path, and reaching a set of instances of class $I_k$.
The term $i_k\notin\bigcup_{l=1}^{n-1} P^{l}\vert_{i_j}$ in the definition implies a ``bridge burning" semantics under which no item instances are revisited ($i_k$ does not appear in the terminal set of any prefix of $P$).\footnote{The bridge burning semantics yield terminal sets that are necessarily subsets of terminal sets that would otherwise be produced without bridge burning.  Although this appears to be limiting, it actually enables a strictly more expressive class of relational models.  See Appendix~\ref{sec:appendix_burning_bridges} for more details and an example.}
The notion of terminal sets is a necessary concept for grounding any relational model and has been described in previous work---e.g., for PRMs \citep{getoor:prm-ch-srl-book07} and MLNs \citep{richardson-domingos-ml06}---but has not been explicitly named.
We emphasize their importance because terminal sets are also critical for defining relational \textit{d}-separation, and we formalize the semantics for bridge burning.

\begin{example}
\label{ex:terminal-set}
We can generate terminal sets by pairing the set of relational paths for the schema in Figure~\ref{fig:org_example_data}\subref{fig:org_schema} with the relational skeleton in Figure~\ref{fig:org_example_data}\subref{fig:org_skeleton}.  Let Quinn be our base item instance.  Then $[$\textsc{Employee}$]\vert_{\text{Quinn}} = \{$Quinn$\}$, $[$\textsc{Employee}, \textsc{Develops}, \textsc{Product}$]\vert_{\text{Quinn}}\! =\! \{${\proda}, {\prodb}, {\prodc}$\}$, $[$\textsc{Employee}, \textsc{Develops}, \textsc{Product}, \textsc{Funds}, \textsc{Business-Unit}$]\vert_{\text{Quinn}} = \{${\businessa}, {\businessb}$\}$, and $[$\textsc{Employee}, \textsc{Develops}, \textsc{Product}, \textsc{Develops}, \textsc{Employee}$]\vert_{\text{Quinn}} = \{$Paul, Roger, Sally$\}$.  The bridge burning semantics enforce that Quinn is not also included in this last terminal set.
$\square$
\end{example}

For a given base item class, it is common (depending on the schema) for distinct relational paths to reach the same terminal item class.
The following lemma states that if two relational paths with the same base item and the same terminal item differ at some point in the path, then for some relational skeleton and some base item instance, their terminal sets will have a non-empty intersection.
This property is important to consider for relational \textit{d}-separation.

\begin{mylemma}
\label{lemma:overlap}
For two relational paths of arbitrary length from $I_j$ to $I_k$ that differ in at least one item class, $P_1=[I_j,\dots,I_m,\dots,I_k]$ and $P_2=[I_j,\dots,I_n,\dots,I_k]$ with $I_m\neq I_n$, there exists a skeleton $\sigma\in\Sigma_\mathcal{S}$ such that $P_1\vert_{i_j} \cap P_2\vert_{i_j}\neq \emptyset$ for some $i_j\in\sigma(I_j)$.
\end{mylemma}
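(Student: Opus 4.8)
The plan is to prove the lemma by an explicit construction: given $P_1$ and $P_2$, I will build a single skeleton $\sigma$ and a base instance $i_j\in\sigma(I_j)$ for which the terminal sets of $P_1$ and $P_2$ share an element. The idea is to ``unroll'' each path into a chain of fresh, pairwise distinct item instances and then glue the two chains together exactly at the item classes on which the paths agree, so that one common terminal instance is reachable along both paths.

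First I would fix the structure of the disagreement. Since $P_1$ and $P_2$ share the base $I_j$ and differ at some position, let $q\ge 0$ be the largest index such that their first $q+1$ item classes coincide; call this common initial segment the \emph{shared prefix}, so the $(q+1)$-th classes differ (these play the roles of $I_m$ and $I_n$ in the statement, reducing to the first point of difference if necessary). Likewise, both paths terminate in $I_k$, so they have a \emph{shared suffix} of some length $s\ge 1$. I would then define $\sigma$ as follows: a chain $a_0,\dots,a_q$ of fresh distinct instances realizing the shared prefix ($a_t\in\sigma(I_t)$, with consecutive instances related as dictated by the schema and any extra component slots of a relationship item filled by fresh entity instances); and, emanating from $a_q$, two further chains of fresh distinct instances realizing the remainders of $P_1$ and $P_2$, arranged to terminate in a common fresh instance $c\in\sigma(I_k)$. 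More precisely, working backward from $c$ along the shared suffix I would identify corresponding instances of the two chains exactly as far as the \textsc{one}-cardinalities of $\mathcal{S}$ force (this is the stretch where some entity instance would otherwise have to participate in two instances of a single relationship class). Apart from the shared prefix and this partially shared suffix, the two chains use disjoint fresh instances, and $c\neq a_0$.

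Next I would discharge the two things that can go wrong. (a) \emph{Cardinality}: whenever the construction puts one entity instance into two instances of the same relationship class, the triple of consecutive item classes responsible occurs on $P_1$ or on $P_2$, so condition~(3) of Definition~\ref{def:rel-path} forces that relationship to have cardinality \textsc{many} for that entity; hence $\sigma\in\Sigma_\mathcal{S}$. (b) \emph{Bridge burning}: setting $i_j=a_0$ and tracing $P_1$ from $a_0$, the trace visits $a_0,\dots,a_q$ and then runs along the $P_1$-chain to $c$, and every instance it encounters is distinct from all earlier ones on the trace, so the exclusion term in Definition~\ref{def:terminal-set} never removes anything on the way to $c$; therefore $c\in P_1\vert_{a_0}$, and symmetrically $c\in P_2\vert_{a_0}$. (Filler entities may enlarge some intermediate terminal sets beyond a singleton, but those instances are dead ends that never propagate to $c$, so they are harmless.) Hence $c\in P_1\vert_{a_0}\cap P_2\vert_{a_0}$, which is therefore non-empty.

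I expect the delicate part to be the construction itself, where three requirements must hold simultaneously: the skeleton must respect every \textsc{one}-cardinality; neither trace may revisit an instance, since bridge burning would delete the revisited instance and could destroy the common terminal; and the backward identifications along the shared suffix must not cascade forward into the shared prefix. Conditions~(2) and~(3) of Definition~\ref{def:rel-path}---exactly the conditions that rule out the degenerate relational paths---are what make these compatible, so the proof reduces to checking that those conditions are precisely strong enough. The cases I would scrutinize most carefully are relational paths that revisit a relationship class (where condition~(3) is load-bearing) and pairs of paths whose item-class sequences have very different lengths, for which the placement of the shared prefix and shared suffix inside each path can be less transparent than the generic picture suggests.
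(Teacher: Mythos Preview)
Your proposal is correct and follows essentially the same construction as the paper's proof: build a skeleton by laying down a single chain of fresh instances for the maximal shared prefix, two separate chains for the divergent middle portions, and a single chain for the maximal shared suffix, so that both traversals from the base instance land on the same terminal instance. The paper's version is terser---it simply asserts that ``this process assumes no cardinality constraints aside from those required by Definition~\ref{def:rel-path}'' and that the result is an admissible skeleton---whereas you explicitly discharge the cardinality and bridge-burning obligations; your extra care about \textsc{one}-cardinalities forcing identifications along the suffix is more than the paper spells out, but the underlying idea is the same.
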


\begin{proof}
See Appendix~\ref{sec:appendix_proofs}.
\end{proof}

\begin{example}
\label{ex:lemma-overlap}
Let $P_1$ = $[$\textsc{Employee}, \textsc{Develops}, \textsc{Product}, \textsc{Develops}, \textsc{Employee}, \textsc{Develops}, \textsc{Product}$]$, the terminal sets for which yield other products developed by collaborating employees.  Let $P_2$  = $[$\textsc{Employee}, \textsc{Develops}, \textsc{Product}, \textsc{Funds}, \textsc{Business-Unit}, \textsc{Funds}, \textsc{Product}$]$, the terminal sets for which consist of other products funded by the business units funding products developed by a given employee.  Intersection among terminal sets for these paths occurs even in the small example skeleton.  In fact, the intersection of the terminal sets for $P_1$ and $P_2$ is non-empty for all employees.  For example, Paul: $P_1\vert_{\text{Paul}} = \{${\prodb}, {\prodc}$\}$ and $P_2\vert_{\text{Paul}} = \{${\prodb}$\}$; Quinn: $P_1\vert_{\text{Quinn}} = \{${\prodd}$\}$ and $P_2\vert_{\text{Quinn}} = \{${\prodd}, {\prode}$\}$.
$\square$
\end{example}

Given the definition for relational paths, it is simple to define relational variables and their instances.

\begin{definition}[Relational variable]
\label{def:rel-var}
A \textit{relational variable} $[I_j,\dots, I_k].X$ consists of a relational path $[I_j,\dots, I_k]$ and an attribute class $X\in\mathcal{A}(I_k)$.
\end{definition}

As with relational paths, we refer to $I_j$ as the perspective of the relational variable.
Relational variables are templates for sets of random variables (see Definition~\ref{def:rel-var-instance}).
Sets of relational variables are the basis of relational \textit{d}-separation queries, and consequently they are also the nodes of the abstract representation that answers those queries.
There is an equivalent formulation in the PRM framework, although not explicitly named (they are simply denoted as attribute classes of $\mathbf{K}$-related item classes via slot chain $\mathbf{K}$).
As they are critical to relational \textit{d}-separation, we provide this concept with an explicit designation.

\begin{example}
\label{ex:rel-var}
Relational variables for the relational paths in Example~\ref{ex:rel-path} include intrinsic attributes such as $[$\textsc{Employee}$]$.\textit{Competence} and $[$\textsc{Employee}$]$.\textit{Salary}, and also attributes on related entity classes such as $[$\textsc{Employee}, \textsc{Develops}, \textsc{Product}$]$.\textit{Success}, $[$\textsc{Employee}, \textsc{Develops}, \textsc{Product}, \textsc{Funds}, \textsc{Business-Unit}$]$.\textit{Revenue}, and $[$\textsc{Employee}, \textsc{Develops}, \textsc{Product}, \textsc{Develops}, \textsc{Employee}$]$.\textit{Salary}.
$\square$
\end{example}

\begin{definition}[Relational variable instance]
\label{def:rel-var-instance}
For skeleton $\sigma\in\Sigma_\mathcal{S}$ and $i_j\in \sigma(I_j)$, a \textit{relational variable instance} $[I_j,\dots, I_k].X\vert_{i_j}$ for relational variable $[I_j,\dots, I_k].X$ is the set of random variables $\lbrace i_k.X\ \vert\ X\!\in\!\mathcal{A}(I_k)\ \land\ i_k\!\in\![I_j,\dots, I_k]\vert_{i_j}\ \land\ i_k\!\in\!\sigma(I_k) \rbrace$.
\end{definition}

To instantiate a relational variable $[I_j,\dots, I_k].X$ for a specific base item instance $i_j$, we first find the terminal set of the underlying relational path $[I_j,\dots, I_k]\vert_{i_j}$ and then take the $X$ attributes of the $I_k$ item instances in that terminal set.
This produces a set of random variables $i_k.X$, which also correspond to nodes in the ground graph.
As a notational convenience, if $\mathbf{X}$ is a set of relational variables, all from a common perspective $I_j$, then we say that $\mathbf{X}\vert_{i_j}$ for some item $i_j\in\sigma(I_j)$ is the union of all instantiations, $\{x\ \vert\ x\!\in\!X\vert_{i_j}\ \land\ X\!\in\!\mathbf{X}\}$.

\begin{example}
\label{ex:rel-var-instance}
Instantiating the relational variables from Example~\ref{ex:rel-var} with base item instance Sally yields $[$\textsc{Employee}$]$.\textit{Competence}$\vert_{\text{Sally}} = \{$Sally.\textit{Competence}$\}$, $[$\textsc{Employee}, \textsc{Develops}, \textsc{Product}$]$.\textit{Success}$\vert_{\text{Sally}}\ =\ \{${\prodc}.\textit{Success}, {\prodd}.\textit{Success}$\}$, $[$\textsc{Employee}, \textsc{Develops}, \textsc{Product}, \textsc{Funds}, \textsc{Business-Unit}$]$.\textit{Revenue}$\vert_{\text{Sally}}\! =\! \{${\businessb}.\textit{Revenue}$\}$, and $[$\textsc{Employee}, \textsc{Develops}, \textsc{Product}, \textsc{Develops}, \textsc{Employee}$]$.\textit{Salary}$\vert_{\text{Sally}}\! =\! \{$Quinn.\textit{Salary}, Thomas.\textit{Salary}$\}$.
$\square$
\end{example}

Given the definitions for relational variables, we can now define relational dependencies.

\begin{definition}[Relational dependency]
\label{def:rel-dependency}
A \textit{relational dependency} $\ [I_j,\dots,I_k].Y\rightarrow$ \\$[I_j].X$ is a directed probabilistic dependence from attribute class $Y$ to $X$ through the relational path $[I_j,\dots,I_k]$.
\end{definition}

Depending on the context, $[I_j,\dots,I_k].Y$ and $[I_j].X$ can be referred to as \textit{treatment} and \textit{outcome}, \textit{cause} and \textit{effect}, or \textit{parent} and \textit{child}.
A relational dependency consists of two relational variables having a common perspective.
The relational path of the child is restricted to a single item class, ensuring that the terminal sets consist of a single value.
This is consistent with PRMs, except that we explicitly delineate dependencies rather than define parent sets of relational variables.
Note that relational variables are not nodes in a relational model, but they form the space of parent variables for relational dependencies.
The relational path specification (before the attribute class of the parent) is equivalent to a slot chain, as in PRMs, or the logical constraint on a dependency, as in DAPER models.

\begin{example}
\label{ex:rel-dependency}
The dependencies in the relational model displayed in Figure~\ref{fig:org_example_model}\subref{fig:org_model} can be specified as: $[$\textsc{Product}, \textsc{Develops}, \textsc{Employee}$].Competence \rightarrow [$\textsc{Product}$].Success$ (product success is influenced by the competence of the employees developing the product), $[$\textsc{Employee}$].Competence \rightarrow [$\textsc{Employee}$].Salary$ (an employee's competence affects his or her salary), $[$\textsc{Business-Unit}, \textsc{Funds}, \textsc{Product}$].Success \rightarrow [$\textsc{Business-Unit}$].Revenue$ (the success of the products funded by a business unit influences that unit's revenue), 
$[$\textsc{Employee}, \textsc{Develops}, \textsc{Product}, \textsc{Funds}, \textsc{Business-\!Unit}$].Budget\!\! \rightarrow\!\! [$\textsc{Employee}$].Salary$ (employee salary is governed by the budget of the business units for which they develop products), and $[$\textsc{Business-Unit}$].Revenue \rightarrow [$\textsc{Business-Unit}$].Budget$ (the revenue of a business unit influences its budget).
$\square$
\end{example}

We now have sufficient information to define relational models.

\begin{definition}[Relational model]
\label{def:rel-model}
A \textit{relational model} $\mathcal{M}_\Theta$ consists of two parts:
\begin{enumerate}
\item The \textit{structure} $\mathcal{M} = (\mathcal{S}, \mathcal{D})$: a schema $\mathcal{S}$ paired with a set of relational dependencies $\mathcal{D}$ defined over $\mathcal{S}$.
\item The \textit{parameters} $\Theta$: a conditional probability distribution $P\big([I_j].X\ \vert\ \mathit{parents}([I_j].X)\big)$ for each relational variable of the form $[I_j].X$, where $I_j\in\mathcal{E\cup R}$, $X\in\mathcal{A}(I_j)$ and $\mathit{parents}\big([I_j].X\big)=\big\{[I_j,\dots, I_k].Y\ \vert\ [I_j,\dots, I_k].Y\rightarrow [I_j].X\in\mathcal{D}\big\}$ is the set of parent relational variables.
\end{enumerate}
\end{definition}

The structure of a relational model can be represented graphically by superimposing dependencies on the ER diagram of a relational schema (see Figure~\ref{fig:org_example_model}\subref{fig:org_model} for an example).
A relational dependency of the form $[I_j,\dots,I_k].Y\rightarrow [I_j].X$ is depicted as a directed arrow from attribute class $Y$ to $X$ with the specification listed separately.
Note that the subset of relational variables with singleton paths $[I].X$ in the definition correspond to the set of attribute classes in the schema.

A common technique in relational learning is to use aggregation functions to transform parent multi-sets to single values within the conditional probability distributions.
Typically, aggregation functions are simple, such as mean or mode, but they can be complex, such as those based on vector distance or object identifiers, as in the ACORA system \citep{perlich2006acora}.
However, aggregates are a convenience for increasing power and accuracy during learning, but they are not necessary for model specification.

This definition of relational models is consistent with and yields structures expressible as DAPER models \citep{heckerman-etal-introsrl07}.
These relational models are also equivalent to PRMs, but we extend slot chains as relational paths and provide a formal semantics for their instantiation.
These models are also more general than plate models because dependencies can be specified with arbitrary relational paths as opposed to simple intersections among plates \citep{buntine-jair94, gilks1994BUGS}.

Just as the relational schema is a template for skeletons, the structure of a relational model can be viewed as a template for ground graphs: dependencies applied to skeletons.

\begin{definition}[Ground graph]
\label{def:ground-graph}
The \textit{ground graph} $\mathit{GG}_{\mathcal{M}\sigma} = (V, E)$ for relational model structure $\mathcal{M} = (\mathcal{S}, \mathcal{D})$ and skeleton $\sigma\in\Sigma_\mathcal{S}$ is a directed graph with nodes $V = \big\lbrace i.X\ \vert\ I\!\in\!\mathcal{E}\cup\mathcal{R}\ \land\ X\!\in\!\mathcal{A}(I)\ \land\ i\!\in\!\sigma(I) \big\rbrace$ and edges $E = \big\lbrace i_k.Y\rightarrow i_j.X\ \vert\ i_k.Y, i_j.X\!\in\! V\ \land\ i_k.Y\!\in\! [I_j,\dots, I_k].Y\vert_{i_j}\ \land\ [I_j,\dots, I_k].Y\rightarrow [I_j].X\!\in\!\mathcal{D} \big\rbrace$.
\end{definition}

A ground graph is a directed graph with (1) a node (random variable) for each attribute of every entity and relationship instance in a skeleton and (2) an edge from $i_k.Y$ to $i_j.X$ if they belong to the parent and child relational variable instances, respectively, of some dependency in the model.
The concept of a ground graph appears for any type of relational model, graphical or logic-based.
For example, PRMs produce ``ground Bayesian networks" that are structurally equivalent to ground graphs, and Markov logic networks yield ground Markov networks by applying all formulas to a set of constants \citep{richardson-domingos-ml06}.
The example ground graph shown in Figure~\ref{fig:org_example_model}\subref{fig:org_ground_graph} is the result of applying the dependencies in the relational model shown in Figure~\ref{fig:org_example_model}\subref{fig:org_model} to the skeleton in Figure~\ref{fig:org_example_data}\subref{fig:org_skeleton}.

Similar to Bayesian networks, given the parameters of a relational model, a \textit{parameterized ground graph} can express a joint distribution that factors as a product of the conditional distributions:
\begin{equation*}
P(\mathit{GG}_{\mathcal{M}_\Theta\sigma}) = \displaystyle\prod_{I\in\mathcal{E}\cup\mathcal{R}} \displaystyle\prod_{X\in\mathcal{A}(I)} \displaystyle\prod_{i\in\sigma(I)} P\big(i.X\ \vert\ \mathit{parents}(i.X)\big)
\end{equation*}
where each $i.X$ is assigned the conditional distribution defined for $[I].X$ (a process referred to as parameter-tying).

Relational models only define coherent joint probability distributions if they produce acyclic ground graphs.
A useful construct for checking model acyclicity is the class dependency graph \citep{getoor:prm-ch-srl-book07}, defined as:

\begin{definition}[Class dependency graph]
\label{def:class-dependency-graph}
The class dependency graph $G_\mathcal{M} = (V, E)$ for relational model structure $\mathcal{M} = (\mathcal{S}, \mathcal{D})$ is a directed graph with a node for each attribute of every item class $V = \big\{I.X\ \vert\ I\!\in\!\mathcal{E\cup R}\ \land\ X\!\in\!\mathcal{A}(I)\big\}$ and edges between pairs of attributes supported by relational dependencies in the model $E = \big\{I_k.Y\rightarrow I_j.X\ \vert\ [I_j,\dots,I_k].Y\rightarrow [I_j].X\!\in\!\mathcal{D}\big\}$.
\end{definition}

If the relational dependencies form an acyclic class dependency graph, then every possible ground graph of that model is acyclic as well \citep{getoor:prm-ch-srl-book07}.
Given an acyclic relational model, the ground graph has the same semantics as a Bayesian network \citep{getoor2001thesis, heckerman-etal-introsrl07}.
All future references to acyclic relational models refer to relational models whose structure forms acyclic class dependency graphs.

By Lemma~\ref{lemma:overlap} and Definition~\ref{def:ground-graph}, one relational dependency may imply dependence between the instances of many relational variables.
If there is an edge from $i_k.Y$ to $i_j.X$ in the ground graph, then there is an implied dependency between all relational variables for which $i_k.Y$ and $i_j.X$ are elements of their instances.

\begin{example}
\label{ex:gg-implied-dep}
The relational dependency $[$\textsc{Employee}$].Competence\!\! \rightarrow\!\! [$\textsc{Employee}$].Salary$ yields the edge Roger.\textit{Competence} $\rightarrow$ Roger.\textit{Salary} in the ground graph of Figure~\ref{fig:org_example_model}\subref{fig:org_ground_graph} because Roger.\textit{Competence} $\in [$\textsc{Employee}$].Competence\vert_{\text{Roger}}$.
However, Roger.\textit{Competence} $\in [$\textsc{Employee}, \textsc{Develops}, \textsc{Product}, \textsc{Develops}, \textsc{Employee}$].Competence\vert_{\text{Sally}}$ (as is Roger.\textit{Salary}, replacing \textit{Competence} with \textit{Salary}).  Consequently, the relational dependency \textit{implies} dependence among the random variables in the instances of $[$\textsc{Employee}, \textsc{Develops}, \textsc{Product}, \textsc{Develops}, \textsc{Employee}$].Competence$ and $[$\textsc{Employee}, \textsc{Develops}, \textsc{Product}, \textsc{Develops}, \textsc{Employee}$].Salary$.
$\square$
\end{example}

These implied dependencies form the crux of the challenge of identifying independence in relational models.
Additionally, the intersection between the terminal sets of two relational paths is crucial for reasoning about independence because a random variable can belong to the instances of more than one relational variable.
Since \textit{d}-separation only guarantees independence when there are no \textit{d}-connecting paths, we must consider all possible paths between pairs of random variables, either of which may be a member of multiple relational variable instances.
In Section~\ref{sec:rds}, we define relational \textit{d}-separation and provide an appropriate representation, the abstract ground graph, that enables straightforward reasoning about \textit{d}-separation.

\section{Relational \textit{d}-Separation}
\label{sec:rds}

Conditional independence facts are correctly entailed by the rules of \textit{d}-separation, but only when applied to the graphical structure of Bayesian networks.
Every ground graph of a Bayesian network consists of a set of identical copies of the model structure (see Appendix~\ref{sec:appendix_prop_background}).
Thus, the implications of \textit{d}-separation on Bayesian networks hold for all instances in every ground graph.
In contrast, the structure of a relational model is a template for ground graphs, and the structure of a ground graph varies with the underlying skeleton (which is typically more complex than a set of disconnected instances).
Conditional independence facts are only useful when they hold across all ground graphs that are consistent with the model, which leads to the following definition:
	
\begin{definition}[Relational \textit{d}-separation]
\label{def:rel-d-sep}
Let $\mathbf{X}$, $\mathbf{Y}$, and $\mathbf{Z}$ be three distinct sets of relational variables with the same perspective $B\in\mathcal{E}\cup\mathcal{R}$ defined over relational schema $\mathcal{S}$.  Then, for relational model structure $\mathcal{M}$, $\mathbf{X}$ and $\mathbf{Y}$ are \textit{d}-separated by $\mathbf{Z}$ if and only if, for all skeletons $\sigma\in\Sigma_\mathcal{S}$, $\mathbf{X}\vert_b$ and $\mathbf{Y}\vert_b$ are \textit{d}-separated by $\mathbf{Z}\vert_b$ in ground graph $\mathit{GG}_{\mathcal{M}\sigma}$ for all $b\in \sigma(B)$.
\end{definition}

For any relational \textit{d}-separation query, it is necessary that all relational variables in $\mathbf{X}$, $\mathbf{Y}$, and $\mathbf{Z}$ have the same perspective (otherwise, the query would be incoherent).\footnote{This trivially holds for \textit{d}-separation in Bayesian networks as all ``propositional" variables have the same implicit perspective.}
For $\mathbf{X}$ and $\mathbf{Y}$ to be \textit{d}-separated by $\mathbf{Z}$ in relational model structure $\mathcal{M}$, \textit{d}-separation must hold for all instantiations of those relational variables for all possible skeletons.
This is a conservative definition, but it is consistent with the semantics of \textit{d}-separation on Bayesian networks---it guarantees independence, but it does not guarantee dependence.
If there exists even one skeleton and faithful distribution represented by the relational model for which $\mathbf{X}\Perpn \mathbf{Y}\ \vert\ \mathbf{Z}$, then 
$\mathbf{X}$ and $\mathbf{Y}$ are not \textit{d}-separated by $\mathbf{Z}$.

Given the semantics specified in Definition~\ref{def:rel-d-sep}, answering relational \textit{d}-separation queries is challenging for several reasons:

\textit{D-separation must hold over all ground graphs}: 
The implications of \textit{d}-separation on Bayesian networks hold for all possible ground graphs.
However, the ground graphs of a Bayesian network consist of identical copies of the structure of the model, and it is sufficient to reason about \textit{d}-separation on a single subgraph.
Although it is possible to verify \textit{d}-separation on a single ground graph of a relational model, the conclusion may not generalize, and ground graphs can be arbitrarily large.  
	
\textit{Relational models are templates}: The structure of a relational model is a directed acyclic graph, but the dependencies are actually templates for constructing ground graphs.  
The rules of \textit{d}-separation do not directly apply to relational models, only to their ground graphs.  
Applying the rules of \textit{d}-separation to a relational model frequently leads to incorrect conclusions because of unrepresented \textit{d}-connecting paths that are only manifest in ground graphs.

\textit{Instances of relational variables may intersect}: The instances of two different relational variables may have non-empty intersections, as described by Lemma~\ref{lemma:overlap}.  
These intersections may be involved in relationally \textit{d}-connecting paths, such as the example in Section~\ref{sec:example}.
As a result, a sound and complete approach to answering relational \textit{d}-separation queries must account for these paths.
	
\textit{Relational models may be specified from multiple perspectives}: 
Relational models are defined by relational dependencies, each specified from a single perspective.  
However, variables in a ground graph may contribute to \textit{multiple} relational variable instances, each defined from a \textit{different} perspective.
Thus, reasoning about implied dependencies between arbitrary relational variables, such as the one described in Example~\ref{ex:gg-implied-dep},  requires a method to translate dependencies across perspectives.

\subsection{Abstracting over All Ground Graphs}
\label{sec:aggs}

The definition of relational \textit{d}-separation and its challenges suggest a solution that abstracts over all possible ground graphs and explicitly represents the potential intersection between pairs of relational variable instances.
We introduce a new lifted representation, called the \textit{abstract ground graph}, that captures all dependencies among arbitrary relational variables for all ground graphs, using knowledge of only the schema and the model.
To represent all dependencies, the construction of an abstract ground graph uses the \textit{extend} method, which maps a relational dependency to a set of implied dependencies for different perspectives.
Each abstract ground graph of a relational model is defined with respect to a given perspective and can be used to reason about relational \textit{d}-separation queries for that perspective.

\begin{definition}[Abstract ground graph]
\label{def:abstract-gg}
An \textit{abstract ground graph} $\mathit{AGG}_{\mathcal{M}B} = (V, E)$ for relational model structure $\mathcal{M} = (\mathcal{S}, \mathcal{D})$ and perspective $B\in\mathcal{E}\cup\mathcal{R}$ is a directed graph that abstracts the dependencies $\mathcal{D}$ for all ground graphs $\mathit{GG}_{\mathcal{M}\sigma}$, where $\sigma\in\Sigma_\mathcal{S}$.

The set of nodes in $\mathit{AGG}_{\mathcal{M}B}$ is $V = \mathit{RV} \cup \mathit{IV}$, where 
\begin{itemize}
\item $\mathit{RV}$ is the set of all relational variables of the form $[B,\dots,I_j].X$
\item $\mathit{IV}$ is the set of all pairs of relational variables that could have non-empty intersections (referred to as intersection variables):
\begin{align*}
\big\lbrace \mathit{RV}\!_1\cap \mathit{RV}\!_2\ \vert\ \mathit{RV}\!_1, \mathit{RV}\!_2\!\in\!\mathit{RV}\ &\land\ \mathit{RV}\!_1=[B,\dots,I_k,\dots, I_j].X\\
 &\land\ \mathit{RV}\!_2=[B,\dots,I_l,\dots, I_j].X\ \land\ I_k\ne I_l \big\rbrace
\end{align*}
\end{itemize}

The set of edges in $\mathit{AGG}_{\mathcal{M}B}$ is $E = \mathit{RVE}\ \cup\ \mathit{IVE}$, where
\begin{itemize}
\item $\mathit{RVE}\subset \mathit{RV}\times \mathit{RV}$ is the set of edges between pairs of relational variables:
\begin{align*}
\mathit{RVE} = \big\lbrace [B,\dots, I_k].Y\rightarrow [B,\dots, I_j].X\ \vert\ & [I_j,\dots, I_k].Y\rightarrow [I_j].X\in\mathcal{D}\ \ \land\\ & [B,\dots, I_k]\in \mathit{extend}([B,\dots, I_j], [I_j,\dots, I_k]) \big\rbrace 
\end{align*}

\item $\mathit{IVE}\subset \mathit{IV}\times \mathit{RV}\ \cup\ \mathit{RV}\times \mathit{IV}$ is the set of edges inherited from both relational variables involved in every intersection variable in $\mathit{IV}$: 
\begin{align*}
\mathit{IVE} = \big\lbrace \hat{Y}\rightarrow [B,\dots, I_j].X\ \vert\ &\hat{Y} = P_1.Y\cap P_2.Y\in \mathit{IV}\ \ \land\ \\
&(P_1.Y\rightarrow [B,\dots, I_j].X\in \mathit{RVE}\ \ \lor\ \\
& \hspace{1.5mm} P_2.Y\rightarrow [B,\dots, I_j].X\in \mathit{RVE}) \big\rbrace\
\end{align*}
\hspace{64mm} $\bigcup$
\begin{align*}
\hspace{12mm} \big\lbrace [B,\dots, I_k].Y\rightarrow \hat{X}\ \vert\ &\hat{X} = P_1.X\cap P_2.X\in \mathit{IV}\ \ \land\ \\ 
&([B,\dots, I_k].Y\rightarrow P_1.X\in \mathit{RVE}\ \ \lor\ \\
&\hspace{1.5mm} [B,\dots, I_k].Y\rightarrow P_2.X\in \mathit{RVE}) \big\rbrace
\end{align*}
\end{itemize}
\end{definition}

The \textit{extend} method is described in Definition~\ref{def:extend} below.  
Essentially, the construction of an abstract ground graph for relational model structure $\mathcal{M}$ and perspective $B\in\mathcal{E}\cup\mathcal{R}$ follows three simple steps: (1) Add a node for all relational variables from perspective $B$.\footnotemark{} (2) Insert edges for the direct causes of every relational variable by translating the dependencies in $\mathcal{D}$ using \textit{extend}. (3) For each pair of potentially intersecting relational variables, create a new node that inherits the direct causes and effects from both participating relational variables in the intersection.
Then, to answer queries of the form ``Are $\mathbf{X}$ and $\mathbf{Y}$ \textit{d}-separated by $\mathbf{Z}$?" simply (1) augment $\mathbf{X}$, $\mathbf{Y}$, and $\mathbf{Z}$ with their corresponding intersection variables that they subsume and (2) apply the rules of \textit{d}-separation on the abstract ground graph for the common perspective of $\mathbf{X}$, $\mathbf{Y}$, and $\mathbf{Z}$.
Since abstract ground graphs are defined from a specific perspective, every relational model produces a \textit{set of abstract ground graphs}, one for each perspective in its underlying schema.
\footnotetext{In theory, abstract ground graphs can have an infinite number of nodes as relational paths may have no bound. In practice, a \textit{hop threshold} $h\in\mathbb{N}^0$ is enforced to limit the length of these paths. Hops are defined as the number of times the path ``hops" between item classes in the schema, or one less than the length of the path.}

\begin{example}
\label{ex:agg}
Figure~\ref{fig:org_agg_h6} shows the abstract ground graph $\mathit{AGG}_{\mathcal{M},\,\text{\textsc{Employee}}}$ for the organization example from the \textsc{Employee} perspective with hop threshold $h=6$.\footnote{The variables \textit{Salary} and \textit{Budget} are removed for simplicity.  They are irrelevant for this \textit{d}-separation example as they are solely effects of other variables.}
As in Section~\ref{sec:example}, we derive an appropriate conditioning set $\mathbf{Z}$ in order to \textit{d}-separate individual employee competence ($\mathbf{X} = \{[$\textsc{Employee}$].Competence\}$) from the revenue of the employee's funding business units ($\mathbf{Y} = \{[$\textsc{Employee}, \textsc{Develops}, \textsc{Product}, \textsc{Funds}, \textsc{Business-Unit}$].Revenue\}$).
Applying the rules of \textit{d}-separation to the abstract ground graph, we see that it is necessary to condition on both product success ($[$\textsc{Employee}, \textsc{Develops}, \textsc{Product}$].Success$) and the competence of other employees developing the same products ($[$\textsc{Employee}, \textsc{Develops}, \textsc{Product}, \textsc{Develops}, \textsc{Employee}$].Competence$).
For $h=6$, augmenting $\mathbf{X}$, $\mathbf{Y}$, and $\mathbf{Z}$ with their corresponding intersection variables does not result in any changes.
For $h=8$, the abstract ground graph includes a node for relational variable $[$\textsc{Employee}, \textsc{Develops}, \textsc{Product}, \textsc{Develops}, \textsc{Employee}, \textsc{Develops}, \textsc{Product}, \textsc{Funds}, \textsc{Business-Unit}$].Revenue$ (the revenue of the business units funding the other products of collaborating employees) which, by Lemma~\ref{lemma:overlap}, could have a non-empty intersection with $[$\textsc{Employee}, \textsc{Develops}, \textsc{Product}, \textsc{Funds}, \textsc{Business-Unit}$].Revenue$.
Therefore, $\mathbf{Y}$ would also include the intersection with this other relational variable.
However, for this query, the conditioning set $\mathbf{Z}$ for $h=6$ happens to also \textit{d}-separate at $h=8$ (and any $h\in\mathbb{N}^0$).
$\square$
\end{example}

Using the algorithm devised by \citet{geiger1990iibn}, relational \textit{d}-separation queries can be answered in $O(\vert E\vert)$ time with respect to the number of edges in the abstract ground graph.
In practice, the size of an abstract ground graph depends on the relational schema and model (e.g., the number of entity classes, the types of cardinalities, the number of dependencies---see the experiment in Section~\ref{sec:agg-size}), as well as the hop threshold limiting the length of relational paths.
For the example in Figure~\ref{fig:org_agg_h6}, the abstract ground graph has 7 nodes and 7 edges (including 1 intersection node with 2 edges); for $h=8$, it would have 13 nodes and 21 edges (including 4 intersection nodes with 13 edges).
Abstract ground graphs are invariant to the size of ground graphs, even though ground graphs can be arbitrarily large---that is, relational databases have no maximum size.

	\begin{figure}[t]
	\centering	
	\includegraphics[width=152mm]{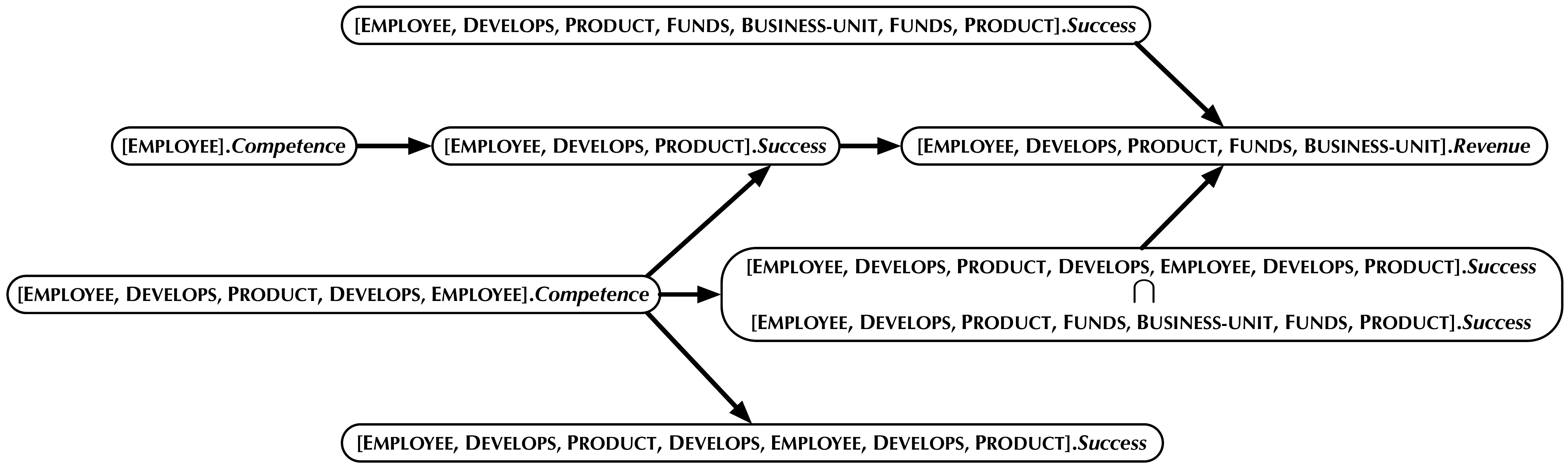}
	\caption{The abstract ground graph for the organization domain model in Figure~\ref{fig:org_example_model}\protect\subref{fig:org_model} from the \textsc{Employee} perspective with hop threshold $h=6$ (with the variables for \textit{Salary} and \textit{Budget} omitted for simplicity). This abstract ground graph includes one intersection node.}
	\label{fig:org_agg_h6}
	\end{figure}

Next, we formally define the \textit{extend} method, which is used internally for the construction of abstract ground graphs.
This method translates dependencies specified in the model into dependencies in the abstract ground graph.

\begin{definition}[Extending relational paths]
\label{def:extend}
Let $P_{\mathit{orig}}$ and $P_{\mathit{ext}}$ be two relational paths for schema $\mathcal{S}$. The following three functions extend $P_{\mathit{orig}}$ with $P_{\mathit{ext}}$:
\begin{align*}
\mathit{extend}(P_{\mathit{orig}}, P_{\mathit{ext}}) =\ & \big\lbrace P\! =\! P_{\mathit{orig}}^{1,n_o-i+1}\! +\! P_{\mathit{ext}}^{i+1,n_e}\, \vert\, i\!\in\!\mathit{pivots}(\mathit{reverse}(P_{\mathit{orig}}), P_{\mathit{ext}}) \land \mathit{isValid}(P)\big\rbrace\\
& \\
\mathit{pivots}(P_1, P_2) =\ & \{i\ \vert\ P_1^{1,i} = P_2^{1,i}\}\\
& \\
\mathit{isValid}(P) =\ & \left\lbrace \begin{array}{ll}
\text{True} & \text{if $P$ does not violate Definition \ref{def:rel-path}}\\
\text{False} & \text{otherwise}
\end{array}\right.
\end{align*}
\noindent where $n_o$ is the length of $P_{\mathit{orig}}$, $n_e$ is the length of $P_{\mathit{ext}}$, $P^{i,j}$ corresponds to 1-based $i$-inclusive, $j$-inclusive subpath indexing, $+$ is concatenation of paths, and \textit{reverse} is a method that reverses the order of the path.
\end{definition}

The \textit{extend} method constructs a set of valid relational paths from two input relational paths.  
It first finds the indices (called pivots) of the item classes for which the input paths ($\mathit{reverse}(P_{\mathit{orig}})$ and $P_{\mathit{ext}}$) have a common starting subpath.
Then, it concatenates the two input paths at each pivot, removing one of the duplicated subpaths (see Example~\ref{ex:extend}).
Since \textit{d}-separation requires blocking all paths of dependence between two sets of variables, the \textit{extend} method is critical to ensure the soundness and completeness of our approach.
The abstract ground graph must capture all paths of dependence among the random variables in the relational variable instances for all represented ground graphs.
However, relational model structures are specified by relational dependencies, each from a given perspective and with outcomes that have singleton relational paths.
The \textit{extend} method is called repeatedly during the creation of an abstract ground graph, with $P_{\mathit{orig}}$ set to some relational path and $P_{\mathit{ext}}$ drawn from the relational path of the treatment in some relational dependency.

\begin{example}
\label{ex:extend}
During the construction of the abstract ground graph $\mathit{AGG}_{\mathcal{M},\,\text{\textsc{Employee}}}$ depicted in Figure~\ref{fig:org_agg_h6}, the \textit{extend} method is called several times.
First, all relational variables from the \textsc{Employee} perspective are added as nodes in the graph.
Next, \textit{extend} is used to insert edges corresponding to direct causes.
Consider the node for $[$\textsc{Employee}, \textsc{Develops}, \textsc{Product}$].Success$.
The construction of $\mathit{AGG}_{\mathcal{M},\,\text{\textsc{Employee}}}$ calls $\mathit{extend}(P_{\mathit{orig}}, P_{\mathit{ext}})$ with $P_{\mathit{orig}} = [$\textsc{Employee}, \textsc{Develops}, \textsc{Product}$]$ and $P_{\mathit{ext}} = [$\textsc{Product}, \textsc{Develops}, \textsc{Employee}$]$ because $[$\textsc{Product}, \textsc{Develops}, \textsc{Employee}$].Competence\! \rightarrow\! [$\textsc{Product}$].Success\! \in\!\mathcal{D}$.
Here, $\mathit{extend}(P_{\mathit{orig}}, P_{\mathit{ext}}) = \{[$\textsc{Employee}$]$, $[$\textsc{Employee}, \textsc{Develops}, \textsc{Product}, \textsc{Develops}, \textsc{Employee}$]\}$, which leads to the insertion of two edges in the abstract ground graph.
Note that $\mathit{pivots}(\mathit{reverse}(P_{\mathit{orig}}), P_{\mathit{ext}}) = \{1,2,3\}$, and the pivot at $i=2$ yields the invalid relational path $[$\textsc{Employee}, \textsc{Develops}, \textsc{Employee}$]$.
$\square$
\end{example}

We also describe two important properties of the \textit{extend} method with the following two lemmas.
The first lemma states that every relational path produced by \textit{extend} yields a terminal set for some skeleton such that there is an item instance also reachable by the two original paths.
This lemma is useful for proving the soundness of our abstraction: All edges inserted in an abstract ground graph correspond to edges in some ground graph.

\begin{mylemma}
\label{lemma:extend}
Let $P_{\mathit{orig}} = [I_1,\dots, I_j]$ and $P_{\mathit{ext}} = [I_j,\dots, I_k]$ be two relational paths with $\mathbf{P} = \mathit{extend}(P_{\mathit{orig}}, P_{\mathit{ext}})$.  Then, $\forall P\in\mathbf{P}$ there exists a relational skeleton $\sigma\in\Sigma_\mathcal{S}$ such that $\exists i_1\in \sigma(I_1)$ such that $\exists i_k\in P\vert_{i_1}$ and $\exists i_j\in P_{\mathit{orig}}\vert_{i_1}$ such that $i_k\in P_{\mathit{ext}}\vert_{i_j}$.
\end{mylemma}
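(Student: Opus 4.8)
The plan is to construct, for each $P \in \mathbf{P}$, a single skeleton $\sigma$ in which the required chain of reachability holds. By the definition of \textit{extend}, $P = P_{\mathit{orig}}^{1,\,n_o - i + 1} + P_{\mathit{ext}}^{i+1,\,n_e}$ for some pivot $i \in \mathit{pivots}(\mathit{reverse}(P_{\mathit{orig}}), P_{\mathit{ext}})$ with $\mathit{isValid}(P)$. Write $P_{\mathit{orig}} = [I_1, \dots, I_p, \dots, I_j]$, where $I_p$ is the item class at position $n_o - i + 1$ (the ``pivot item''), so that the prefix $[I_1, \dots, I_p]$ is the first part of $P$, and $[I_p, \dots, I_j]$ is the shared suffix of $P_{\mathit{orig}}$ that equals, in reverse, a prefix of $P_{\mathit{ext}}$; correspondingly $P_{\mathit{ext}} = [I_j, \dots, I_p, \dots, I_k]$ with the prefix $[I_j, \dots, I_p]$ being $\mathit{reverse}([I_p, \dots, I_j])$. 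Thus $P$ traverses $[I_1, \dots, I_p]$ and then $[I_p, \dots, I_k]$ (the suffix of $P_{\mathit{ext}}$ beyond position $i$).

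First I would build a skeleton that realizes a ``straight-line'' traversal. Concretely, create one item instance for each \emph{position} in the path $P$ and, separately, one instance for each position in the shared segment $[I_p, \dots, I_j]$, so that the instance $i_1$ at the start follows $[I_1,\dots,I_p]$ to reach an instance $i_p$ of $I_p$; from $i_p$ we continue along $[I_p,\dots,I_k]$ (the $P$-continuation) to reach an instance $i_k$, and also along $\mathit{reverse}([I_p,\dots,I_j]) = [I_p,\dots,I_j]$-as-a-prefix-of-$P_{\mathit{ext}}$ backwards, i.e.\ we need $i_p$ to lie on the $P_{\mathit{orig}}$-traversal from $i_1$ down to some $i_j \in P_{\mathit{orig}}\vert_{i_1}$, and we need $i_k$ to lie on the $P_{\mathit{ext}}$-traversal from that same $i_j$. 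The key observation is that $P_{\mathit{ext}}$ restricted to positions $1$ through $i$ is literally $\mathit{reverse}$ of $P_{\mathit{orig}}$ restricted to its last $i$ positions, so the same sequence of instances traversed from $i_1$ along $P_{\mathit{orig}}$ (reaching $i_j$) can be traversed in reverse from $i_j$ along $P_{\mathit{ext}}$ (arriving back at $i_p$), and from $i_p$ the remaining suffix $[I_p,\dots,I_k]$ of both $P$ and $P_{\mathit{ext}}$ uses a common set of fresh instances ending at $i_k$. Assign each required relationship instance the minimal set of entity participants forced by $E \in R$, and set cardinalities generously (we are free to choose $\sigma$), so all cardinality constraints in Definition~\ref{def:skeleton} are met.

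The main obstacle is the \emph{bridge-burning} semantics in Definition~\ref{def:terminal-set}: terminal sets exclude any instance already appearing in a shorter prefix, so I must ensure that the single straight-line chain of instances I lay down has no accidental coincidences that would cause an instance to be pruned before it is reached. The fix is to use \emph{distinct} fresh instances for every position along each traversal (for $P$, for $P_{\mathit{orig}}$, and for $P_{\mathit{ext}}$), \emph{except} where the construction deliberately forces identification — namely the shared initial instance $i_1$, the pivot instance $i_p$, the instance $i_j$ at the meeting point of $P_{\mathit{orig}}$ and $P_{\mathit{ext}}$, the shared instances along the common prefix $[I_1,\dots,I_p]$ (which lies only in $P$, so no conflict), and the shared instances along the common suffix $[I_p,\dots,I_k]$ used by both $P$ and $P_{\mathit{ext}}$. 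Conditions (2) and (3) of Definition~\ref{def:rel-path}, which $P$, $P_{\mathit{orig}}$, and $P_{\mathit{ext}}$ all satisfy, are exactly what guarantees these forced identifications do not themselves create a premature revisit (e.g.\ a triple $[R, E, R']$ with $R = R'$ only occurs when $\mathit{card}(R,E) = \textsc{many}$, which permits the two relationship instances to be distinct). I would verify, segment by segment, that at each inductive step of the terminal-set computation the intended next instance has not yet appeared in any earlier prefix, so it survives bridge burning; this is the delicate bookkeeping but is routine once the instances are chosen distinct by default. Concluding, $i_k \in P\vert_{i_1}$, $i_j \in P_{\mathit{orig}}\vert_{i_1}$, and $i_k \in P_{\mathit{ext}}\vert_{i_j}$ all hold in this $\sigma$, which is what the lemma claims. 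A clean alternative is to derive the needed skeleton directly from the skeleton produced by the proof of Lemma~\ref{lemma:overlap}, since that lemma already constructs overlapping terminal sets for two differing paths from a common base; I would mention this connection but carry out the explicit construction above for self-containedness.
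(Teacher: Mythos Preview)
Your proposal is correct and follows essentially the same approach as the paper: a construction that places a fresh item instance at every position along the relevant paths, relying on the validity conditions of Definition~\ref{def:rel-path} to ensure cardinality constraints are respected and bridge burning does not prune the intended traversal. The paper organizes the argument slightly differently---it splits off the pivot $c=1$ case (simple concatenation $P=[I_1,\dots,I_j,\dots,I_k]$) and observes that this case holds for \emph{every} skeleton, not just a constructed one, then handles $c>1$ by the explicit construction you describe (and, as you anticipated, invokes the skeleton-building procedure from Lemma~\ref{lemma:overlap}); but the substance is the same. One small wording issue: you cannot ``set cardinalities generously,'' since cardinalities are fixed by the schema $\mathcal{S}$; your later remark that conditions (2) and (3) of Definition~\ref{def:rel-path} guarantee compatibility with the fixed cardinalities is the correct justification, so just drop the earlier phrase.
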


\begin{proof}
See Appendix~\ref{sec:appendix_proofs}.
\end{proof}

\begin{example}
\label{ex:lemma-extend}
Let $\sigma$ be the skeleton shown in Figure~\ref{fig:org_example_data}\subref{fig:org_skeleton}, let $P_{\mathit{orig}} = [$\textsc{Employee}, \textsc{Develops}, \textsc{Product}$]$, let $P_{\mathit{ext}} = [$\textsc{Product}, \textsc{Develops}, \textsc{Employee}$]$, and let $i_1$ = Sally $\in \sigma($\textsc{Employee}$)$.
From Example~\ref{ex:extend}, we know that $\mathbf{P} = \mathit{extend}(P_{\mathit{orig}}, P_{\mathit{ext}}) = \{[$\textsc{Employee}$]$, $[$\textsc{Employee}, \textsc{Develops}, \textsc{Product}, \textsc{Develops}, \textsc{Employee}$]\}$.
We also have $[$\textsc{Employee}$]\vert_{\text{Sally}} = \{$Sally$\}$ and $[$\textsc{Employee}, \textsc{Develops}, \textsc{Product}, \textsc{Develops}, \textsc{Employee}$]\vert_{\text{Sally}} = \{$Quinn, Roger, Thomas$\}$.
By Lemma~\ref{lemma:extend}, there should exist an $i_j\in P_{\mathit{orig}}\vert_{i_1}$ such that Sally and at least one of Quinn, Roger, and Thomas are in the terminal set $P_{\mathit{ext}}\vert_{i_j}$.
We have $P_{\mathit{orig}}\vert_{\text{Sally}} = \{${\prodc}, {\prodd}$\}$, and $P_{\mathit{ext}}\vert_{\text{{\prodc}}} = \{$Quinn, Roger, Sally$\}$ and $P_{\mathit{ext}}\vert_{\text{{\prodd}}} = \{$Sally, Thomas$\}$.
So, the lemma clearly holds for this example.
$\square$
\end{example}

Lemma~\ref{lemma:extend} guarantees that, for some relational skeleton, there exists an item instance in the terminal sets produced by \textit{extend} that also appears in the terminal set of $P_{\mathit{ext}}$ via some instance in the terminal set of $P_{\mathit{orig}}$.
It is also possible (although infrequent) that there exist items reachable by $P_{\mathit{orig}}$ and $P_{\mathit{ext}}$ that are not in the terminal set of any path produced with $\mathit{extend}(P_{\mathit{orig}}, P_{\mathit{ext}})$.
The following lemma describes this unreachable set of items, stating that there must exist an alternative relational path $P_{\mathit{orig}}'$ that intersects with $P_{\mathit{orig}}$ and, when using \textit{extend}, catches those remaining items.
This lemma is important for proving the completeness of our abstraction: All edges in all ground graphs are represented in the abstract ground graph.

\begin{mylemma}
\label{lemma:extend-reverse}
Let $\sigma\in\Sigma_\mathcal{S}$ be a relational skeleton, and let $P_{\mathit{orig}} = [I_1,\dots, I_j]$ and $P_{\mathit{ext}} = [I_j,\dots, I_k]$ be two relational paths with $\mathbf{P} = \mathit{extend}(P_{\mathit{orig}}, P_{\mathit{ext}})$.  Then, $\forall i_1\!\in\! \sigma(I_1)\ \forall i_j\!\in\! P_{\mathit{orig}}\vert_{i_1}\ \forall i_k\!\in\! P_{\mathit{ext}}\vert_{i_j}$ if $\forall P\in\mathbf{P}\ i_k\notin P\vert_{i_1}$, then $\exists P_{\mathit{orig}}'$ such that $P_{\mathit{orig}}\vert_{i_1}\ \cap\ P_{\mathit{orig}}'\vert_{i_1}\neq\emptyset$ and $i_k\in P'\vert_{i_1}$ for some $P'\in \mathit{extend}(P_{\mathit{orig}}', P_{\mathit{ext}})$.
\end{mylemma}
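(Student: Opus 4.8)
I would argue directly in terms of witnessing walks in $\sigma$. Fix $i_1$, $i_j\in P_{\mathit{orig}}\vert_{i_1}$, and $i_k\in P_{\mathit{ext}}\vert_{i_j}$. Unfolding Definition~\ref{def:terminal-set}, choose a walk $a_1,\dots,a_{n_o}$ in $\sigma$ with $a_1=i_1$, $a_{n_o}=i_j$ that follows the schema path $P_{\mathit{orig}}=[I_1,\dots,I_j]$ and respects bridge burning (so in particular its instances are pairwise distinct); similarly choose a walk $b_1,\dots,b_{n_e}$ with $b_1=i_j$, $b_{n_e}=i_k$ following $P_{\mathit{ext}}=[I_j,\dots,I_k]$. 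Glue them at $i_j$ to obtain a combined walk realizing $P_{\mathit{orig}}+P_{\mathit{ext}}^{2,n_e}$. Throughout I use the elementary fact, obtained by unfolding Definition~\ref{def:terminal-set}, that an instance lies in a terminal set precisely when it is the endpoint of an \emph{admissible} traversal of the path, i.e.\ one whose vertices avoid the terminal sets of all shorter prefixes; the delicacy of preserving admissibility under splicing is discussed at the end.

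\noindent\textbf{Step 1: locating a ``bad'' self-intersection.} The hypothesis that $i_k\notin P\vert_{i_1}$ for every $P\in\mathbf{P}$ must force the combined walk to be non-``clean.'' Let $t$ be the length of the maximal suffix of the first walk that the second walk retraces verbatim, i.e.\ the largest $t$ with $b_p=a_{n_o-p+1}$ for all $p\le t$; then $t\ge 1$ and the corresponding length-$t$ prefixes of $\mathit{reverse}(P_{\mathit{orig}})$ and $P_{\mathit{ext}}$ agree, so $t\in\mathit{pivots}(\mathit{reverse}(P_{\mathit{orig}}),P_{\mathit{ext}})$. If the tail $b_{t+1},\dots,b_{n_e}$ revisited no previously visited instance, then $a_1,\dots,a_{n_o-t+1},b_{t+1},\dots,b_{n_e}$ would be an admissible traversal of the pivot-$t$ member of $\mathit{extend}(P_{\mathit{orig}},P_{\mathit{ext}})$, putting $i_k$ in its terminal set and contradicting the hypothesis. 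Hence some $b_q$ with $q>t$ coincides with an earlier instance; since the $a$'s are pairwise distinct, so are the $b$'s, and coincidences with the retraced suffix are impossible, every such ``bad'' coincidence has the form $b_q=a_s$ with $q>t$ and $s\le n_o-t$.

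\noindent\textbf{Step 2: constructing the alternate origin path.} Fixing a bad coincidence $b_q=a_s$, splice the first walk up to $a_s$ onto the reverse of the second walk from $b_q$ down to $b_1=i_j$; this realizes the relational path $P_{\mathit{orig}}':=P_{\mathit{orig}}^{1,s}+\mathit{reverse}(P_{\mathit{ext}}^{1,q})^{2,q}$ from $I_1$ to $I_j$. An index check shows $\mathit{reverse}(P_{\mathit{orig}}')$ and $P_{\mathit{ext}}$ share their length-$q$ prefix, so the pivot $q$ lies in $\mathit{pivots}(\mathit{reverse}(P_{\mathit{orig}}'),P_{\mathit{ext}})$ and produces exactly $P':=P_{\mathit{orig}}^{1,s}+P_{\mathit{ext}}^{q+1,n_e}\in\mathit{extend}(P_{\mathit{orig}}',P_{\mathit{ext}})$, provided $P'$ and $P_{\mathit{orig}}'$ pass $\mathit{isValid}$. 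The walk $a_1,\dots,a_s,b_{q+1},\dots,b_{n_e}=i_k$ traverses $P'$, and $a_1,\dots,a_s,b_{q-1},\dots,b_1=i_j$ traverses $P_{\mathit{orig}}'$ and terminates at $i_j$; when these are admissible they give $i_j\in P_{\mathit{orig}}\vert_{i_1}\cap P_{\mathit{orig}}'\vert_{i_1}$ and $i_k\in P'\vert_{i_1}$, which is the conclusion.

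\noindent\textbf{Where the work is.} The crux is the two obligations left open above. I must pick the bad coincidence $b_q=a_s$ so that \emph{both} spliced walks are admissible traversals (not merely instance-distinct, but avoiding every shorter-prefix terminal set of $P_{\mathit{orig}}'$ and of $P'$) and so that $\mathit{isValid}$ holds, e.g.\ so no triple $[E,R,E]$ is created at the splice. Choosing the smallest bad-coincidence index keeps the $P_{\mathit{orig}}'$-walk clean but may leave the $P'$-walk self-intersecting at a later bad coincidence, and the largest index does the reverse, so a single splice does not suffice when several bad coincidences are present; I expect the resolution to be an induction (on $n_o+n_e$, or on the number of bad coincidences), since a residual self-intersection in either spliced walk is itself a bad-coincidence configuration for a strictly smaller instance to which the statement can be applied and its output threaded back. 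The second, more routine ingredient is confirming that an instance-distinct traversal of a relational path whose repeated item classes are handled correctly does land its endpoint in the terminal set; this is the only genuinely technical lemma, and it is where bridge burning (and Lemma~\ref{lemma:overlap}-style overlap reasoning) must be invoked carefully.
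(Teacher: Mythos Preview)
Your approach is essentially the same construction as the paper's: both locate a point where the concatenated traversal self-intersects (the paper calls this the ``reconvergence'' item class $I_m$; you call it the bad coincidence $b_q=a_s$), and both splice $P_{\mathit{orig}}$ up to that point onto the reversed prefix of $P_{\mathit{ext}}$ to produce $P_{\mathit{orig}}'$. Your $P_{\mathit{orig}}^{1,s}+\mathit{reverse}(P_{\mathit{ext}}^{1,q})^{2,q}$ is exactly the paper's $[I_1,\dots,I_m,\dots,I_n,\dots,I_j]$, and your $P'=P_{\mathit{orig}}^{1,s}+P_{\mathit{ext}}^{q+1,n_e}$ is its $[I_1,\dots,I_m,\dots,I_k]$.

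The main difference is granularity and candor. The paper argues at the schema level (item classes) and disposes of multiple reconvergences with a bare ``without loss of generality, assume $P_{\mathit{orig}}$ and $P_{\mathit{ext}}$ reconverge once''; it does not verify $\mathit{isValid}$ at the splice or check admissibility of the resulting traversals against bridge burning. You work at the instance level with explicit walks and are honest that multiple bad coincidences may force an induction and that admissibility and $\mathit{isValid}$ need separate verification. So the obligations you flag in your ``Where the work is'' paragraph are not weaknesses peculiar to your route---the paper's proof has the same gaps, merely unacknowledged. Your proposed induction on $n_o+n_e$ (or on the number of bad coincidences) is a reasonable way to discharge what the paper's WLOG hides.
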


\begin{proof}
See Appendix~\ref{sec:appendix_proofs}.
\end{proof}

\begin{example}
\label{ex:lemma-extend-reverse}
Although Lemma~\ref{lemma:extend-reverse} does not apply to the organization domain as currently represented, it could apply if either (1) there were cycles in the relational schema or (2) the path specifications on the relational dependencies included a cycle.
Consider additional relationships between employees and products.
If employees could be involved with products at various stages (e.g., research, development, testing, marketing), then there would be alternative relational paths for which the lemma might apply.
The proof of the lemma in Appendix~\ref{sec:appendix_proofs} provides abstract conditions describing when the lemma applies.
$\square$
\end{example}

\subsection{Proof of Correctness}
\label{sec:proof-of-correctness}

The correctness of our approach to relational \textit{d}-separation relies on several facts: (1) \textit{d}-separation is valid for directed acyclic graphs; (2) ground graphs are directed acyclic graphs; and (3) abstract ground graphs are directed acyclic graphs that represent exactly the edges that could appear in all possible ground graphs.
It follows that \textit{d}-separation on abstract ground graphs, augmented by intersection variables, is sound and complete for all ground graphs.\footnotemark{}
Additionally, we show that since relational \textit{d}-separation is sound and complete, it is also equivalent to the Markov condition for relational models.
Using the previous definitions and lemmas, the following sequence of results proves the correctness of our approach to identifying independence in relational models.
\footnotetext{In Appendix~\ref{sec:hop_thresholds}, we provide proofs of soundness and completeness for abstract ground graphs and relational \textit{d}-separation that are limited by practical hop threshold bounds.}

\begin{mytheorem}
\label{thm:d-sep}
The rules of \textit{d}-separation are sound and complete for directed acyclic graphs.
\end{mytheorem}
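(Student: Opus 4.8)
Since this theorem simply recapitulates a foundational result for Bayesian networks, the plan is to prove the two directions separately, appealing to the classical development of \textit{d}-separation \citep{verma1988cnse, geiger1988lcm, geiger1990iibn, neapolitan2004learning}. For \textbf{soundness}, I would show that if $\mathbf{X}$ and $\mathbf{Y}$ are \textit{d}-separated by $\mathbf{Z}$ in a DAG $G$, then $\mathbf{X}\Perp\mathbf{Y}\mid\mathbf{Z}$ holds in every joint distribution that satisfies the Markov condition for $G$ (equivalently, factorizes as a product of conditionals $P\big(v\mid\mathit{parents}(v)\big)$). The cleanest route is the moralization criterion: \textit{d}-separation of $\mathbf{X}$ and $\mathbf{Y}$ by $\mathbf{Z}$ in $G$ is equivalent to ordinary vertex separation of $\mathbf{X}$ and $\mathbf{Y}$ by $\mathbf{Z}$ in the moral graph of the smallest ancestral set $\mathit{An}(\mathbf{X}\cup\mathbf{Y}\cup\mathbf{Z})$. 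One then restricts the factorization to that ancestral subgraph (non-ancestral vertices marginalize out without disturbing the factorization over $\mathit{An}(\cdot)$), observes that the restricted distribution factorizes over the cliques of the moral graph, and invokes the standard fact that graph separation in an undirected graphical model entails the corresponding conditional independence. Composing these equivalences yields soundness.

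For \textbf{completeness}, the goal is: whenever $\mathbf{X}$ and $\mathbf{Y}$ are not \textit{d}-separated by $\mathbf{Z}$, exhibit at least one distribution satisfying the Markov condition for $G$ in which $\mathbf{X}\Perpn\mathbf{Y}\mid\mathbf{Z}$. Two standard arguments are available. The constructive one takes an active path between some $X\in\mathbf{X}$ and $Y\in\mathbf{Y}$ given $\mathbf{Z}$ and builds a parametrization---for instance, binary variables whose conditional probability tables are chosen so that dependence is transmitted across every edge of the path and every collider on the path is opened by its conditioned descendant---then verifies $X\Perpn Y\mid\mathbf{Z}$. A second, genericity-based argument instead observes that the set of parameter vectors making the network \emph{unfaithful} (exhibiting some independence not forced by \textit{d}-separation) is a proper algebraic, hence measure-zero, subset of the parameter space, so faithful distributions exist, and in any such distribution \textit{d}-connection implies dependence.

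The main obstacle is the completeness direction, specifically ruling out accidental cancellations: although each edge on a \textit{d}-connecting path individually transmits dependence, the composition over several edges could in principle vanish for special parameter values. Handling this rigorously requires either the explicit construction above (checking that no cancellation occurs for the specific chosen tables) or the algebraic-genericity route, which I would favor: the offending independence is a nontrivial polynomial identity in the parameters, so it fails on a dense open set and a witness distribution always exists. Soundness, by contrast, is essentially bookkeeping once the moralization equivalence is in hand.
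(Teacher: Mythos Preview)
Your proposal is correct and substantially more detailed than what the paper actually does. The paper's own ``proof'' of this theorem is a one-line citation: soundness is attributed to \citet{verma1988cnse} and completeness to \citet{geiger1988lcm}, with no further argument. In other words, the paper treats this as a known foundational result and simply points to the literature, whereas you supply genuine proof sketches---the moralization/ancestral-set route for soundness and the measure-zero unfaithfulness argument for completeness. Both of your sketches are standard and sound; they are precisely the arguments one finds in the cited sources and in textbook treatments such as \citet{neapolitan2004learning}. So there is no gap in your reasoning, but you should be aware that for the purposes of this paper a bare citation suffices, since the theorem is invoked only as background to justify applying \textit{d}-separation to ground graphs and abstract ground graphs.
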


\begin{proof}
Due to \citet{verma1988cnse} for soundness and \citet{geiger1988lcm} for completeness. $\blacksquare$
\end{proof}

Theorem~\ref{thm:d-sep} implies that (1) all conditional independence facts derived by \textit{d}-separation on a Bayesian network structure hold in any distribution represented by that model (soundness) and (2) all conditional independence facts that hold in a faithful distribution can be inferred from \textit{d}-separation applied to the Bayesian network that encodes the distribution (completeness).

\begin{mylemma}
\label{lemma:rolled-out-dags}
For every acyclic relational model structure $\mathcal{M}$ and skeleton $\sigma\in\Sigma_\mathcal{S}$, the ground graph $\mathit{GG}_{\mathcal{M}\sigma}$ is a directed acyclic graph.
\end{mylemma}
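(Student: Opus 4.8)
The plan is to reduce acyclicity of $\mathit{GG}_{\mathcal{M}\sigma}$ to acyclicity of the class dependency graph $G_\mathcal{M}$, which holds by hypothesis because $\mathcal{M}$ is an acyclic relational model structure. The link between the two graphs is a natural projection from ground-graph nodes down to class-dependency-graph nodes that preserves edges; a directed cycle in the ground graph would then project to a directed cycle in $G_\mathcal{M}$, giving the desired contradiction. (This is the fact attributed to \citet{getoor:prm-ch-srl-book07} in the discussion preceding the lemma; the argument below simply makes the dependence on the acyclicity assumption explicit.)

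Concretely, first I would define the map $\pi$ sending each node $i.X$ of $\mathit{GG}_{\mathcal{M}\sigma}$ to the node $I.X$ of $G_\mathcal{M}$, where $I\in\mathcal{E}\cup\mathcal{R}$ is the unique item class with $i\in\sigma(I)$. Then I would verify that $\pi$ is a graph homomorphism: by Definition~\ref{def:ground-graph}, every edge $i_k.Y\to i_j.X$ of $\mathit{GG}_{\mathcal{M}\sigma}$ arises from a relational dependency $[I_j,\dots,I_k].Y\to[I_j].X\in\mathcal{D}$, with $i_k\in\sigma(I_k)$ and $i_j\in\sigma(I_j)$; by Definition~\ref{def:class-dependency-graph}, that very dependency contributes the edge $I_k.Y\to I_j.X$ to $G_\mathcal{M}$. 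Hence $\pi(i_k.Y)\to\pi(i_j.X)$ is always an edge of $G_\mathcal{M}$, and both edges point from the treatment/parent side ($Y$) to the outcome/child side ($X$), so orientation is preserved.

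Next I would argue by contradiction: if $\mathit{GG}_{\mathcal{M}\sigma}$ contained a directed cycle $v_0\to v_1\to\cdots\to v_{n-1}\to v_0$ with $n\ge 1$, then applying $\pi$ edgewise would yield a closed directed walk $\pi(v_0)\to\pi(v_1)\to\cdots\to\pi(v_{n-1})\to\pi(v_0)$ of positive length in $G_\mathcal{M}$, and any closed directed walk of positive length in a digraph contains a directed cycle (extract a minimal-length closed subwalk, which repeats no intermediate vertex). This contradicts acyclicity of $G_\mathcal{M}$, so $\mathit{GG}_{\mathcal{M}\sigma}$ has no directed cycle and is therefore a DAG.

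I expect the only delicate point to be the degenerate case $n=1$, i.e.\ a self-loop $i.X\to i.X$ in the ground graph: it would project to a self-loop $I.X\to I.X$ in $G_\mathcal{M}$, which is still forbidden by acyclicity, so the argument goes through unchanged. It is worth noting in passing that the bridge-burning semantics of Definition~\ref{def:terminal-set} actually preclude such ground-graph self-loops whenever the underlying dependency's relational path has length greater than one (since $i_j\notin[I_j,\dots,I_k]\vert_{i_j}$ for such paths), so the only way a self-loop could arise at all is from a self-dependency $[I].X\to[I].X\in\mathcal{D}$, which is itself already a cycle in $G_\mathcal{M}$ and hence excluded by the acyclicity hypothesis.
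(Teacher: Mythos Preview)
Your proof is correct. The paper itself does not prove this lemma but simply attributes it to \citet{heckerman-etal-introsrl07} and \citet{getoor2001thesis}; your projection-to-$G_\mathcal{M}$ argument is the standard one, and in fact the paper uses exactly the same idea in its own proof of Theorem~\ref{thm:agg-dags} (acyclicity of the abstract ground graph), where a hypothetical cycle is mapped down to a cycle in $G_\mathcal{M}$ by retaining only the final item class of each node.
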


\begin{proof}
	Due to both \citet{heckerman-etal-introsrl07} for DAPER models and \citet{getoor2001thesis} for PRMs. $\blacksquare$
\end{proof}

By Theorem~\ref{thm:d-sep} and Lemma~\ref{lemma:rolled-out-dags}, \textit{d}-separation is sound and complete when applied to a ground graph.
However, Definition~\ref{def:rel-d-sep} states that relational \textit{d}-separation must hold across \textit{all possible} ground graphs, which is the reason for constructing the abstract ground graph representation.

\begin{mytheorem}
\label{thm:agg-abstracts}
For every acyclic relational model structure $\mathcal{M}$ and perspective $B\in\mathcal{E}\cup\mathcal{R}$, the abstract ground graph $\mathit{AGG}_{\mathcal{M}B}$ is sound and complete for all ground graphs $\mathit{GG}_{\mathcal{M}\sigma}$ with skeleton $\sigma\in\Sigma_\mathcal{S}$.
\end{mytheorem}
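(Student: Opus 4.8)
The plan is to unpack ``sound and complete'' into an edge-level correspondence between $\mathit{AGG}_{\mathcal{M}B}$ and the family of ground graphs $\mathit{GG}_{\mathcal{M}\sigma}$, together with acyclicity of $\mathit{AGG}_{\mathcal{M}B}$, and then to lift that correspondence to \emph{d}-connecting paths. Concretely, I would establish three things: (a) $\mathit{AGG}_{\mathcal{M}B}$ is a DAG; (b) \emph{soundness of the abstraction}: for every edge $U\to W$ of $\mathit{AGG}_{\mathcal{M}B}$ there is a skeleton $\sigma\in\Sigma_\mathcal{S}$, a base instance $b\in\sigma(B)$, and random variables $u\in U\vert_b$, $w\in W\vert_b$ with $u\to w\in\mathit{GG}_{\mathcal{M}\sigma}$; and (c) \emph{completeness of the abstraction}: for every $\sigma$, every $b\in\sigma(B)$, and every edge $i_k.Y\to i_j.X$ of $\mathit{GG}_{\mathcal{M}\sigma}$ whose child $i_j.X$ lies in the instantiation at $b$ of some relational variable of perspective $B$, there is an edge $U\to W$ of $\mathit{AGG}_{\mathcal{M}B}$ (with $U$ or $W$ possibly an intersection node) such that $i_k.Y\in U\vert_b$ and $i_j.X\in W\vert_b$. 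For (a) I would project $\mathit{AGG}_{\mathcal{M}B}$ onto the class dependency graph $G_\mathcal{M}$ (Definition~\ref{def:class-dependency-graph}) by sending any relational variable or intersection variable with terminal item $I$ and attribute $X$ to the vertex $I.X$; by Definition~\ref{def:abstract-gg} every $\mathit{RVE}$ edge, and every $\mathit{IVE}$ edge it induces, projects to an edge of $G_\mathcal{M}$ backed by the same dependency, so a directed cycle in $\mathit{AGG}_{\mathcal{M}B}$ would force a directed cycle in $G_\mathcal{M}$, contradicting the hypothesis that $\mathcal{M}$ is acyclic. This is also what licenses applying Theorem~\ref{thm:d-sep} to $\mathit{AGG}_{\mathcal{M}B}$, just as Lemma~\ref{lemma:rolled-out-dags} licenses applying it to each $\mathit{GG}_{\mathcal{M}\sigma}$.

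For (b), an $\mathit{RVE}$ edge $[B,\dots,I_k].Y\to[B,\dots,I_j].X$ arises from a dependency $[I_j,\dots,I_k].Y\to[I_j].X\in\mathcal{D}$ with $[B,\dots,I_k]\in\mathit{extend}([B,\dots,I_j],[I_j,\dots,I_k])$, and Lemma~\ref{lemma:extend}, applied with base item class $B$, supplies exactly the needed witness: a skeleton, a base instance $b$, an $i_j\in[B,\dots,I_j]\vert_b$ and an $i_k\in[B,\dots,I_k]\vert_b$ with $i_k\in[I_j,\dots,I_k]\vert_{i_j}$, so that Definition~\ref{def:ground-graph} places $i_k.Y\to i_j.X$ in that ground graph and $i_k.Y$, $i_j.X$ land in the respective instantiations. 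For an $\mathit{IVE}$ edge I would combine the $\mathit{RVE}$ edge it inherits with Lemma~\ref{lemma:overlap}, which yields a skeleton on which the two relational paths defining the intersection node have overlapping terminal sets; the remaining work is to amalgamate that skeleton with the construction realizing the inherited dependency edge so that a single item is both in the overlap and an endpoint of the realized edge. Because a relational dependency constrains only a bounded neighborhood, this merge is routine, but it is the one place in (b) demanding care.

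For (c), fix the ground-graph edge $i_k.Y\to i_j.X$, its dependency $[I_j,\dots,I_k].Y\to[I_j].X$, and a relational variable $[B,\dots,I_j].X$ with $i_j\in[B,\dots,I_j]\vert_b$, and apply Lemma~\ref{lemma:extend-reverse} with $P_{\mathit{orig}}=[B,\dots,I_j]$ and $P_{\mathit{ext}}=[I_j,\dots,I_k]$ (its precondition $i_k\in P_{\mathit{ext}}\vert_{i_j}$ holds because the ground-graph edge exists). If $i_k\in P\vert_b$ for some $P\in\mathit{extend}(P_{\mathit{orig}},P_{\mathit{ext}})$, then the $\mathit{RVE}$ edge $P.Y\to[B,\dots,I_j].X$ represents the ground-graph edge directly. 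Otherwise Lemma~\ref{lemma:extend-reverse} provides $P'_{\mathit{orig}}$, necessarily distinct from $P_{\mathit{orig}}$, whose terminal set at $b$ overlaps that of $P_{\mathit{orig}}$ and with $i_k\in P'\vert_b$ for some $P'\in\mathit{extend}(P'_{\mathit{orig}},P_{\mathit{ext}})$; then $P'.Y\to P'_{\mathit{orig}}.X\in\mathit{RVE}$, and since the two paths are distinct the intersection node $\hat X=[B,\dots,I_j].X\cap P'_{\mathit{orig}}.X$ belongs to $\mathit{IV}$ with $P'.Y\to\hat X\in\mathit{IVE}$, which is the representative once $[B,\dots,I_j].X$ is augmented with the intersection variables it subsumes, exactly as the query procedure built on Definition~\ref{def:abstract-gg} prescribes.

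The main obstacle is not any one of (a)--(c) in isolation but their composition into the path-level statement that actually underlies relational \emph{d}-separation (Definition~\ref{def:rel-d-sep}): every \emph{d}-connecting path in a ground graph relative to $b$ given $\mathbf{Z}\vert_b$ must lift to a \emph{d}-connecting path in $\mathit{AGG}_{\mathcal{M}B}$ given the augmented $\mathbf{Z}$, and conversely. I would argue this by induction along the path, choosing at each ground-graph vertex $r_i$ a node $\mathit{RV}_i$ with $r_i\in\mathit{RV}_i\vert_b$ and using the edge-level arguments above to advance. The delicate point is that when two successive ground-graph edges pull $r_i$ into instantiations governed by two different relational paths, the lifted walk must pass through the corresponding intersection node---this is precisely why $\mathit{IV}$ and $\mathit{IVE}$ exist---and one must check that the intersection node handed back by Lemma~\ref{lemma:extend-reverse} at one step is compatible with the choice forced at the next, and that collider versus non-collider status, hence activation under $\mathbf{Z}\vert_b$ versus the augmented $\mathbf{Z}$, is preserved throughout. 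Verifying this consistency, together with the skeleton amalgamation needed for (b), is where I expect essentially all of the real difficulty to reside.
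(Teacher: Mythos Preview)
Your core edge-level arguments in (b) and (c) match the paper's proof almost exactly: soundness of $\mathit{RVE}$ edges via Lemma~\ref{lemma:extend}, soundness of $\mathit{IVE}$ edges via Lemma~\ref{lemma:overlap} combined with the $\mathit{RVE}$ case, and completeness via the case split on whether $i_k$ is reached by some path in $\mathit{extend}(P_{\mathit{orig}},P_{\mathit{ext}})$, invoking Lemma~\ref{lemma:extend-reverse} when it is not. You are also more scrupulous than the paper about the skeleton amalgamation needed in the $\mathit{IVE}$ soundness case; the paper fixes a skeleton from Lemma~\ref{lemma:overlap} and then appeals to ``case (a)'' without explicitly arguing that the same skeleton realizes both the overlap and the dependency edge.

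That said, you are over-scoping the theorem. In the paper, Theorem~\ref{thm:agg-abstracts} is \emph{only} the edge-level correspondence: every edge of $\mathit{AGG}_{\mathcal{M}B}$ is witnessed in some ground graph, and every ground-graph edge is covered by some edge of $\mathit{AGG}_{\mathcal{M}B}$. Acyclicity of $\mathit{AGG}_{\mathcal{M}B}$ is a separate result (Theorem~\ref{thm:agg-dags}), proved exactly by your projection-to-$G_\mathcal{M}$ argument, and the lifting of \emph{d}-connecting paths is yet another result (Theorem~\ref{thm:relational-d-sep}). So your parts (a) and your final paragraph are not part of what this theorem requires.

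There is one substantive difference worth noting. The paper states completeness universally over \emph{all} pairs $(P_k.X,P_j.Y)$ with $i_k\in P_k\vert_b$ and $i_j\in P_j\vert_b$: for every such pair, either the direct $\mathit{RVE}$ edge exists, or an intersection node on the parent side connects them, or an intersection node on the child side does (three cases, not two). Your formulation of (c) fixes only the child-side relational variable and produces \emph{some} parent-side representative. The paper's stronger, symmetric statement is what makes the ``delicate point'' you flag in path-lifting largely evaporate in the proof of Theorem~\ref{thm:relational-d-sep}: whatever relational variable you chose to cover a ground-graph vertex at the previous step, completeness guarantees an edge (possibly through an intersection node) to whatever you need next, so no cross-step compatibility bookkeeping is required.
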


\begin{proof}
See Appendix~\ref{sec:appendix_proofs}.
\end{proof}

Theorem~\ref{thm:agg-abstracts} guarantees that, for a given perspective, an abstract ground graph captures all possible paths of dependence between any pair of variables in any ground graph.
The details of the proof provide the reasons why explicitly representing intersection variables is necessary for ensuring a sound and complete abstraction.

\begin{mytheorem}
\label{thm:agg-dags}
For every acyclic relational model structure $\mathcal{M}$ and perspective $B\in\mathcal{E}\cup\mathcal{R}$, the abstract ground graph $\mathit{AGG}_{\mathcal{M}B}$ is directed and acyclic.
\end{mytheorem}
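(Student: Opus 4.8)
The plan is to exhibit a graph homomorphism from $\mathit{AGG}_{\mathcal{M}B}$ to the class dependency graph $G_\mathcal{M}$ (Definition~\ref{def:class-dependency-graph}) and transfer acyclicity along it. Directedness is immediate from Definition~\ref{def:abstract-gg}, so the entire content is acyclicity, and since $\mathcal{M}$ is an acyclic relational model structure, $G_\mathcal{M}$ is acyclic by assumption. First I would define a projection $\pi$ on nodes: for a relational variable node $[B,\dots,I_j].X\in\mathit{RV}$ set $\pi([B,\dots,I_j].X)=I_j.X$, and for an intersection variable node $\hat{X}=P_1.X\cap P_2.X\in\mathit{IV}$ set $\pi(\hat{X})=I_j.X$, where $I_j$ is the common terminal item class of $P_1$ and $P_2$. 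This is well-defined because, by the definition of $\mathit{IV}$, the two relational variables forming an intersection variable share both their terminal item class and their attribute; and $\pi$ lands in $V(G_\mathcal{M})$ since $X\in\mathcal{A}(I_j)$ in every case.

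Next I would verify that $\pi$ maps every edge of $\mathit{AGG}_{\mathcal{M}B}$ to an edge of $G_\mathcal{M}$, checking the three kinds of edges. (i) An edge in $\mathit{RVE}$ has the form $[B,\dots,I_k].Y\to[B,\dots,I_j].X$ and arises from a dependency $[I_j,\dots,I_k].Y\to[I_j].X\in\mathcal{D}$; note that every path produced by $\mathit{extend}([B,\dots,I_j],[I_j,\dots,I_k])$ terminates at $I_k$, so the tail node is indeed a relational variable over $I_k.Y$. By Definition~\ref{def:class-dependency-graph} this dependency contributes the edge $I_k.Y\to I_j.X$ to $G_\mathcal{M}$, which is exactly the $\pi$-image of the $\mathit{RVE}$ edge. (ii) An $\mathit{IVE}$ edge of the first kind, $\hat{Y}\to[B,\dots,I_j].X$ with $\hat{Y}=P_1.Y\cap P_2.Y$, exists only because $P_1.Y\to[B,\dots,I_j].X$ or $P_2.Y\to[B,\dots,I_j].X$ lies in $\mathit{RVE}$; whichever it is, with that $P_m=[B,\dots,I_k]$, we get $\pi(\hat{Y})=I_k.Y$ and, by case (i), $I_k.Y\to I_j.X\in G_\mathcal{M}$. (iii) Symmetrically, an $\mathit{IVE}$ edge $[B,\dots,I_k].Y\to\hat{X}$ of the second kind projects to an edge $I_k.Y\to I_j.X$ of $G_\mathcal{M}$ using the witnessing $\mathit{RVE}$ edge. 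Hence $\pi$ is a graph homomorphism.

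Finally I would conclude by contradiction: a directed cycle $v_1\to v_2\to\dots\to v_n\to v_1$ in $\mathit{AGG}_{\mathcal{M}B}$ maps under $\pi$ to a closed directed walk $\pi(v_1)\to\dots\to\pi(v_n)\to\pi(v_1)$ of positive length in $G_\mathcal{M}$ (each step is an edge by the homomorphism property), and any closed directed walk of positive length contains a directed cycle --- this covers even the degenerate case where all $\pi(v_i)$ coincide, which forces a self-loop at that attribute. This contradicts the acyclicity of $G_\mathcal{M}$, so $\mathit{AGG}_{\mathcal{M}B}$ has no directed cycle.

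The step I expect to require the most care is the edge-checking for $\mathit{IVE}$: one must confirm that the endpoints of an intersection variable genuinely live over a single, well-defined attribute class of a single item class (so that $\pi$ is defined on $\mathit{IV}$ and the inherited edges project consistently), and that ``inheriting'' an edge from \emph{either} of the two constituent relational variables suffices --- there is no need for both to supply it, nor does this asymmetry break the homomorphism. Everything else is bookkeeping once $\pi$ is in place.
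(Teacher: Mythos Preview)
Your proposal is correct and takes essentially the same approach as the paper: both project each node of $\mathit{AGG}_{\mathcal{M}B}$ to the attribute class on its terminal item class in $G_\mathcal{M}$ and derive a contradiction from a cycle. Your version is more careful---explicitly defining the homomorphism $\pi$, checking it on each edge type including $\mathit{IVE}$, and noting that the image is a closed walk (hence contains a cycle)---whereas the paper compresses all of this into the single sentence ``retaining only the final item class in each relational path for every node in the cycle will yield a cycle in $G_\mathcal{M}$.''
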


\begin{proof}
See Appendix~\ref{sec:appendix_proofs}.
\end{proof}

Theorem~\ref{thm:agg-dags} ensures that the standard rules of \textit{d}-separation can apply directly to abstract ground graphs because they are acyclic given an acyclic model.
We now have sufficient supporting theory to prove that \textit{d}-separation on abstract ground graphs is sound and complete.
In the following theorem, we define $\mathbf{\bar{W}}$ as the set of nodes augmented with their corresponding intersection nodes for the set of relational variables $\mathbf{W}$: $\mathbf{\bar{W}} = \mathbf{W}\ \cup\ \bigcup_{W\in\mathbf{W}} \{ W\cap W' \ \vert\ W\cap W'\text{ is an intersection node in } \mathit{AGG}_{\mathcal{M}B}\}$.

\begin{mytheorem}
\label{thm:relational-d-sep}
Relational \textit{d}-separation is sound and complete for abstract ground graphs.
Let $\mathcal{M}$ be an acyclic relational model structure, and let $\mathbf{X}$, $\mathbf{Y}$, and $\mathbf{Z}$ be three distinct sets of relational variables for perspective $B\in\mathcal{E}\cup\mathcal{R}$ defined over relational schema $\mathcal{S}$.
Then, $\mathbf{\bar{X}}$ and $\mathbf{\bar{Y}}$ are \textit{d}-separated by $\mathbf{\bar{Z}}$ on the abstract ground graph $\mathit{AGG}_{\mathcal{M}B}$ if and only if for all skeletons $\sigma\in\Sigma_\mathcal{S}$ and for all $b\in \sigma(B)$, $\mathbf{X}\vert_b$ and $\mathbf{Y}\vert_b$ are \textit{d}-separated by $\mathbf{Z}\vert_b$ in ground graph $\mathit{GG}_{\mathcal{M}\sigma}$.
\end{mytheorem}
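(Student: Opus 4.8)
The plan is to reduce the theorem to the structural facts already in hand. By Lemma~\ref{lemma:rolled-out-dags} every ground graph is a directed acyclic graph, and by Theorem~\ref{thm:agg-dags} every abstract ground graph is a directed acyclic graph, so by Theorem~\ref{thm:d-sep} the rules of \textit{d}-separation are sound and complete on each of these graphs. It therefore suffices to establish a tight correspondence between \textit{d}-connecting paths in $\mathit{AGG}_{\mathcal{M}B}$ relative to $\mathbf{\bar Z}$ and \textit{d}-connecting paths in ground graphs relative to $\mathbf{Z}\vert_b$. I would prove both directions by contraposition, using Theorem~\ref{thm:agg-abstracts} as the bridge at the level of edges.

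For the ``only if'' direction, suppose there were a skeleton $\sigma$, an item $b\in\sigma(B)$, and a \textit{d}-connecting path $p = x, v_1,\dots,v_m, y$ in $\mathit{GG}_{\mathcal{M}\sigma}$ from some $x\in\mathbf{X}\vert_b$ to some $y\in\mathbf{Y}\vert_b$ given $\mathbf{Z}\vert_b$. By Definition~\ref{def:rel-var-instance} each node $i.W$ on $p$ is an instance of one or more relational variables from perspective $B$; picking one such relational variable per node, and an intersection variable from $\mathit{IV}$ whenever the path switches between two distinct relational variables covering the same ground node, yields a candidate path $q$ in $\mathit{AGG}_{\mathcal{M}B}$. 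By the completeness half of Theorem~\ref{thm:agg-abstracts}, every edge of $p$ lifts to an edge of $\mathit{AGG}_{\mathcal{M}B}$ with the same orientation, so $q$ is genuinely a path and has the same collider/non-collider pattern as $p$. I would then verify the activation conditions: the endpoints lie in $\mathbf{\bar X}$ and $\mathbf{\bar Y}$ by construction; a non-collider of $q$ is not in $\mathbf{\bar Z}$ because its realization on $p$ is not in $\mathbf{Z}\vert_b$; and a collider of $q$ inherits, via edge-completeness, a descendant covering the $\mathbf{Z}\vert_b$-descendant of the corresponding collider of $p$, hence a descendant in $\mathbf{\bar Z}$. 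Thus $q$ is \textit{d}-connecting given $\mathbf{\bar Z}$, contradicting the hypothesis.

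For the ``if'' direction, suppose $\mathbf{\bar X}$ and $\mathbf{\bar Y}$ are not \textit{d}-separated by $\mathbf{\bar Z}$ in $\mathit{AGG}_{\mathcal{M}B}$, so there is a \textit{d}-connecting path $q$ in the abstract ground graph given $\mathbf{\bar Z}$. I would construct a single skeleton $\sigma$ and base item $b$ that realizes $q$ into a \textit{d}-connecting path in $\mathit{GG}_{\mathcal{M}\sigma}$. The soundness half of Theorem~\ref{thm:agg-abstracts}, together with Lemmas~\ref{lemma:overlap}, \ref{lemma:extend}, and \ref{lemma:extend-reverse}, supplies for each edge of $q$ and each intersection node it traverses a witnessing skeleton; the substantive step is to glue these witnesses along shared item instances---identifying the common endpoints demanded by consecutive edges and by the two relational variables defining each intersection node---into one skeleton that simultaneously realizes the whole of $q$ plus, for each collider of $q$, a $\mathbf{Z}\vert_b$-member descendant. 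I then check that the realized path $p$ is \textit{d}-connecting given $\mathbf{Z}\vert_b$: orientations, hence colliders, are inherited from $q$ by Theorem~\ref{thm:agg-abstracts}; a node on $p$ lies in $\mathbf{Z}\vert_b$ exactly when the corresponding node of $q$ lies in $\mathbf{\bar Z}$, because $\mathbf{\bar Z}$ is augmented with precisely the intersection variables its members subsume; and collider descendants transfer from $\mathbf{\bar Z}$ to $\mathbf{Z}\vert_b$ under the same realization.

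I expect the main obstacle to be the gluing step in the ``if'' direction: assembling the per-edge and per-intersection skeleton witnesses provided by Theorem~\ref{thm:agg-abstracts} and Lemmas~\ref{lemma:overlap}--\ref{lemma:extend-reverse} into one coherent skeleton that realizes the entire abstract \textit{d}-connecting path at once while respecting the cardinality constraints of $\mathcal{S}$ and preserving the intended conditioned/unconditioned role of every node. A secondary subtlety, present in both directions, is making the node correspondence well defined when a ground node is covered by several relational variables; this is exactly what the intersection nodes $\mathit{IV}$ and the augmentation $\mathbf{W}\mapsto\mathbf{\bar W}$ are designed to handle, and the write-up should make explicit why that augmentation is both necessary and sufficient for the equivalence. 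Full details appear in Appendix~\ref{sec:appendix_proofs}.
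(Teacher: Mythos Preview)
Your proposal is correct and takes essentially the same approach as the paper: both directions are argued by contraposition, invoking the completeness half of Theorem~\ref{thm:agg-abstracts} to lift a ground-graph \textit{d}-connecting path into $\mathit{AGG}_{\mathcal{M}B}$ and the soundness half to realize an abstract \textit{d}-connecting path in some ground graph. You are in fact more careful than the paper's own proof, which treats the assembly of per-edge skeleton witnesses into a single skeleton as implicit; your flagging of that gluing step as the main obstacle is accurate.
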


\begin{proof}
We must show that \textit{d}-separation on an abstract ground graph implies \textit{d}-separation on all ground graphs it represents (soundness) and that \textit{d}-separation facts that hold across all ground graphs are also entailed by \textit{d}-separation on the abstract ground graph (completeness).

\textbf{Soundness}: Assume that $\mathbf{\bar{X}}$ and $\mathbf{\bar{Y}}$ are \textit{d}-separated by $\mathbf{\bar{Z}}$ on $\mathit{AGG}_{\mathcal{M}B}$.
Assume for contradiction that there exists an item instance $b\in\sigma(B)$ such that $\mathbf{X}\vert_b$ and $\mathbf{Y}\vert_b$ are \textit{not d}-separated by $\mathbf{Z}\vert_b$ in the ground graph $\mathit{GG}_{\mathcal{M}\sigma}$ for some arbitrary skeleton $\sigma$.
Then, there must exist a \textit{d}-connecting path $p$ from some $x\in\mathbf{X}\vert_b$ to some $y\in\mathbf{Y}\vert_b$ given all $z\in\mathbf{Z}\vert_b$.
By Theorem~\ref{thm:agg-abstracts}, $\mathit{AGG}_{\mathcal{M}B}$ is complete, so all edges in $\mathit{GG}_{\mathcal{M}\sigma}$ are captured by edges in $\mathit{AGG}_{\mathcal{M}B}$.
So, path $p$ must be represented from some node in $\{N_x\ \vert\ x\in N_x\vert_b\}$ to some node in $\{N_y\ \vert\ y\in N_y\vert_b\}$, where $N_x, N_y$ are nodes in $\mathit{AGG}_{\mathcal{M}B}$.
If $p$ is \textit{d}-connecting in $\mathit{GG}_{\mathcal{M}\sigma}$, then it is \textit{d}-connecting in $\mathit{AGG}_{\mathcal{M}B}$, implying that $\mathbf{\bar{X}}$ and $\mathbf{\bar{Y}}$ are \textit{not d}-separated by $\mathbf{\bar{Z}}$.
So, $\mathbf{X}\vert_b$ and $\mathbf{Y}\vert_b$ must be \textit{d}-separated by $\mathbf{Z}\vert_b$.

\textbf{Completeness}: Assume that $\mathbf{X}\vert_b$ and $\mathbf{Y}\vert_b$ are \textit{d}-separated by $\mathbf{Z}\vert_b$ in the ground graph $\mathit{GG}_{\mathcal{M}\sigma}$ for all skeletons $\sigma$ for all $b\in\sigma(B)$.
Assume for contradiction that $\mathbf{\bar{X}}$ and $\mathbf{\bar{Y}}$ are \textit{not d}-separated by $\mathbf{\bar{Z}}$ on $\mathit{AGG}_{\mathcal{M}B}$.
Then, there must exist a \textit{d}-connecting path $p$ for some relational variable $X\in\mathbf{\bar{X}}$ to some $Y\in\mathbf{\bar{Y}}$ given all $Z\in\mathbf{\bar{Z}}$.
By Theorem~\ref{thm:agg-abstracts}, $\mathit{AGG}_{\mathcal{M}B}$ is sound, so every edge in $\mathit{AGG}_{\mathcal{M}B}$ must correspond to some pair of variables in some ground graph.
So, if $p$ is \textit{d}-connecting in $\mathit{AGG}_{\mathcal{M}B}$, then there must exist some skeleton $\sigma$ such that $p$ is \textit{d}-connecting in $\mathit{GG}_{\mathcal{M}\sigma}$ for some $b\in\sigma(B)$, implying that \textit{d}-separation does not hold for that ground graph.
So, $\mathbf{\bar{X}}$ and $\mathbf{\bar{Y}}$ must be \textit{d}-separated by $\mathbf{\bar{Z}}$ on $\mathit{AGG}_{\mathcal{M}B}$.
$\blacksquare$
\end{proof}

Theorem~\ref{thm:relational-d-sep} proves that \textit{d}-separation on abstract ground graphs is a sound and complete solution to identifying independence in relational models.
Theorem~\ref{thm:d-sep} also implies that the set of conditional independence facts derived from abstract ground graphs is exactly the same as the set of conditional independencies that all distributions represented by all possible ground graphs have in common.

\begin{mycorollary}
\label{cor:relational-d-sep}
$\mathbf{\bar{X}}$ and $\mathbf{\bar{Y}}$ are \textit{d}-connected given $\mathbf{\bar{Z}}$ on the abstract ground graph $\mathit{AGG}_{\mathcal{M}B}$ if and only if there exists a skeleton $\sigma\in\Sigma_\mathcal{S}$ and an item instance $b\in \sigma(B)$ such that $\mathbf{X}\vert_b$ and $\mathbf{Y}\vert_b$ are \textit{d}-connected given $\mathbf{Z}\vert_b$ in ground graph $\mathit{GG}_{\mathcal{M}\sigma}$.
\end{mycorollary}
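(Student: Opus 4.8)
The plan is to obtain this corollary immediately from Theorem~\ref{thm:relational-d-sep} by taking the contrapositive of the biconditional it establishes. The only background fact I need is that, in any directed acyclic graph, for three disjoint sets of nodes the conditions ``$d$-separated'' and ``$d$-connected'' are exact negations of one another — this is part of the standard definition of $d$-separation, and it applies here because $\mathit{AGG}_{\mathcal{M}B}$ is a DAG (Theorem~\ref{thm:agg-dags}) and every $\mathit{GG}_{\mathcal{M}\sigma}$ is a DAG (Lemma~\ref{lemma:rolled-out-dags}). So ``$\mathbf{\bar X}$ and $\mathbf{\bar Y}$ are $d$-connected given $\mathbf{\bar Z}$ on $\mathit{AGG}_{\mathcal{M}B}$'' is literally the negation of ``$\mathbf{\bar X}$ and $\mathbf{\bar Y}$ are $d$-separated by $\mathbf{\bar Z}$ on $\mathit{AGG}_{\mathcal{M}B}$'', and similarly for the ground-graph statements.

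First I would invoke Theorem~\ref{thm:relational-d-sep}: $\mathbf{\bar X}$ and $\mathbf{\bar Y}$ are $d$-separated by $\mathbf{\bar Z}$ on $\mathit{AGG}_{\mathcal{M}B}$ if and only if, for all skeletons $\sigma\in\Sigma_\mathcal{S}$ and all $b\in\sigma(B)$, $\mathbf{X}\vert_b$ and $\mathbf{Y}\vert_b$ are $d$-separated by $\mathbf{Z}\vert_b$ in $\mathit{GG}_{\mathcal{M}\sigma}$. Since a biconditional $A\Leftrightarrow B$ is equivalent to $\lnot A\Leftrightarrow\lnot B$, I would negate both sides. By the complementarity noted above, the negated left-hand side is exactly ``$\mathbf{\bar X}$ and $\mathbf{\bar Y}$ are $d$-connected given $\mathbf{\bar Z}$ on $\mathit{AGG}_{\mathcal{M}B}$''. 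For the right-hand side, pushing the negation through the two universal quantifiers converts ``for all $\sigma$ and all $b$, $d$-separated'' into ``there exist $\sigma\in\Sigma_\mathcal{S}$ and $b\in\sigma(B)$ with $\mathbf{X}\vert_b$ and $\mathbf{Y}\vert_b$ not $d$-separated by $\mathbf{Z}\vert_b$'', which, again by complementarity, is ``there exist a skeleton $\sigma$ and an item instance $b$ such that $\mathbf{X}\vert_b$ and $\mathbf{Y}\vert_b$ are $d$-connected given $\mathbf{Z}\vert_b$ in $\mathit{GG}_{\mathcal{M}\sigma}$''. Equating the two simplified sides gives the stated corollary.

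The argument is pure bookkeeping, so there is no substantive obstacle; the only points requiring care are (i) that both universal quantifiers — over the skeleton \emph{and} over the item instance — become existential under negation, so the conclusion reads ``there exists a skeleton $\sigma$ and an item instance $b$'' exactly as written, and (ii) the implicit but routine use of the fact that $d$-separation and $d$-connection are exact negations in the DAG setting guaranteed by Theorem~\ref{thm:agg-dags} and Lemma~\ref{lemma:rolled-out-dags}.
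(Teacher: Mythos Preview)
Your proposal is correct and matches the paper's approach exactly: the paper simply states that Corollary~\ref{cor:relational-d-sep} ``is logically equivalent to Theorem~\ref{thm:relational-d-sep}'' without further elaboration, and your argument spells out precisely that logical equivalence via contraposition and the $\forall$-to-$\exists$ quantifier flip.
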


Corollary~\ref{cor:relational-d-sep} is logically equivalent to Theorem~\ref{thm:relational-d-sep}.
While a simple restatement of the previous theorem, it is important to emphasize that relational \textit{d}-separation claims \textit{d}-connection if and only if there exists a ground graph for which $X\vert_b$ and $Y\vert_b$ are \textit{d}-connected given $\mathbf{Z}\vert_b$.
This implies that there may be some ground graphs for which $X\vert_b$ and $Y\vert_b$ are \textit{d}-separated by $\mathbf{Z}\vert_b$, but the abstract ground graph still claims \textit{d}-connection.
This may happen if the relational skeleton does not enable certain underlying relational connections.
For example, if the relational skeleton in Figure~\ref{fig:org_example_data}\subref{fig:org_skeleton} included only products that were developed by a single employee, then there would be no relationally \textit{d}-connecting path in the example in Section~\ref{sec:example}.
If this is a fundamental property of the domain (e.g., there are products developed by a single employee and products developed by multiple employees), then revising the underlying schema to include two different classes of products would yield a more accurate model implying a larger set of conditional independencies.

Additionally, we can show that relational \textit{d}-separation is equivalent to the Markov condition on relational models.

\begin{definition}[Relational Markov condition]
\label{def:rel-markov-cond}
Let $X$ be a relational variable for perspective $B\in\mathcal{E}\cup\mathcal{R}$ defined over relational schema $\mathcal{S}$.
Let $\mathit{nd}(X)$ be the non-descendant variables of $X$, and let $\mathit{pa}(X)$ be the set of parent variables of $X$.
Then, for relational model $\mathcal{M}_\Theta$, $P\big(X\ \vert\ \mathit{nd}(X), \mathit{pa}(X)\big) = P\big(X\ \vert\ \mathit{pa}(X)\big)$ if and only if $\forall x\!\in\! X\vert_b\ P\big(x\ \vert\ \mathit{nd}(x), \mathit{pa}(x)\big) = P\big(x\ \vert\ \mathit{pa}(x)\big)$ in parameterized ground graph $\mathit{GG}_{\mathcal{M}_\Theta\sigma}$ for all skeletons $\sigma\in\Sigma_{\mathcal{S}}$ and for all $b\in \sigma(B)$.
\end{definition}

In other words, a relational variable $X$ is independent of its non-descendants given its parents if and only if, for all possible parameterized ground graphs, the Markov condition holds for all instances of $X$.
For Bayesian networks, the Markov condition is equivalent to \textit{d}-separation \citep{neapolitan2004learning}.
Because parameterized ground graphs are Bayesian networks (implied by Lemma~\ref{lemma:rolled-out-dags}) and relational \textit{d}-separation on abstract ground graphs is sound and complete (by Theorem~\ref{thm:relational-d-sep}), we can conclude that relational \textit{d}-separation is equivalent to the relational Markov condition.

\section{Na\"{i}ve Relational \textit{d}-Separation Is Frequently Incorrect}
\label{sec:naive_rds}

If the rules of \textit{d}-separation for Bayesian networks were applied directly to the structure of relational models, how frequently would the conditional independence conclusions be correct?
In this section, we evaluate the necessity of our approach---relational \textit{d}-separation executed on abstract ground graphs.
We empirically compare the consistency of a na\"{i}ve approach against our sound and complete solution over a large space of synthetic causal models.
To promote a fair comparison, we restrict the space of relational models to those with underlying dependencies that could feasibly be represented and recovered by a na\"{i}ve approach.
We describe this space of models, present a reasonable approach for applying traditional \textit{d}-separation to the structure of relational models, and quantify its decrease in expressive power and accuracy.

Consider the following limited definition of relational paths, which itself limits the space of models and conditional independence queries.
A \textit{simple relational path} $P=[I_j,\dots, I_k]$ for relational schema $\mathcal{S}$ is a relational path such that $I_j\neq\cdots\neq I_k$.
The sole difference between relational paths (Definition~\ref{def:rel-path}) and simple relational paths is that no item class may appear more than once along the latter.
This yields paths drawn directly from a schema diagram.
For the example in Figure~\ref{fig:org_example_data}\subref{fig:org_schema}, $[$\textsc{Employee}, \textsc{Develops}, \textsc{Product}$]$ is simple whereas $[$\textsc{Employee}, \textsc{Develops}, \textsc{Product}, \textsc{Develops}, \textsc{Employee}$]$ is not.
 
Additionally, we define \textit{simple relational schemas} such that, for any two item classes $I_j, I_k\!\in\!\mathcal{E}\cup\mathcal{R}$, there exists at most one simple relational path between them (i.e., no cycles occur in the schema diagram).
The example in Figure~\ref{fig:org_example_data}\subref{fig:org_schema} is a simple relational schema.
The restriction to simple relational paths and schemas yields similar definitions for \textit{simple relational variables}, \textit{simple relational dependencies}, and \textit{simple relational models}.
The relational model in Figure~\ref{fig:org_example_model}\subref{fig:org_model} is simple because it includes only simple relational dependencies.

A first approximation to relational \textit{d}-separation would be to apply the rules of traditional \textit{d}-separation directly to the graphical representation of relational models.
This is equivalent to applying \textit{d}-separation to the class dependency graph $G_\mathcal{M}$ (see Definition~\ref{def:class-dependency-graph}) of relational model $\mathcal{M}$.
The class dependency graph for the model in Figure~\ref{fig:org_example_model}\subref{fig:org_model} is shown in Figure~\ref{fig:org_simple_rep}\subref{fig:org_class_dep_graph}.
Note that the class dependency graph ignores path designators on dependencies, does not include all implications of dependencies among arbitrary relational variables, and does not represent intersection variables.

Although the class dependency graph is independent of perspectives, testing any conditional independence fact \textit{requires} choosing a perspective.
All relational variables must have a common base item class; otherwise, no method can produce a single consistent, propositional table from a relational database.
For example, consider the construction of a table describing employees with columns for their salary, the success of products they develop, and the revenue of the business units they operate under.
This procedure requires joining the instances of three relational variables ($[$\textsc{Employee}$]$.\textit{Salary}, $[$\textsc{Employee}, \textsc{Develops}, \textsc{Product}$]$.\textit{Success}, and $[$\textsc{Employee}, \textsc{Develops}, \textsc{Product}, \textsc{Funds}, \textsc{Business-Unit}$]$.\textit{Revenue}) for every common base item instance, from Paul to Thomas.
See, for example, the resulting propositional table for these relational variables and an example query in Table~\ref{tab:employee-propositional} and Figure~\ref{fig:employee-query}, respectively.
An individual relational variable requires joining the item classes within its relational path, but combining a collection of relational variables requires joining on their common base item class.
Fortunately, given a perspective and the space of simple relational schemas and models, a class dependency graph is equivalent to a \textit{simple abstract ground graph}.

\begin{figure}[t]
\centering	
\subfloat[]{
\includegraphics[width=90mm]{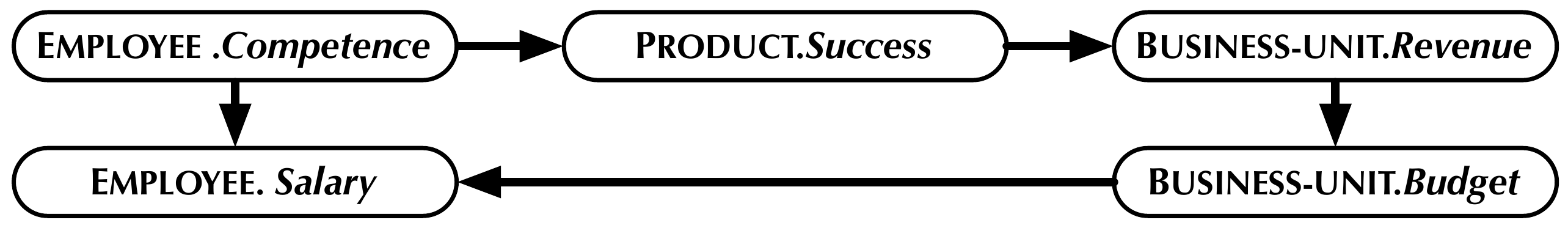}
\label{fig:org_class_dep_graph}}\\
\subfloat[]{
\includegraphics[width=150mm]{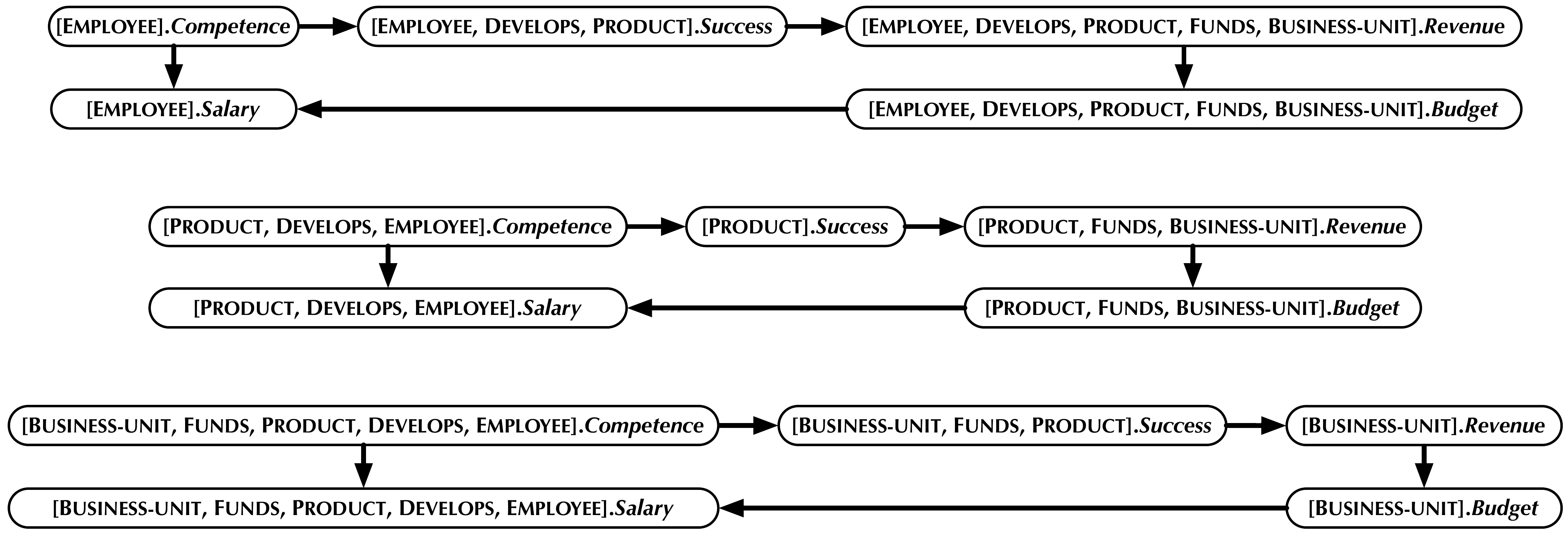}
\label{fig:org_simple_abstract_gg}}
\caption{For the model in Figure~\ref{fig:org_example_model}\protect\subref{fig:org_model}, (a) the class dependency graph and (b) three simple abstract ground graphs for the \textsc{Employee}, \textsc{Product}, and \textsc{Business-Unit} perspectives.}
\label{fig:org_simple_rep}
\end{figure}

\begin{definition}[Simple abstract ground graph]
\label{def:simple abstract-gg}
For simple relational model $\ \mathcal{M}$ = $(\mathcal{S}, \mathcal{D})$ and perspective $B\in\mathcal{E}\cup\mathcal{R}$, the \textit{simple abstract ground graph} $AGG^s_{\mathcal{M}B}$ is the directed acyclic graph $(V, E)$ that abstracts the dependencies $\mathcal{D}$ among simple relational variables.  The nodes consist of simple relational variables $\big\lbrace [B,\dots,I_j].X\ \vert\ B\neq\cdots\neq I_j\big\rbrace$, and the edges connect those nodes $\big\lbrace [B,\dots, I_k].Y\rightarrow [B,\dots, I_j].X\ \vert\ [I_j,\dots,I_k].Y\rightarrow [I_j].X\in\mathcal{D}\ \land\ [B,\dots,I_k]\in extend([B,\dots,I_j], [I_j,\dots,I_k])\ \land\ [B,\dots, I_k].Y, [B,\dots, I_j].X\in V \big\rbrace$.
\end{definition}

Simple abstract ground graphs only include nodes for simple relational variables and necessarily exclude intersection variables.
Lemma~\ref{lemma:overlap}---which characterizes the intersection between a pair of relational paths---only applies to pairs of \textit{simple} relational paths if the schema contains cycles, which is not the case for simple relational schemas by definition.
As a result, the simple abstract ground graph for a given schema and model contains the same number of nodes and edges, regardless of perspective; the nodes simply have path designators redefined from the given perspective.
Figure~\ref{fig:org_simple_rep}\subref{fig:org_simple_abstract_gg} shows three simple abstract ground graphs from distinct perspectives for the model in Figure~\ref{fig:org_example_model}\subref{fig:org_model}.
As noted above, simple abstract ground graphs are qualitatively the same as the class dependency graph, but they enable answering relational \textit{d}-separation queries, which requires a common perspective in order to be semantically meaningful.

The na\"{i}ve approach to relational \textit{d}-separation derives conditional independence facts from simple abstract ground graphs (Definition~\ref{def:simple abstract-gg}).
The sound and complete approach described in this paper applies \textit{d}-separation---with input variable sets augmented by their intersection variables---to ``regular" abstract ground graphs, as described by Definition~\ref{def:abstract-gg}.
Clearly, if \textit{d}-separation on a simple abstract ground graph claims that $\mathbf{X}$ is \textit{d}-separated from $\mathbf{Y}$ given $\mathbf{Z}$, then \textit{d}-separation on the regular abstract ground graph yields the same conclusion if and only if there are no \textit{d}-connecting paths in the regular abstract ground graph.
Either $\mathbf{X}$ and $\mathbf{Y}$ can be \textit{d}-separated by a set of simple relational variables $\mathbf{Z}$, or they require non-simple relational variables---those involving relational paths with repeated item classes.\footnote{The theoretical conditions under which equivalence occurs are sufficiently complex that they provide little utility as they essentially require reconstructing the regular abstract ground graph and checking a potentially exponential number of dependency paths.}

To evaluate the necessity of regular abstract ground graphs (i.e., the additional paths involving non-simple relational variables and intersection variables), we compared the frequency of equivalence between the conclusions of \textit{d}-separation on simple and regular abstract ground graphs.
The two approaches are only equivalent if a minimal separating set consists entirely of simple relational variables.\footnotemark
\footnotetext{If $\mathbf{X}$ and $\mathbf{Y}$ are \textit{d}-separated given $\mathbf{Z}$, then $\mathbf{Z}$ is a \textit{separating set} for $\mathbf{X}$ and $\mathbf{Y}$.  A separating set $\mathbf{Z}$ is \textit{minimal} if there is no proper subset of $\mathbf{Z}$ that is also a separating set.}

Thus, for an arbitrary pair of relational variables $X$ and $Y$ with a common perspective, we test the following on regular abstract ground graphs:

\begin{enumerate}
\item Is either $X$ or $Y$ a non-simple relational variable?
\item Are $X$ and $Y$ marginally independent?
\item Does a minimal separating set $\mathbf{Z}$ \textit{d}-separate $X$ and $Y\!$, where $\mathbf{Z}$ consists solely of simple relational variables?
\item Is there any separating set $\mathbf{Z}$ that \textit{d}-separates $X$ and $Y$?
\end{enumerate}

If the answer to (1) is yes, then the na\"{i}ve approach cannot apply since either $X$ or $Y$ is undefined for the simple abstract ground graph.
If the answer to (2) is yes, then there is equivalence; this is a trivial case because there are no \textit{d}-connecting paths for $\mathbf{Z} = \emptyset$.
If the answer to (3) is yes, then there is a minimal separating set $\mathbf{Z}$ consisting of only simple relational variables.
In this case, the simple abstract ground graph is sufficient, and we also have equivalence.
If the answer to (4) is no, then no separating set $\mathbf{Z}$, simple or otherwise, renders $X$ and $Y$ conditionally independent.

We randomly generated simple relational schemas and models for 100 trials for each setting using the following parameters:

\begin{itemize}
\item Number of entity classes, ranging from 1 to 4.
\item Number of relationship classes, fixed at one less than the number of entities, ensuring simple, connected relational schemas.  Relationship cardinalities are chosen uniformly at random.
\item Number of attributes for each entity and relationship class, randomly drawn from a shifted Poisson distribution with $\lambda=1.0$ ($\sim \mathit{Pois}(1.0)+1$).
\item Number of dependencies in the model, ranging from 1 to 10.
\end{itemize}

Then, for all pairs of relational variables with a common perspective limited by a hop threshold of $h=4$, we ran the aforementioned tests against the regular abstract ground graph, limiting its relational variables by a hop threshold of $h=8$ (a sufficient hop threshold for soundness and completeness---see Appendix~\ref{sec:hop_thresholds}).

	\begin{figure}[t]
	\centering
	\includegraphics[width=150mm]{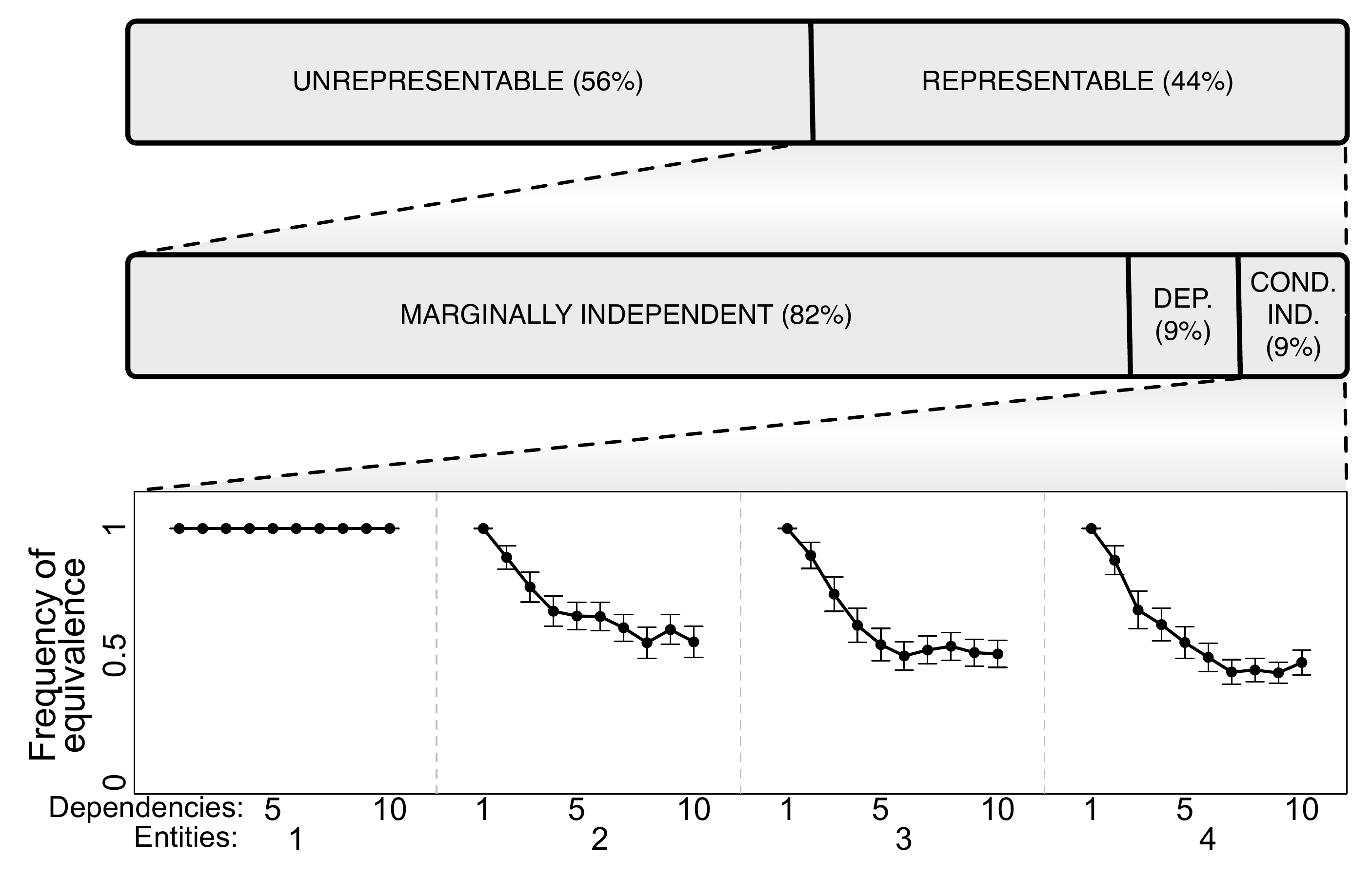}
	\caption{The majority (56\%) of generated relational \textit{d}-separation queries are not representable with the na\"{i}ve approach.  Of the 44\% that are representable (involving only simple relational variables), 82\% are marginally independent and 9\% are dependent.  Pairs of relational variables in the remaining 9\% are conditionally independent given a non-empty separating set ($X\Perp Y\ \vert\ \mathbf{Z}$, where $\mathbf{Z}\neq\emptyset$).  We test whether the \textit{conditioning set} consists solely of simple relational variables.  If so, then the na\"{i}ve approach to relational \textit{d}-separation is equivalent to \textit{d}-separation on fully specified abstract ground graphs.  This graph plots the frequency of equivalence across schemas with increasing numbers of entity classes (1--4) for varying numbers of dependencies (1--10).  For schemas with more than one entity class, the frequency of equivalence decreases as the number of dependencies increases. Shown with 95\% confidence intervals.}
 	\label{fig:rds-equivalence}
	\end{figure}

This procedure generated a total of almost 3.6 million pairs of relational variables to test.
Approximately 56\% included a non-simple relational variable; the na\"{i}ve approach cannot be used to derive a conditional independence statement in these cases, requiring the full abstract ground graph in order to represent these variables.
Of the remaining 44\% (roughly 1.6 million), 82\% were marginally independent, and 9\% were not conditionally independent given any conditioning set $\mathbf{Z}$.
Then, of the remaining 9\% (roughly 145,000), we can test the frequency of equivalence for conditional independence facts with non-empty separating sets---the proportion of cases for which only simple relational variables are required in a minimal separating set $\mathbf{Z}$.

Figure~\ref{fig:rds-equivalence} shows this frequency for schemas of increasing numbers of entity classes (1--4) for varying numbers of dependencies in the causal model (1--10).
Since relational schemas with a single entity class and no relationships describe propositional data, the simple abstract ground graph is equivalent to the full abstract ground graph, which is also equivalent to the model itself.
In this case, the na\"{i}ve approach is always equivalent because it is exactly \textit{d}-separation on Bayesian networks.
For truly relational schemas (with more than one entity class and at least one relationship class), the frequency of equivalence decreases as the number of dependencies in the model increases.
Additionally, the frequency of equivalence decreases more as the number of entities in the schema increases.
For example, the frequency of equivalence for nine dependencies is 60.3\% for two entities, 51.2\% for three entities, and 43.2\% for four entities.

We also learned statistical models that predict the number of equivalent and non-equivalent statements in order to identify key factors that affect the frequency of equivalence. 
We found that the number of dependencies and size of the relational model (regulated by the number of entities and \textsc{many} cardinalities) dictate the equivalence.
As a relational model deviates from a Bayesian network, we should expect more \textit{d}-connecting paths in the regular but not simple abstract ground graph.
This property also depends on the specific combination of dependencies in the model.
Appendix~\ref{sec:appendix_condindequiv_prediction} presents details of this analysis.

This experiment suggests that applying traditional \textit{d}-separation directly to a relational model structure will frequently derive incorrect conditional independence facts.
Additionally, there is a large class of conditional independence queries involving non-simple variables for which such an approach is undefined.
These results indicate that fully specifying abstract ground graphs and applying \textit{d}-separation augmented with intersection variables (as described in Section~\ref{sec:rds}) is critical for accurately deriving most conditional independence facts from relational models.
%This experiment also tested \textit{theoretical} differences without respect to an underlying distribution; the experiment in Section~\ref{sec:empirical-validity} compares how closely \textit{d}-separation on abstract ground graphs match the conclusions of conventional statistical tests on real data.

\section{Experiments}
\label{sec:experiments}

To complement the theoretical results, we present three experiments on synthetic data.
The primary goal of these empirical results is to demonstrate the feasibility of applying relational \textit{d}-separation in practice.
The experiment in Section~\ref{sec:agg-size} describes the factors that influence the size of abstract ground graphs and thus the computational complexity of relational \textit{d}-separation.
The experiment in Section~\ref{sec:sepset-size} evaluates the growth rate of separating sets produced by relational \textit{d}-separation as abstract ground graphs become large.
The results indicate that minimal separating sets grow much more slowly than abstract ground graphs.
The experiment in Section~\ref{sec:empirical-validity} tests how the expectations of the relational \textit{d}-separation theory match statistical conclusions on simulated data.
As expected from the proofs of correctness in Section~\ref{sec:proof-of-correctness}, the results indicate a close match, aside from Type I errors and certain biases of conventional statistical tests on relational data.

\subsection{Abstract Ground Graph Size}
\label{sec:agg-size}

Relational \textit{d}-separation is executed on abstract ground graphs.
Consequently, it is important to quantify the size of abstract ground graphs and identify which factors influence their size.
We randomly generated relational schemas and models for 1,000 trials of each setting using the following parameters:

\begin{itemize}
\item Number of entity classes, ranging from 1 to 4.
\item Number of relationship classes, ranging from 0 to 4.  The schema is guaranteed to be fully connected and includes at most a single relationship between a pair of entities.  Relationship cardinalities are selected uniformly at random.
\item Number of attributes for each entity and relationship class, randomly drawn from a shifted Poisson distribution with $\lambda=1.0$ ($\sim \mathit{Pois}(1.0)+1$).
\item Number of dependencies in the model, ranging from 1 to 15.
\end{itemize}

This procedure generated a total of 450,000 abstract ground graphs, which included every perspective (all entity and relationship classes) for each experimental combination.
We measure size as the number of nodes and edges in a given abstract ground graph.
Figure~\ref{fig:agg-sizes}\subref{fig:agg-sizes-cards} depicts how the size of abstract ground graphs varies with respect to the number of \textsc{many} cardinalities in the schema (fixed for models with 10 dependencies), and Figure~\ref{fig:agg-sizes}\subref{fig:agg-sizes-deps} shows how it varies with respect to the number of dependencies in the model.
Recall that for a single entity, abstract ground graphs are equivalent to Bayesian networks.

	\begin{figure}[t]
	\centering
	\subfloat[]{
	\includegraphics[width=70mm]{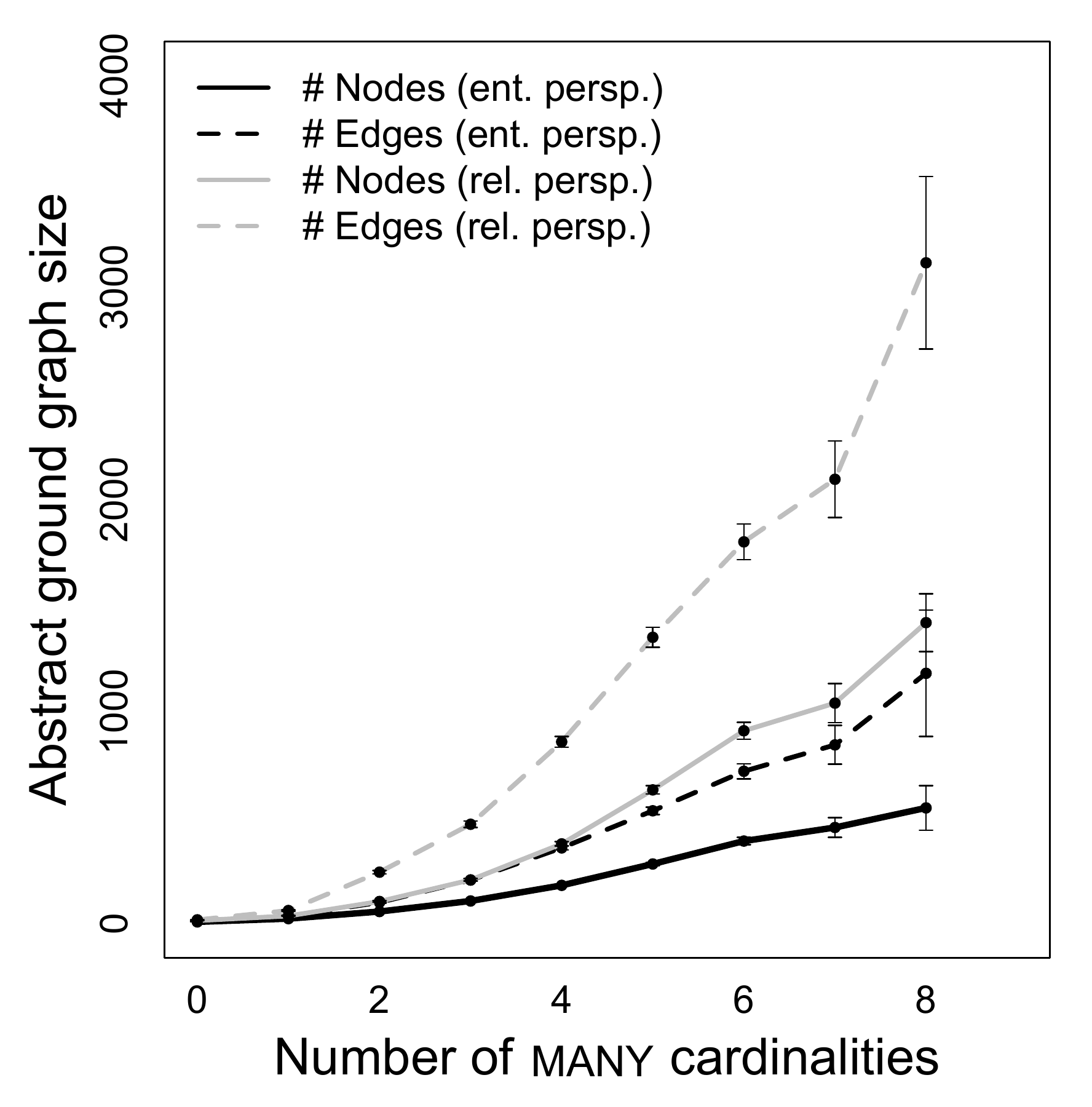}
	\label{fig:agg-sizes-cards}}
	\subfloat[]{
	\includegraphics[width=70mm]{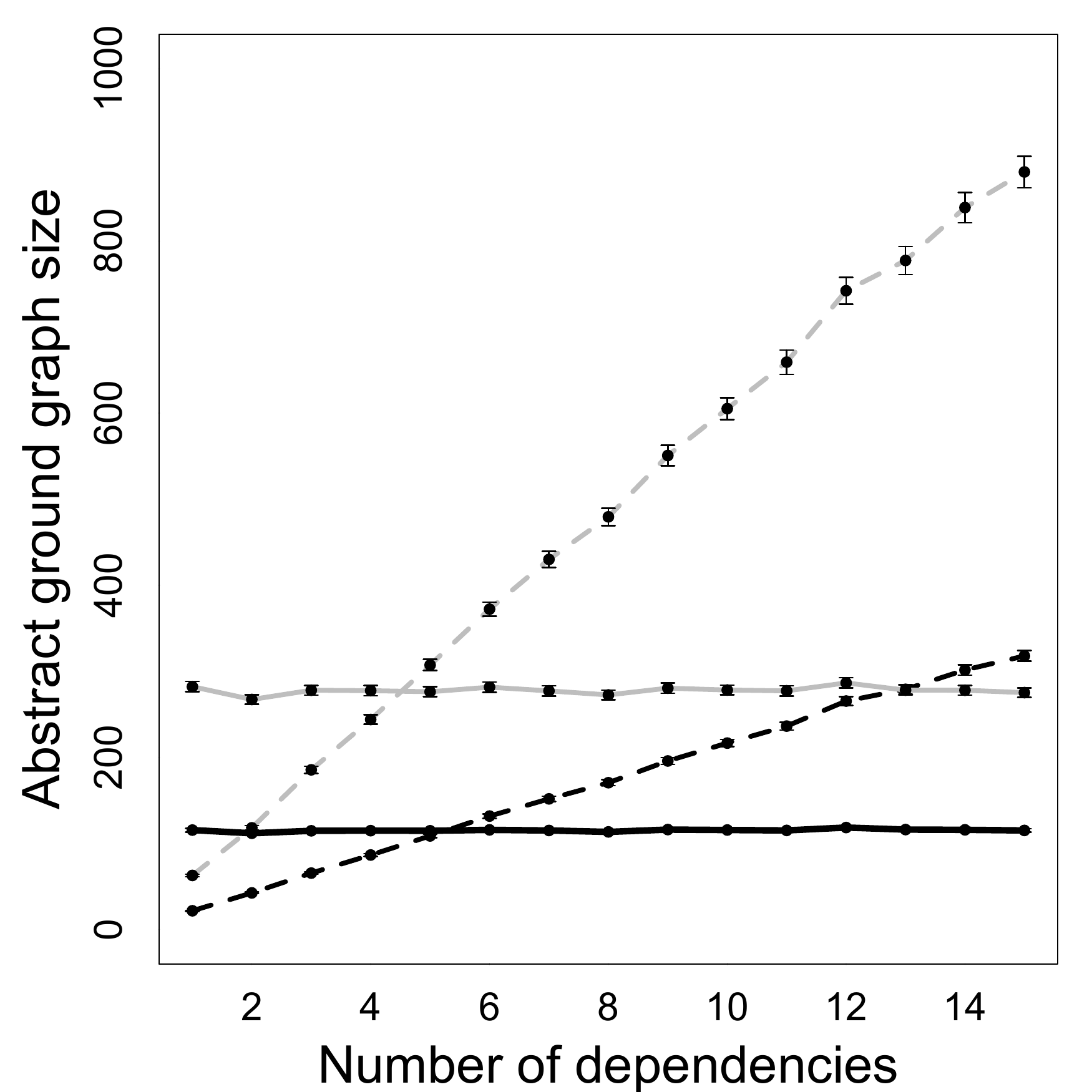}
	\label{fig:agg-sizes-deps}}
	\caption{Variation of abstract ground graph size as (a) the number of \textsc{many} cardinalities in the schema increases (dependencies fixed at 10) and (b) the number of dependencies increases.  Shown with 95\% confidence intervals.}
	\label{fig:agg-sizes}
	\end{figure}

To determine the most influential factors of abstract ground graph size, we ran log-linear regression using independent variables that describe only the schema and model.
Detailed results are provided in Appendix~\ref{sec:appendix_aggsize_prediction}.
This analysis indicates that (1) as the number of entities, relationships, attributes, and \textsc{many} cardinalities increases, the number of nodes and edges grows at an exponential rate. 
(2) As the number of dependencies in the model increases, the number of edges increases linearly, but the number of nodes remains invariant. 
And (3) abstract ground graphs for relationship perspectives are larger than entity perspectives because more relational variables can be defined.

\subsection{Minimal Separating Set Size}
\label{sec:sepset-size}

Because abstract ground graphs can become large, one might expect that separating sets could also grow to impractical sizes.
Fortunately, relational \textit{d}-separation produces minimal separating sets that are empirically observed to be small.
We ran 1,000 trials of each setting using the following parameters:
\begin{itemize}
\item Number of entity classes, ranging from 1 to 4.
\item Number of relationship classes, fixed at one less than the number of entities.  Relationship cardinalities are selected uniformly at random.
\item Total number of attributes across entity and relationship classes, fixed at 10.
\item Number of dependencies in the model, ranging from 1 to 10.
\end{itemize}
For each relational model, we identified a single minimal separating set for up to 100 randomly chosen pairs of conditionally independent relational variables.
This procedure generated almost 2.5 million pairs of variables.

	\begin{figure}[t]
	\centering
	\includegraphics[width=150mm]{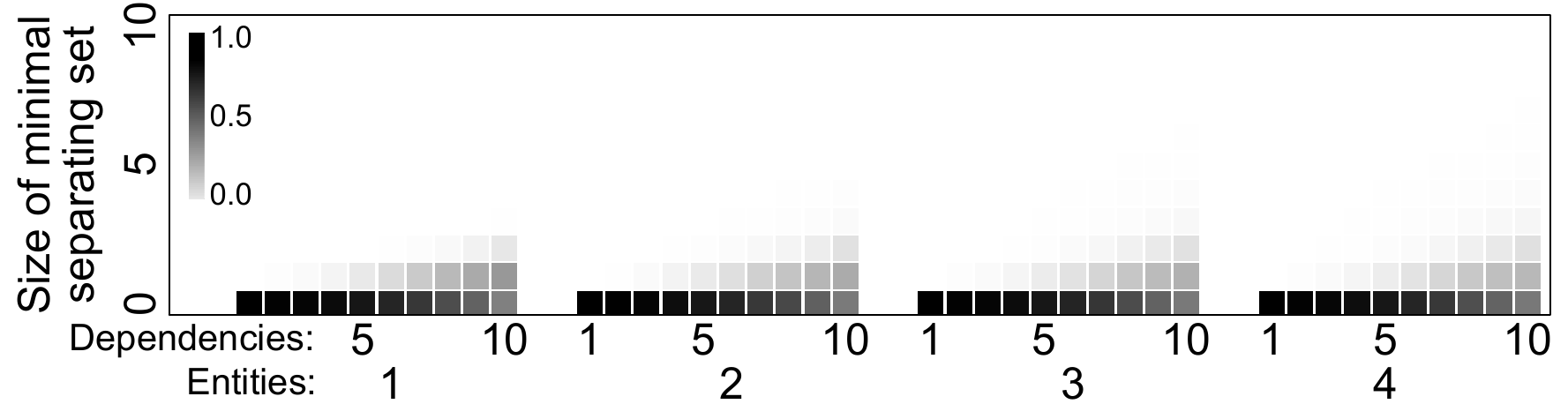}
	\caption{Minimal separating sets have reasonable sizes, growing only with the size of the schema and the model density.  In this experiment, $99.9\%$ of variable pairs have a minimal separating set with five or fewer variables.}
 	\label{fig:minimal-sep-sets}
	\end{figure}

To identify a minimal separating set between relational variables $X$ and $Y$\!, we modified Algorithm~4 devised by \citet{tianFMDS1998} by starting with all parents of $\bar{X}$ and $\bar{Y}$\!, the variables augmented with the intersection variables they subsume in the abstract ground graph.
While the discovered separating sets are \textit{minimal}, they are not necessarily of \textit{minimum} size because of the greedy process for removing conditioning variables from the separating set.
Figure~\ref{fig:minimal-sep-sets} shows the frequency of separating set size as both the number of entities and dependencies vary.	
In summation, roughly $83\%$ of the pairs are marginally independent (having empty separating sets), $13\%$ have separating sets of size one, and less than $0.1\%$ have separating sets with more than five variables.
The experimental results indicate that separating set size is strongly influenced by model density, primarily because the number of potential \textit{d}-connecting paths increases as the number of dependencies increases.

\subsection{Empirical Validity}
\label{sec:empirical-validity}

As a practical demonstration, we examined how the expectations of the relational \textit{d}-separation theory match the results of statistical tests on actual data.
We use a standard procedure for empirically measuring internal validity of algorithms.
In this case, we (1) randomly generate a relational schema, (2) randomly generate a relational model structure for that schema, (3) parameterize the model structure, (4) generate synthetic data according to the model structure and parameters, (5) randomly choose relational \textit{d}-separation queries according to the known ground-truth model, and (6) compare the model theory (i.e., the \textit{d}-separation conclusions) against corresponding statistical tests of conditional independence.

For steps (1) and (2), we randomly generated a relational schema $\mathcal{S}$ and relational model structure $\mathcal{M}$ for $\mathcal{S}$ for 100 trials using the following settings:
\begin{itemize}
\item Number of entity classes, ranging from 1 to 4.
\item Number of relationship classes, fixed at one less than the number of entities.  Relationship cardinalities are selected uniformly at random.
\item Number of attributes for each entity and relationship class, randomly drawn from a shifted Poisson distribution with $\lambda=1.0$ ($\sim \mathit{Pois}(1.0)+1$).
\item Number of dependencies in the model, fixed at 10.
\end{itemize}
\noindent Dependencies were selected greedily, choosing each one uniformly at random, subject to a maximum of 3 parent relational variables for each attribute $[I_j].X$ and enforcing acyclicity of the model structure.

For step (3), we parameterized relational models using simple additive linear equations with independent, normally distributed error and the average aggregate for relational variable instances.
For each attribute $[I_j].X$, we assign a conditional probability distribution
\begin{center}
$\displaystyle\sum_{[I_j,\dots,I_k].Y\in \mathit{parents}([I_j].X)}\ \big(\beta \cdot \mathit{avg}([I_j,\dots,I_k].Y)\big) + 0.1\epsilon$
\end{center}
\noindent if $[I_j].X$ has parents, where 
\vspace{-4mm}
\begin{center}
$\beta = \dfrac{0.9}{\vert \mathit{parents}([I_j].X)\vert}$
\end{center}
\noindent to provide equal contribution for each direct cause and $\epsilon \sim\! N(0,1)$ (error drawn from a standard normal distribution).
If $[I_j].X$ has no parents, its value is just drawn from $\epsilon$.

For step (4), we first generated a relational skeleton $\sigma$ (because the current model space assumes that attributes do not cause entity or relationship existence) and then populated each attribute value by drawing from its corresponding conditional distribution.
Each entity class is initialized to 1,000 instances.
Relationship instances were constructed via a latent homophily process, similar to the method used by \citet{shalizi2011homophily}.
Each entity instance received a single latent variable, marginally independent from all other variables.
The probability of any relationship instance was drawn from 
\vspace{-2mm}
\begin{center}
$\dfrac{e^{-\alpha d}}{1 + e^{-\alpha d}}\, ,$
\end{center}
\noindent the inverse logistic function, where $d=\vert L_{E_1} - L_{E_2} \vert$, the difference between the latent variables on the two entities, and $\alpha=10$, set as the decay parameter.
We also scaled the probabilities in order to produce an expected degree of five for each entity instance when the cardinality of the relationship is \textsc{many}.
Since the latent variables are marginally independent of all others, they are safely omitted from abstract ground graphs; their sole purpose is to generate relational skeletons that provide a greater probability of non-empty intersection variables as opposed to a random underlying link structure.
We generated 100 independent relational skeletons and attribute values (i.e., 100 instantiated relational databases) for each schema and model.

	\begin{figure}[t]
	\centering
	\includegraphics[width=145mm]{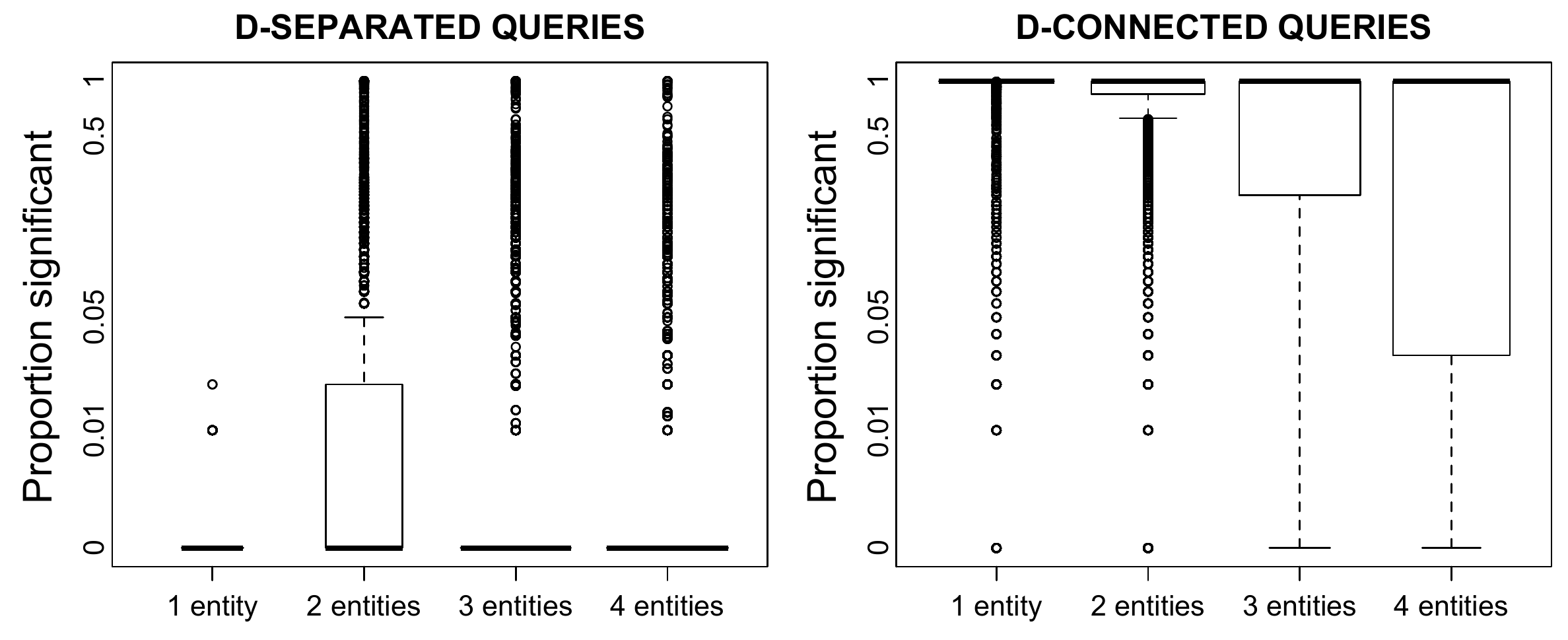}
	\caption{The proportion of significant trials for statistical tests of conditional independence on actual data. (Left) Evaluating queries that the model claims to be \textit{d}-separated produces low rates of significant effects. (Right) Queries that the model claims are \textit{d}-connected produce high rates of significant effects.  Note that the generative process yields denser models for 2 entity classes since the number of dependencies is fixed at 10.}
 	\label{fig:syntheticDataResults-propsig}
	\end{figure}

Step (5) randomly chooses up to 100 true and false relational \textit{d}-separation queries for a given model.\footnote{Depending on the properties of the schema and model, it may not always be feasible to identify 100 true or false \textit{d}-separation statements.}
Since we have the ground-truth model, we can evaluate with our approach (abstract ground graphs and relational \textit{d}-separation) whether these queries are true (\textit{d}-separated) or false (\textit{d}-connected).
Each query is of the form $X\Perp Y\ \vert\ \mathbf{Z}$ such that $X$ and $Y$ are single relational variables, $\mathbf{Z}$ is a set of relational variables, $Y$ has a singleton relational path (e.g., $[I_k].Y$), and all variables are from a common perspective.
These queries correspond to testing potential direct causal dependencies in the relational model, similar to the tests used by constraint-based methods for learning relational models, such as RPC \citep{maier2010rpc} and RCD \citep{maier2013rcd}.

Finally, step (6) tests for conditional independence for all such $\langle X, Y, \mathbf{Z}\rangle$ \textit{d}-separation queries using linear regression (because the models were parameterized linearly) for each of the 100 data instantiations.
Specifically, we tested the $t$-statistic for the coefficient of $\mathit{avg}(X)$ in the equation $Y = \beta_0 + \beta_1\cdot \mathit{avg}(X) + \sum_{Z_i\in\mathbf{Z}} \beta_i\cdot \mathit{avg}(Z_i)$.
For each query, we recorded two measurements:
\begin{itemize}
\item The average strength of effect, measured as squared partial correlation---the proportion of remaining variance of $Y$ explained by $X$ after conditioning on $\mathbf{Z}$
\item The proportion of trials for which each query was deemed significant at $\alpha=0.01$ adjusted using Bonferroni correction with the number of queries per trial
\end{itemize}

	\begin{figure}[t]
	\centering
	\includegraphics[width=145mm]{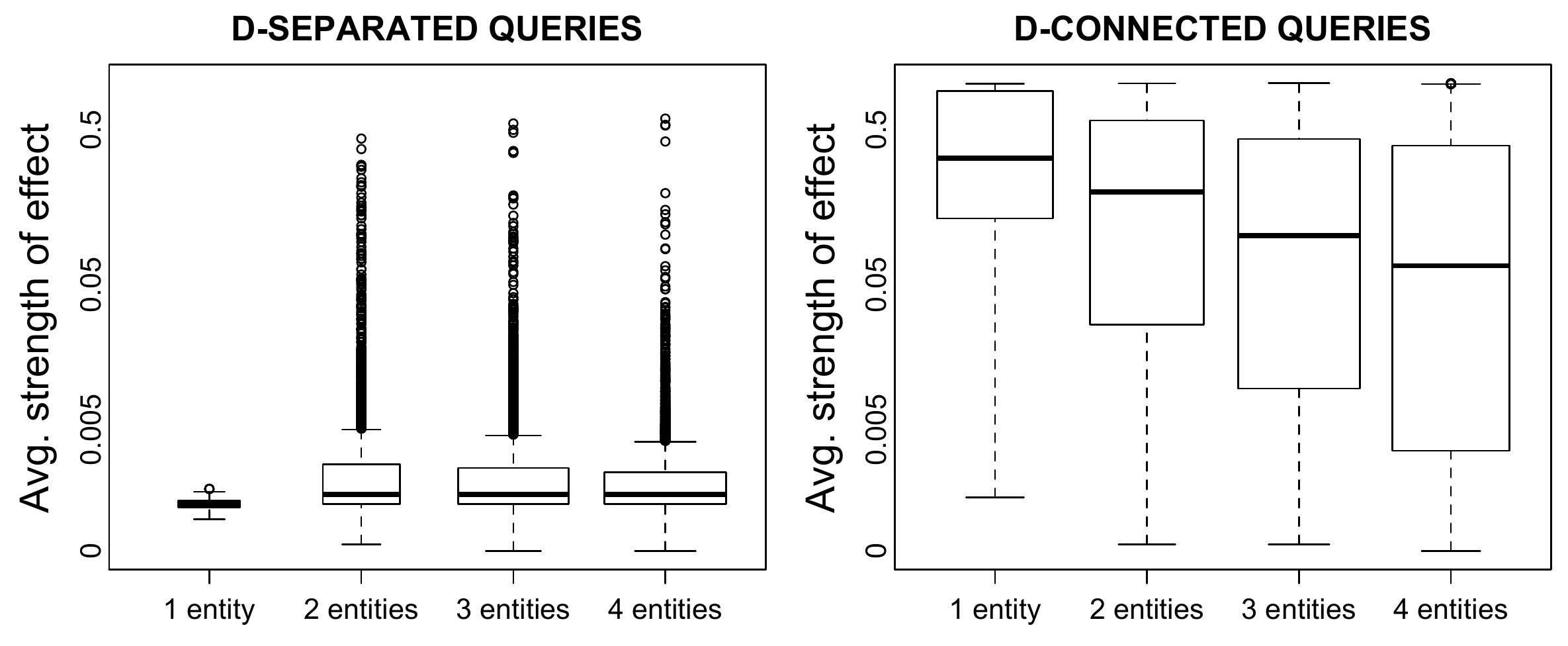}
	\caption{The average strength of effect of each query (measured as squared partial correlation) on actual data. (Left) Evaluating queries that the model claims to be \textit{d}-separated or conditionally independent produces low average effect sizes.  (Right) Queries that the model claims are \textit{d}-connected or dependent produce high average effect sizes.}
 	\label{fig:syntheticDataResults-soe}
	\end{figure}

\noindent Figure~\ref{fig:syntheticDataResults-propsig} shows the distribution of the proportion of significant trials for both true (left) and false queries (right) for varying numbers of entities.
Figure~\ref{fig:syntheticDataResults-soe} shows the corresponding average strength of effects for true (left) and false (right) queries.
The graph uses a standard box-and-whisker plot with values greater or less than 1.5 times the inner quartile range---the difference between the upper and lower quartiles---marked as outliers.

In the vast majority of cases, relational \textit{d}-separation is consistent with tests on actual data (i.e., most \textit{d}-separated queries have low effect sizes and are rarely deemed significant, whereas most \textit{d}-connected queries have high effect sizes and are mostly deemed significant).
For approximately 23,000 true queries, 14.9\% are significant in more than one trial, but most are insubstantive, with only 2.2\% having an average effect size greater than $0.01$.
There are three potential reasons why a \textit{d}-separation in theory may appear to be \textit{d}-connected in practice: (1) Type~I error; (2) high power given a large sample size; or (3) bias.
We have discovered that a small number of cases exhibit an interaction between aggregation and relational structure (i.e., degree or the cardinality of relational variable instances).
This interaction violates the identically distributed assumption of data instances, which produces a biased estimate of effect size for simple linear regression.
Linear regression does not account for these interaction effects, suggesting the need for more accurate statistical tests of conditional independence for relational data.

\section{Model Assumptions and Related Work}
\label{sec:rel_model_assumptions}

The class of relational models considered in Section~\ref{sec:rel_concepts}, while strictly more expressive than Bayesian networks, has limitations in its current formalization.
In this section, we highlight these assumptions and discuss how related and future work could address them.

\textit{Self-relationships}: 
Self-relationships are relationship classes that involve the same entity class more than once.
Relational schemas, as defined in Definition~\ref{def:rel-schema}, can express these types of relationships.
Only the definition of relational paths---which govern the space of variables and dependencies---requires unique entity class names within [E,R,E] triples (see condition (2) of Definition~\ref{def:rel-path}).
However, a common procedure in entity-relationship modeling is to map entity names to unique \textit{role indicators} within the context of a self-relationship, such as manager/subordinate, friend1/friend2, or citing-paper/cited-paper \citep{ramakrishnan2002dbms}.
This approach does not duplicate entity instances in the skeleton or ground graph; it only modifies their reference names within the relational path, requiring extended semantics for terminal sets.
Incorporating self-relationships is a straightforward extension, but for simplicity, we omit this additional layer of complexity.

\textit{Relational autocorrelation}:
In contrast to self-relationships, relational autocorrelation is a statistical dependency among the values of the same attribute class frequently found in relational data sets \citep{jensen-neville-icml02}.
Various models and learning algorithms have been developed to capture these types of dependencies, such as RDNs \citep{neville-jensen-jmlr07}, PBNs with an extended normal form \citep{schulte2012recursive}, and PRMs with dependencies that follow guaranteed acyclic relationships \citep{getoor:prm-ch-srl-book07}.
Our formalism, and equivalently PRMs (without guaranteed acyclic relationships), can represent a class of models for apparent autocorrelation.
Any relational dependency that yields a common cause for grounded variables of the same attribute class---essentially any dependency that crosses a \textsc{many} cardinality---produces relational autocorrelation.
The only autocorrelations not accounted for involve latent causes or those produced by temporal processes (e.g., feedback).

\textit{Context-specific independence}:
Context-specific independence (CSI) introduces independence of some variable and its parents, depending on the values of other variables.
This can be achieved within the specification of conditional probability distributions as if-then-else statements of logical conditions, such as in DAPER models \citep{heckerman-etal-introsrl07} or RPMs \citep{russell-norvig-aima}, encoded as regularities in conditional probability tables \citep{boutilier1996csi}, or with the recent graphical convention of gates \citep{minka2009gates}.
However, this introduces a notion of independence that cannot be inferred from model structure via traditional \textit{d}-separation.
In fact, \citet{boutilier1996csi} define an analogous approach based on \textit{d}-separation of a manipulated Bayesian network through deletion of vacuous dependencies given some context.
\citet{winn2012causality} extends the rules of \textit{d}-separation to reason over the additional paths and their collective state introduced by gates.
An alternative and more general approach to encoding CSIs is to develop an \textit{ontology} for which (in)dependencies hold depending on the type of entity or relationship.
PRMs with class hierarchies allow a hierarchy of entity types where the dependency structure can vary depending on the type \citep{getoor2000prmch}.
Rules of inheritance derived from object-oriented programming are used to define a coherent joint probability distribution.
This aligns with our formalism, as relational schemas can be viewed as an ontology defined at a particular level.
However, the semantics of \textit{d}-separation under inheritance has not been developed and is a profitable direction of future research.

\textit{Causes of entity and relationship existence}:
Without a generative model of relational skeletons, the relational models are not truly generative as the skeleton must be generated prior to the attributes.
However, the same issue occurs for Bayesian networks: Relational skeletons consist of disconnected entity instances, but the model does not specify how many instances to create.
There are relational models that attempt to learn and represent models with unknown numbers of entity instances, such as \textsc{Blog} \citep{milch-etal-ijcai05}, or uncertain relationship instances, such as PRMs with existence uncertainty \citep{getoor2002link}.
However, reasoning about the connection between conditional independence and existence is an open problem.
For relationship existence, selection bias (conditioning) occurs when testing marginal dependence between variables across a particular relationship \citep{maier2010rpc}.
For entity existence, some researchers argue that existence cannot be represented as a variable or predicate \citep{poole2007logical}, while others represent them as predicates \citep{laskey2008mebn}.
Therefore, we currently choose simple processes for generating skeletons, allowing us to focus on relational models of attributes and leaving structural causes and effects as future work.

\textit{Causal sufficiency}:
The relational models we consider assume that all common causes of observed variables are also observed and included in the model---an assumption commonly referred to as causal sufficiency.
Many researchers have developed methods for learning and inference by explicitly modeling unobserved variables---typically termed latent variable models \citep{bishop-lgm99}---or inferring the presence of latent entity classes---for example, latent group models \citep{neville-jensen-icdm05}.
However, only ancestral graphs and acyclic directed mixed graphs (ADMGs) do so in order to preserve an underlying conditional independence structure \citep{richardson2002ancestral, richardson2009admg}.
These models are paired with the theory of \textit{m}-separation, which is a generalization of \textit{d}-separation for Bayesian networks.
The generalization of ancestral graphs or ADMGs to relational models requires extensive theoretical exploration; therefore, we leave this as an important direction for future work.
Given that a primary motivation for \textit{d}-separation is to support constraint-based causal discovery, any relational extension to algorithms that learn causal models without assuming causal sufficiency, such as FCI \citep{spirtes-etal-uai95, zhang2008completeness}, its variants \citep{claassen-heskes-uai2011, colombo2012rfci}, and BCCD \citep{claassen-heskes-uai2012}, would require such an extension to \textit{m}-separation.

\textit{Temporal and cyclic models}:
Currently, the relational model is assumed to be acyclic (with respect to the class dependency graph), and consequently, atemporal.
Model-level cycles typically result from temporal processes for which grounding across time would yield an acyclic ground graph, such as in dynamic Bayesian networks \citep{dean1989model, murphy2002dynamic}.
However, cycles can also be due to temporal processes where the interaction occurs at a faster rate than measurement.
As a result, there has been considerable attention devoted to models that explicitly encode cyclic dependencies, such as the work by \citet{spirtes-etal-uai95-dcg}, \citet{pearl-decther-uai96}, \citet{richardson1996thesis}, \citet{dash-aistats2005}, \citet{schmidt-murphy-uai2009}, and \citet{hyttinen-et-al-jmlr2012}.
Our formalism currently prohibits any relational dependency that has a common attribute class for the cause and effect, regardless of the relational path constraint.
Relaxing this assumption would require either explicitly modeling temporal dynamics or enabling feedback loops.
We reserve temporal dynamics and feedback as another important avenue for future research.

Despite these assumptions, our current work extends the notion of \textit{d}-separation to a much more expressive class of models than Bayesian networks.
This work is a first step toward deriving conditional independencies from expressive classes of models.
Incorporating existence, ontologies, temporal dynamics and feedback, and latent variables into our model is important future work, especially in the context of representing and learning causal models of realistic domains.

\section{Discussion}
\label{sec:discussion}

In this paper, we extend the theory of \textit{d}-separation to graphical models of relational data.
We present the \textit{abstract ground graph}, a new representation that is sound and complete in its abstraction of dependencies across all possible ground graphs of a given relational model.
We formally define relational \textit{d}-separation and offer a sound, complete, and computationally efficient approach to deriving conditional independence facts from relational models by exploiting their abstract ground graphs.
We also show that relational \textit{d}-separation is equivalent to the Markov condition for relational models.
We provide an empirical analysis of relational \textit{d}-separation on synthetic data, demonstrating a close correspondence between the theory and statistical results in practice.
Finally, we evaluate how frequently the additional complexity of abstract ground graphs proves necessary for accurately deriving conditional independence facts.
	
The results of this paper imply potential flaws in the design and analysis of some real-world studies.
If researchers of social or economic systems choose inappropriate data and model representations, then their analyses may omit important classes of dependencies.
Specifically, our theory implies that choosing a propositional representation from an inherently relational domain may lead to serious errors.
An abstract ground graph from a given perspective defines the exact set of variables that must be included in any propositionalization.
The absence of any relational variable (including intersection variables) may unnecessarily violate causal sufficiency, which could result in the inference of a causal dependency where conditional independence was not detected.
Our work indicates that researchers should carefully consider how to represent their domains in order to accurately reason about conditional independence.

The abstract ground graph representation also presents an opportunity to derive new edge orientation rules for algorithms that learn the structure of relational models, such as RPC \citep{maier2010rpc} and RCD \citep{maier2013rcd}.
There are unique orientations of edges that are consistent with a given pattern of association that can only be recognized in an abstract ground graph.
For example, in contrast to bivariate IID data, it is simple to establish the direction of causality for bivariate relational data.  Consider the two bivariate, two-entity relational models depicted in Figure~\ref{fig:bivariate-relational-models}(a).
The first model implies that values of $X$ on $A$ entities are caused by the values of $Y$ on related $B$ entities.
The second model implies the opposite, that values of $Y$ on $B$ entities are caused by the values of $X$ on related $A$ entities.
For simplicity, we show the relationship class only as a dashed line between entity classes and omit it from relational paths.

\begin{figure}[t]
\centering
\includegraphics[width=150mm]{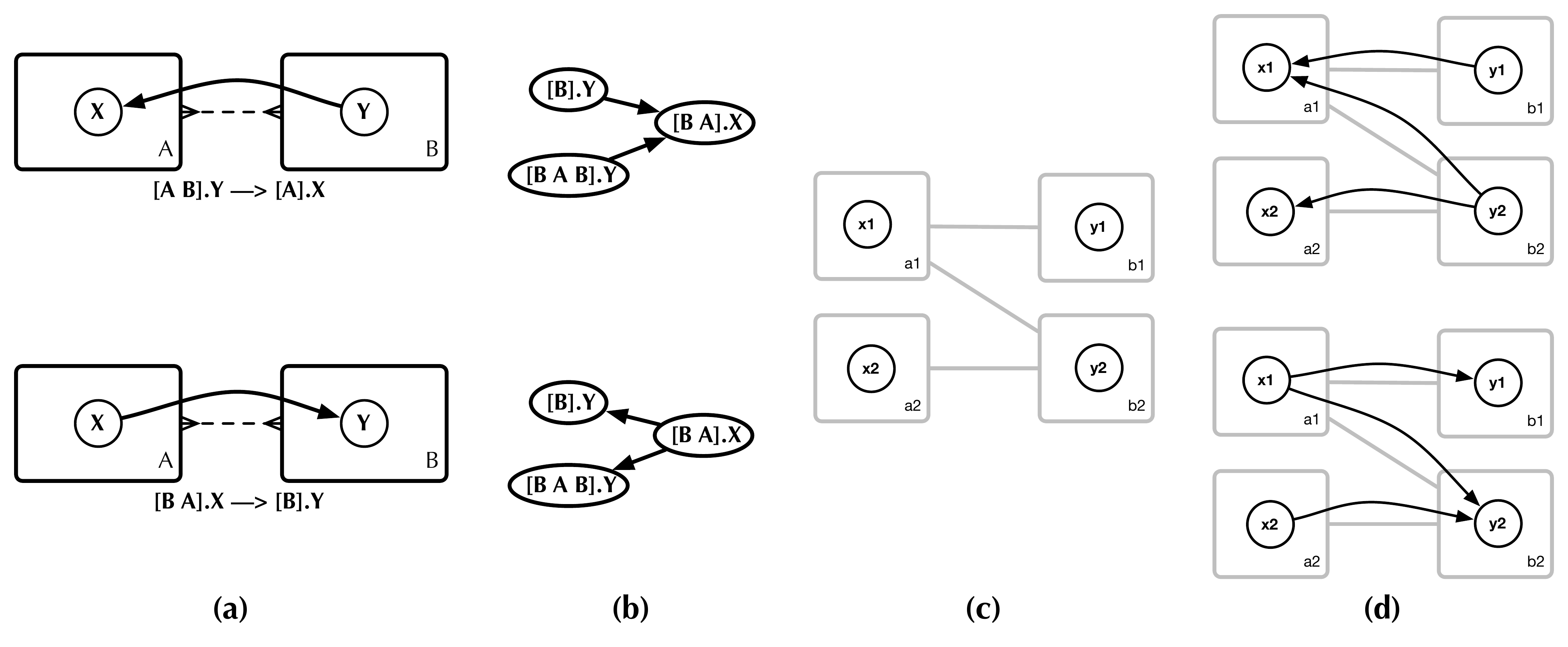}
\caption{(a) Two models of a bivariate relational domain with opposite directions of causality for a single dependency (relationship class omitted for simplicity); (b) a single dependency implies additional dependencies among arbitrary relational variables, shown here in a fragment of the abstract ground graph for $B$'s perspective; (c) an example relational skeleton; and (d) the ground graphs resulting from applying the relational model to the skeleton.}
\label{fig:bivariate-relational-models}
\end{figure}

Figure~\ref{fig:bivariate-relational-models}(b) illustrates a fragment of the abstract ground graph (for hop threshold $h$=4) that each of the two relational models implies.
As expected, the directions of the edges in the two abstract ground graphs are counterposed.
Both models produce observable statistical dependencies for relational variable pairs $\langle [B].Y, [B, A].X\rangle$ and $\langle [B, A].X, [B, A, B].Y\rangle$.
However, the relational variables $[B].Y$ and $[B, A, B].Y$ have different observable statistical dependencies: In the first model, they are marginally independent and conditionally dependent given $[B, A].X$, and in the second model, they are marginally dependent and conditionally independent given $[B, A].X$.
As a result, we can uniquely determine the direction of causality of the single dependence by exploiting relational structure.
(There is symmetric reasoning for relational variables from $A$'s perspective, and this result is also applicable to \textsc{one}-to-\textsc{many} data.)

To illustrate this fact more concretely, consider the small relational skeleton shown in Figure~\ref{fig:bivariate-relational-models}(c) and the ground graphs applied to this skeleton in Figure~\ref{fig:bivariate-relational-models}(d).
In the first ground graph, we have $y_1 \Perp y_2$ and $y_1 \Perpn y_2\ \vert x_1$, but in the second ground graph, we have $y_1 \Perpn y_2$ and $y_1 \Perp y_2\ \vert x_1$.
These opposing conditional independence relations uniquely determine the correct causal model.	
In prior work, we formalized this idea as a new rule, called relational bivariate orientation (RBO) \citep{maier2013rcd}, to orient dependencies in a constraint-based causal discovery algorithm.

Deriving and formalizing the implications of relational \textit{d}-separation is a main direction of future research.
Additionally, our experiments suggest that more accurate tests of conditional independence for relational data need to be developed, specifically tests that can address the interaction between relational structure and aggregation across terminal sets of relational variables.
This work has also focused solely on relational models of attributes; future work should consider models of relationship and entity existence to fully characterize generative models of relational structure.
The theory could also be extended to incorporate functional or deterministic dependencies, as \textit{D}-separation extends \textit{d}-separation for Bayesian networks.
Finally, the work on identifying causal effects in Bayesian networks could be extended to relational models.
This may similarly require an extension of \textit{do}-calculus to consider the space of relational interventions, which may include adding or removing entity or relationship instances, as well as fixing attribute values.

\section*{Acknowledgments}
The authors wish to thank Cindy Loiselle for her editing expertise.
The authors also thank the anonymous reviewers for their helpful comments, prompting us to create a much more readable, precise, correct, and useful paper.
This effort is supported by the Intelligence Advanced Research Project Agency (IARPA) via Department of Interior National Business Center Contract number D11PC20152, Air Force Research Lab under agreement number FA8750-09-2-0187, the National Science Foundation under grant number 0964094, and Science Applications International Corporation (SAIC) and DARPA under contract number P010089628. The U.S. Government is authorized to reproduce and distribute reprints for governmental purposes notwithstanding any copyright notation hereon. The views and conclusions contained herein are those of the authors and should not be interpreted as necessarily representing the official policies or endorsements, either expressed or implied, of IARPA, DoI/NBC, AFRL, NSF, SAIC, DARPA, or the U.S. Government.
Katerina Marazopoulou received scholarship support from the Greek State Scholarships Foundation.

\appendix
\section{Proofs}
\label{sec:appendix_proofs}

In this appendix, we provide detailed proofs for all previous lemmas, theorems, and corollaries.

\begin{figure}[t]
\centering	
\includegraphics[width=120mm]{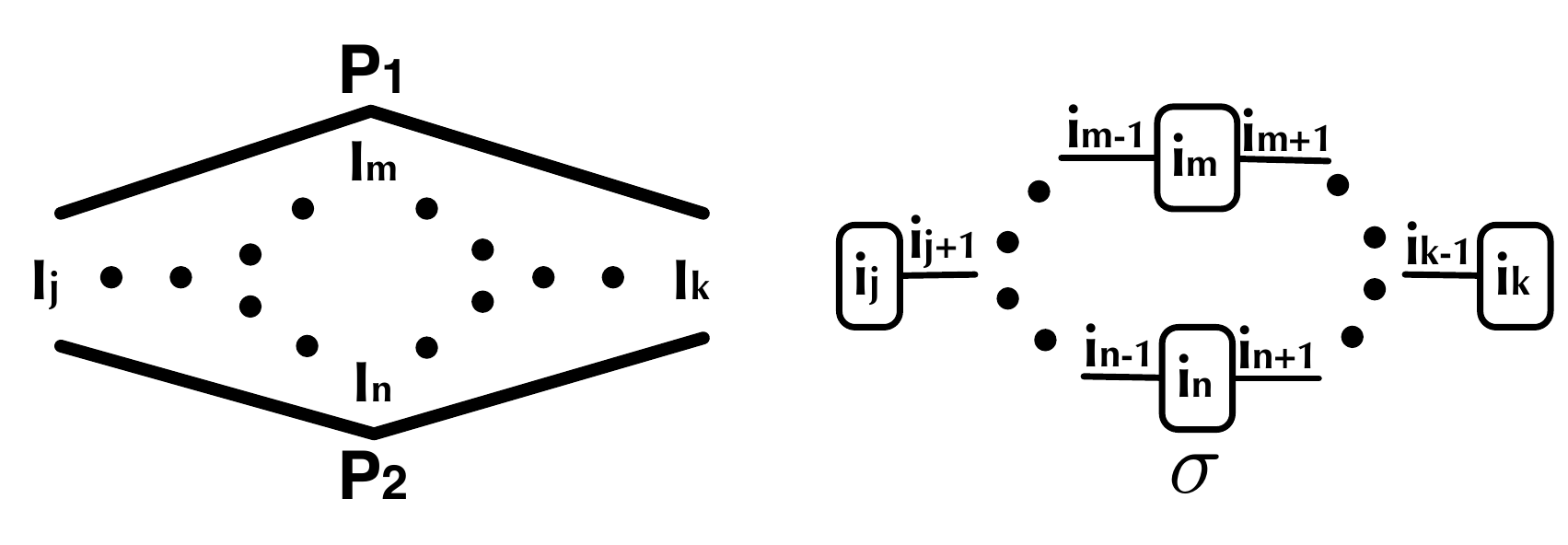}
\caption{Schematic of two relational paths $P_1$ and $P_2$ for which Lemma~\ref{lemma:overlap} guarantees that some skeleton $\sigma$ yields a non-empty intersection of their terminal sets. The example depicts a possible constructed skeleton based on the procedure used in the proof of Lemma~\ref{lemma:overlap}.}
\label{fig:lemma-overlap}
\end{figure}

\begin{repeatlemma}[\ref{lemma:overlap}]
For two relational paths of arbitrary length from $I_j$ to $I_k$ that differ in at least one item class, $P_1=[I_j,\dots,I_m,\dots,I_k]$ and $P_2=[I_j,\dots,I_n,\dots,I_k]$ with $I_m\neq I_n$, there exists a skeleton $\sigma\in\Sigma_\mathcal{S}$ such that $P_1\vert_{i_j} \cap P_2\vert_{i_j}\neq \emptyset$ for some $i_j\in\sigma(I_j)$.
\end{repeatlemma}

\begin{proof}
Proof by construction.  
Let $\mathcal{S}$ be an arbitrary schema with two arbitrary relational paths $P_1=[I_j,\dots, I_m,\dots, I_k]$ and $P_2=[I_j,\dots, I_n,\dots, I_k]$ where $I_m\neq I_n$.
We will construct a skeleton $\sigma\in\Sigma_\mathcal{S}$ such that the terminal sets for item $i_j\in\sigma(I_j)$ along $P_1$ and $P_2$ have a non-empty intersection, that is, an item $i_k\in P_1\vert_{i_j} \cap P_2\vert_{i_j}\neq \emptyset$ (roughly depicted in Figure~\ref{fig:lemma-overlap}).
We use the following procedure to build $\sigma$:
\begin{enumerate}
\item Simultaneously traverse $P_1$ and $P_2$ from $I_j$ until the paths diverge.  For each entity class $E\in\mathcal{E}$ reached, add a unique entity instance $e$ to $\sigma(E)$.
\item Simultaneously traverse $P_1$ and $P_2$ backwards from $I_k$ until the paths diverge.  For each entity class $E\in\mathcal{E}$ reached, add a unique entity instance $e$ to $\sigma(E)$.
\item For the divergent subpaths of both $P_1$ and $P_2$, add unique entity instances for each entity class $E\in\mathcal{E}$.
\item Repeat 1--3 for relationship classes.  For each $R\in\mathcal{R}$ reached, add a unique relationship instance $r$ connecting the entity instances from classes on $P_1$ and $P_2$, and add unique entity instances for classes $E\in R$ not appearing on $P_1$ and $P_2$.
\end{enumerate}
This process constructs an admissible skeleton---all instances are unique and this process assumes no cardinality constraints aside from those required by Definition~\ref{def:rel-path}.
By construction, there exists an item $i_j\in\sigma(I_j)$ such that $P_1\vert_{i_j} \cap P_2\vert_{i_j} = \{i_k\}\neq \emptyset$. $\blacksquare$
\end{proof}

\begin{figure}[t]
\centering	
\includegraphics[width=120mm]{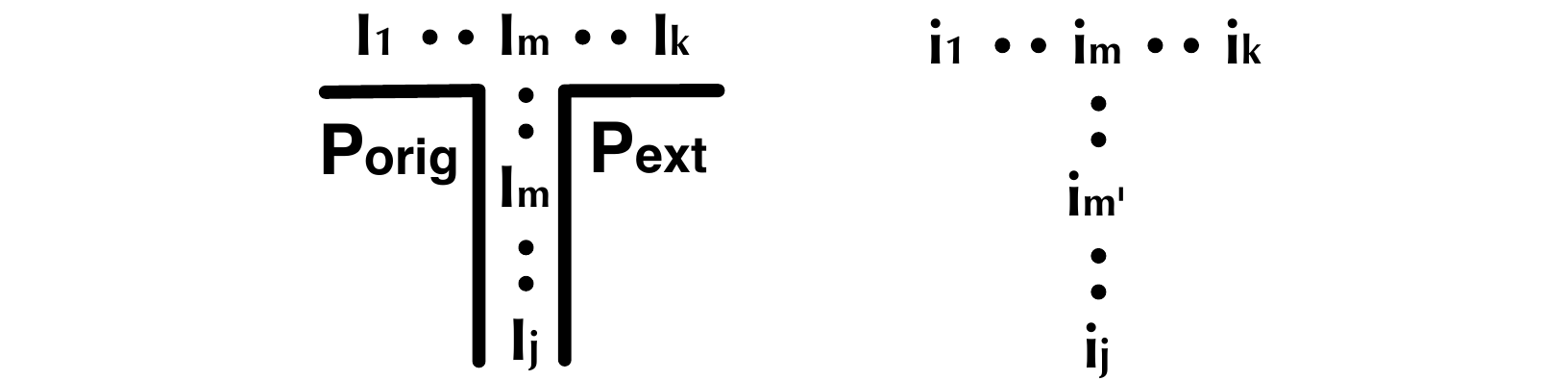}
\caption{Example construction of a relational skeleton for two relational paths $P_{orig} = [I_1,\dots, I_m\dots, I_j]$ and $P_{ext} = [I_j,\dots, I_m\dots, I_k]$, where item class $I_m$ is repeated between $I_m$ and $I_j$.  This construction is used within the proof of Lemma~\ref{lemma:extend}.}
\label{fig:extend}
\end{figure}

\begin{repeatlemma}[\ref{lemma:extend}]
Let $P_{\mathit{orig}} = [I_1,\dots, I_j]$ and $P_{\mathit{ext}} = [I_j,\dots, I_k]$ be two relational paths with $\mathbf{P} = \mathit{extend}(P_{\mathit{orig}}, P_{\mathit{ext}})$.  Then, $\forall P\in\mathbf{P}$ there exists a relational skeleton $\sigma\in\Sigma_\mathcal{S}$ such that $\exists i_1\in \sigma(I_1)$ such that $\exists i_k\in P\vert_{i_1}$ and $\exists i_j\in P_{\mathit{orig}}\vert_{i_1}$ such that $i_k\in P_{\mathit{ext}}\vert_{i_j}$.
\end{repeatlemma}

\begin{proof}
Let $P\in\mathbf{P}$ be an arbitrary valid relational path, where $P = P_{\mathit{orig}}^{1,n_o-c+1} + P_{\mathit{ext}}^{c+1,n_e}$ for pivot $c$.
There are two subcases:

(a) $c=1$ and $P = [I_1,\dots, I_j,\dots, I_k]$.
This subcase holds generally for any skeleton.
Proof by contradiction.
Let $\sigma$ be an arbitrary skeleton, choose $i_1\in\sigma(I_1)$ arbitrarily, and choose $i_k\in P\vert_{i_1}$ arbitrarily.
Assume for contradiction that there is no $i_j$ in the terminal set $P_{\mathit{orig}}\vert_{i_1}$ such that $i_k$ would be in the terminal set $P_{\mathit{ext}}\vert_{i_j}$, that is, $\forall i_j\in P_{\mathit{orig}}\vert_{i_1}\ i_k\notin P_{\mathit{ext}}\vert_{i_j}$.
Since $P = [I_1,\dots, I_j,\dots, I_k]$, we know that $i_k$ is reached by traversing $\sigma$ from $i_1$ via some $i_j$ to $i_k$.
But the traversal from $i_1$ to $i_j$ implies that $i_j\in[I_1,\dots, I_j]\vert_{i_1} = P_{\mathit{orig}}\vert_{i_1}$, and the traversal from $i_j$ to $i_k$ implies that $i_k\in[I_j,\dots, I_k]\vert_{i_j} = P_{\mathit{ext}}\vert_{i_j}$.
Therefore, there must exist an $i_j\in P_{\mathit{orig}}\vert_{i_1}$ such that $i_k\in P_{\mathit{ext}}\vert_{i_j}$.

(b) $c > 1$ and $P = [I_1,\dots, I_m,\dots, I_k]$.
Proof by construction.
We build a relational skeleton $\sigma$ following the same procedure as outlined in the proof of Lemma~\ref{lemma:overlap}.
Add instances to $\sigma$ for every item class that appears on $P_{\mathit{orig}}$ and $P_{\mathit{ext}}$.
Since $P = [I_1,\dots, I_m,\dots, I_k]$, we know that $i_k$ is reached by traversing $\sigma$ from $i_1$ via some $i_m$ to $i_k$.
By case (a), $\exists i_m\in [I_1,\dots, I_m]\vert_{i_1}$ such that $i_k\in[I_m,\dots, I_k]\vert_{i_m}$.
We then must show that there exists an $i_j\in[I_m,\dots, I_j]\vert_{i_m}$ with $i_m\in[I_j,\dots, I_m]\vert_{i_j}$.
But constructing the skeleton with unique item instances for every appearance of an item class on the relational paths provides this and does not violate any cardinality constraints.
If any item class appears more than once, then the bridge burning semantics are induced.
However, adding an additional item instance for every reappearance of an item class enables the traversal from $i_j$ to $i_m$ and vice versa.
An example of this construction is displayed in Figure~\ref{fig:extend}.
This is also a valid relational skeleton because $P_{\mathit{orig}}$ and $P_{\mathit{ext}}$ are valid relational paths, and by definition, the cardinality constraints of the schema permit multiple instances in the skeleton of any repeated item class.
By this procedure, we show that there exists a skeleton $\sigma$ such that there exists an $i_j\in P_{orig}\vert_{i_1}$ such that $i_k\in P_{ext}\vert_{i_j}$. $\blacksquare$
\end{proof}

\begin{figure}[t]
\centering	
\includegraphics[width=120mm]{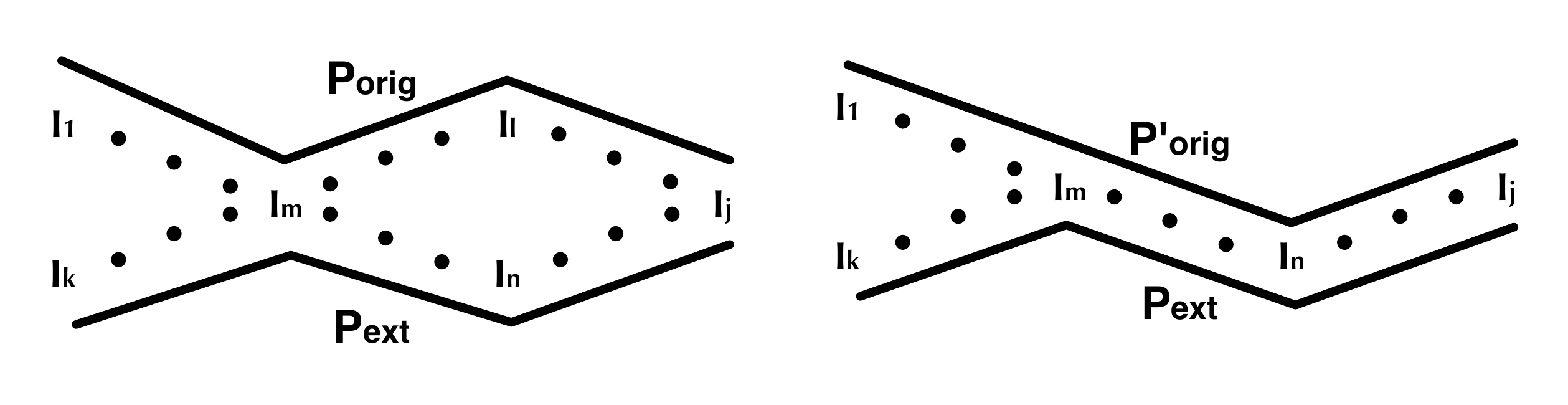}
\caption{Schematic of the relational paths expected in Lemma~\ref{lemma:extend-reverse}. If item $i_k$ is unreachable via $\mathit{extend}(P_{\mathit{orig}}, P_{\mathit{ext}})$, then there must exist a $P_{\mathit{orig}}'$ of the form $[I_1,\dots, I_m,\dots, I_n,\dots, I_j]$.}
\label{fig:extend-reverse}
\end{figure}

\begin{repeatlemma}[\ref{lemma:extend-reverse}]
Let $\sigma\in\Sigma_\mathcal{S}$ be a relational skeleton, and let $P_{\mathit{orig}} = [I_1,\dots, I_j]$ and $P_{\mathit{ext}} = [I_j,\dots, I_k]$ be two relational paths with $\mathbf{P} = \mathit{extend}(P_{\mathit{orig}}, P_{\mathit{ext}})$.  Then, $\forall i_1\!\in\! \sigma(I_1)\ \forall i_j\!\in\! P_{\mathit{orig}}\vert_{i_1}\ \forall i_k\!\in\! P_{\mathit{ext}}\vert_{i_j}$ if $\forall P\in\mathbf{P}\ i_k\notin P\vert_{i_1}$, then $\exists P_{\mathit{orig}}'$ such that $P_{\mathit{orig}}\vert_{i_1}\ \cap\ P_{\mathit{orig}}'\vert_{i_1}\neq\emptyset$ and $i_k\in P'\vert_{i_1}$ for some $P'\in \mathit{extend}(P_{\mathit{orig}}', P_{\mathit{ext}})$.
\end{repeatlemma}

\begin{proof}
Proof by construction.
Let $i_1\in\sigma(I_1)$, $i_j\in P_{\mathit{orig}}\vert_{i_1}$, and $i_k\in P_{\mathit{ext}}\vert_{i_j}$ be arbitrary instances such that $i_k\notin P\vert_{i_1}$ for all $P\in\mathbf{P}$.

Since $i_j\in P_{\mathit{orig}}\vert_{i_1}$ and $i_k\in P_{\mathit{ext}}\vert_{i_j}$, but $i_k\notin P\vert_{i_1}$, there exists no pivot that yields a common subsequence in $P_{\mathit{orig}}$ and $P_{\mathit{ext}}$ that produces a path in $\mathit{extend}$ that can reach $i_k$.
Let the first divergent item class along the reverse of $P_{\mathit{orig}}$ be $I_l$ and along $P_{\mathit{ext}}$ be $I_n$.
The two paths must not only diverge, but they also necessarily reconverge at least once.
If $P_{\mathit{orig}}$ and $P_{\mathit{ext}}$ do not reconverge, then there are no reoccurrences of an item class along any $P\in\mathbf{P}$ that would restrict the inclusion of $i_k$ in some terminal set $P\vert_{i_1}$.
The sole reason that $i_k\notin P\vert_{i_1}$ for all $P\in\mathbf{P}$ is due to the bridge burning semantics specified in Definition~\ref{def:terminal-set}.

Without loss of generality, assume $P_{\mathit{orig}}$ and $P_{\mathit{ext}}$ reconverge once, at item class $I_m$.
So, $P_{\mathit{orig}} = [I_1,\dots, I_m,\dots, I_l,\dots, I_j]$ and $P_{\mathit{ext}} = [I_j,\dots, I_n,\dots, I_m,\dots, I_k]$ with $I_l\ne I_n$, as depicted in Figure~\ref{fig:extend-reverse}.
Let $P_{\mathit{orig}}' = [I_1,\dots, I_m,\dots, I_n,\dots, I_j]$, which is a valid relational path because $[I_1,\dots, I_m]$ is a subpath of $P_{\mathit{orig}}$ and $[I_m,\dots, I_n,\dots, I_j]$ is a subpath of $P_{\mathit{ext}}$.

By construction, $i_j\in P_{\mathit{orig}}\vert_{i_1}\ \cap\ P_{\mathit{orig}}'\vert_{i_1}$.
If $P' = [I_1,\dots, I_m,\dots, I_k]\in \mathit{extend}(P_{\mathit{orig}}', P_{\mathit{ext}})$ with pivot at $I_m$, then $i_k\in P'\vert_{i_1}$. $\blacksquare$
\end{proof}

\begin{repeattheorem}[\ref{thm:agg-abstracts}]
For every acyclic relational model structure $\mathcal{M}$ and perspective $B\in\mathcal{E}\cup\mathcal{R}$, the abstract ground graph $\mathit{AGG}_{\mathcal{M}B}$ is sound and complete for all ground graphs $\mathit{GG}_{\mathcal{M}\sigma}$ with skeleton $\sigma\in\Sigma_\mathcal{S}$.
\end{repeattheorem}

\begin{proof}
Let $\mathcal{M}=(\mathcal{S}, \mathcal{D})$ be an arbitrary acyclic relational model structure and let $B\in\mathcal{E}\cup\mathcal{R}$ be an arbitrary perspective.

\textbf{Soundness}: To prove that $\mathit{AGG}_{\mathcal{M}B}$ is sound, we must show that for every edge $P_k.X\rightarrow P_j.Y$ in $\mathit{AGG}_{\mathcal{M}B}$, there exists a corresponding edge $i_k.X\rightarrow i_j.Y$ in the ground graph $\mathit{GG}_{\mathcal{M}\sigma}$ for some skeleton $\sigma\in\Sigma_\mathcal{S}$, where $i_k\in P_k\vert_b$ and $i_j\in P_j\vert_b$ for some $b\in\sigma(B)$.
There are three subcases, one for each type of edge in an abstract ground graph:

\vspace{2mm}
\begingroup
(a) Let $[B,\dots, I_k].X\rightarrow [B,\dots, I_j].Y\in \mathit{RVE}$ be an arbitrary edge in $\mathit{AGG}_{\mathcal{M}B}$ between a pair of relational variables.
Assume for contradiction that there exists no edge $i_k.X\rightarrow i_j.Y$ in any ground graph:
\abovedisplayskip=4pt
\belowdisplayskip=4pt
\begin{align*}
\forall\sigma\!\in\!\Sigma_\mathcal{S}\ \forall b\!\in\!\sigma(B)\ \forall i_k\!\in\![B,\dots, I_k]\vert_{b}\ \forall i_j\!\in\![B,\dots, I_j]\vert_{b}\ \big(i_k.X\rightarrow i_j.Y\!\notin\! \mathit{GG}_{\mathcal{M}\sigma}\big)
\end{align*}
By Definition~\ref{def:abstract-gg} for abstract ground graphs, if $[B,\dots, I_k].X\rightarrow [B,\dots, I_j].Y\in \mathit{RVE}$, then the model must have dependency $[I_j,\dots, I_k].X\rightarrow [I_j].Y\in\mathcal{D}$ such that $[B,\dots, I_k]\in \mathit{extend}([B,\dots, I_j], [I_j,\dots, I_k])$.
So, by Definition~\ref{def:ground-graph} for ground graphs, there is an edge from every $i_k.X$ to every $i_j.Y$, where $i_k$ is in the terminal set for $i_j$ along $[I_j,\dots, I_k]$:
\begin{align*}
\forall\sigma\in\Sigma_\mathcal{S}\ \forall i_j\in\sigma(I_j)\ \forall i_k\in[I_j,\dots, I_k]\vert_{i_j}\ \big(i_k.X\rightarrow i_j.Y\in \mathit{GG}_{\mathcal{M}\sigma}\big)
\end{align*}
Since $[B,\dots, I_k]\in \mathit{extend}([B,\dots, I_j], [I_j,\dots, I_k])$, by Lemma~\ref{lemma:extend} we know that
\begin{align*}
\exists\sigma\in\Sigma_\mathcal{S}\ \exists b\in\sigma(B)\ \exists i_k\in[B,\dots, I_k]\vert_b\ \exists i_j\in [B,\dots, I_j]\vert_b\ \big(i_k\in[I_j,\dots, I_k]\vert_{i_j}\big)
\end{align*}
Therefore, there exists a ground graph $\mathit{GG}_{\mathcal{M}\sigma}$ such that $i_k.X\rightarrow i_j.Y\in \mathit{GG}_{\mathcal{M}\sigma}$, which contradicts the assumption.
\endgroup

\vspace{2mm}
(b) Let $P_1.X\cap P_2.X\rightarrow [B,\dots, I_j].Y\in \mathit{IVE}$ be an arbitrary edge in $\mathit{AGG}_{\mathcal{M}B}$ between an intersection variable and a relational variable, where $P_1\!\!=\![B,\dots, I_m,\dots, I_k]$ and $P_2=[B,\dots, I_n,\dots, I_k]$ with $I_m\neq I_n$.
By Lemma~\ref{lemma:overlap}, there exists a skeleton $\sigma\in\Sigma_\mathcal{S}$ and $b\in\sigma(B)$ such that $P_1\vert_b\cap P_2\vert_b\neq \emptyset$.
Let $i_k\in P_1\vert_b\cap P_2\vert_b$ and assume for contradiction that for all $i_j\in [B,\dots, I_j]\vert_b$ there is no edge $i_k.X\rightarrow i_j.Y$ in the ground graph $\mathit{GG}_{\mathcal{M}\sigma}$.
By Definition~\ref{def:abstract-gg}, if the abstract ground graph has edge $P_1.X\cap P_2.X\rightarrow [B,\dots, I_j].Y\in \mathit{IVE}$, then either $P_1.X\rightarrow [B,\dots, I_j].Y\in \mathit{RVE}$ or $P_2.X\rightarrow [B,\dots, I_j].Y\in \mathit{RVE}$.
Then, as shown in case (a), there exists an $i_j\in[B,\dots, I_j]\vert_{b}$ such that $i_k.X\rightarrow i_j.Y\in \mathit{GG}_{\mathcal{M}\sigma}$, which contradicts the assumption.

\vspace{2mm}
(c) Let $[B,\dots, I_k].X\rightarrow P_1.Y\cap P_2.Y\in \mathit{IVE}$ be an arbitrary edge in $\mathit{AGG}_{\mathcal{M}B}$ between a relational variable and an intersection variable, where $P_1\!=\![B,\dots, I_m,\dots, I_j]$ and $P_2=[B,\dots, I_n,\dots, I_j]$ with $I_m\neq I_n$. The proof follows case (b) to show that there exists a skeleton $\sigma\in\Sigma_\mathcal{S}$ and $b\in\sigma(B)$ such that for all $i_k\in[B,\dots, I_k]\vert_b$ there exists an $i_j\in P_1\cap P_2\vert_{b}$ such that $i_k.X\rightarrow i_j.Y\in \mathit{GG}_{\mathcal{M}\sigma}$.\\

\textbf{Completeness}: To prove that the abstract ground graph $\mathit{AGG}_{\mathcal{M}B}$ is complete, we show that for every edge $i_k.X\rightarrow i_j.Y$ in every ground graph $\mathit{GG}_{\mathcal{M}\sigma}$ where $\sigma\in\Sigma_\mathcal{S}$, there is a set of corresponding edges in $\mathit{AGG}_{\mathcal{M}B}$.
Specifically, the edge $i_k.X\rightarrow i_j.Y$ yields two sets of relational variables for some $b\in\sigma(B)$, namely $\mathbf{P_k.X} = \{ P_k.X\ \vert\ i_k\in P_k\vert_b\}$ and $\mathbf{P_j.Y} = \{ P_j.Y\ \vert\ i_j\in P_j\vert_b\}$.
Note that all relational variables in both $\mathbf{P_k.X}$ and $\mathbf{P_j.Y}$ are nodes in $\mathit{AGG}_{\mathcal{M}B}$, as are all pairwise intersection variables: $\forall P_k.X, P_k'.X\!\in\!\mathbf{P_k.X}\ \big(P_k.X\cap P_k'.X\in \mathit{AGG}_{\mathcal{M}B}\big)$ and $\forall P_j.Y, P_j'.Y\!\in\!\mathbf{P_j.Y}\ \big(P_j.Y\cap P_j'.Y\in \mathit{AGG}_{\mathcal{M}B}\big)$.
We show that for all $P_k.X\!\in\!\mathbf{P_k.X}$ and for all $P_j.Y\!\in\!\mathbf{P_j.Y}$ either (a) $P_k.X\rightarrow P_j.Y\in \mathit{AGG}_{\mathcal{M}B}$, (b) $P_k.X\cap P_k'.X\rightarrow P_j.Y\in \mathit{AGG}_{\mathcal{M}B}$, where $P_k'.X\in\mathbf{P_k.X}$, or (c) $P_k.X\rightarrow P_j.Y\cap P_j'.Y\in \mathit{AGG}_{\mathcal{M}B}$, where $P_j'.Y\in\mathbf{P_j.Y}$.

Let $\sigma\in\Sigma_\mathcal{S}$ be an arbitrary skeleton, let $i_k.X\rightarrow i_j.Y\in \mathit{GG}_{\mathcal{M}\sigma}$ be an arbitrary edge drawn from $[I_j,\dots, I_k].X\rightarrow [I_j].Y\in\mathcal{D}$, and let $P_k.X\in\mathbf{P_k.X}, P_j.Y\in\mathbf{P_j.Y}$ be an arbitrary pair of relational variables.

(a) If $P_k\in \mathit{extend}(P_j, [I_j,\dots, I_k])$, then $P_k.X\rightarrow P_j.Y\in \mathit{AGG}_{\mathcal{M}B}$ by Definition~\ref{def:abstract-gg}.

(b) If $P_k\notin \mathit{extend}(P_j, [I_j,\dots, I_k])$, but $\exists P_k'\in \mathit{extend}(P_j, [I_j,\dots, I_k])$ such that $P_k'.X\in\mathbf{P_k.X}$, then $P_k'.X\rightarrow P_j.Y\in \mathit{AGG}_{\mathcal{M}B}$, and $P_k.X\cap P_k'.X\rightarrow P_j.Y\in \mathit{AGG}_{\mathcal{M}B}$ by Definition~\ref{def:abstract-gg}.

(c) If $\forall P\in \mathit{extend}(P_j, [I_j,\dots, I_k])\ \big(P.X\notin \mathbf{P_k.X}\big)$, then by Lemma~\ref{lemma:extend-reverse}, $\exists P_j'$ such that $i_j\in P_j'\vert_b$ and $P_k\in \mathit{extend}(P_j', [I_j,\dots, I_k])$.
Therefore, $P_j'.Y\in\mathbf{P_j.Y}$, $P_k.X\rightarrow P_j'.Y\in \mathit{AGG}_{\mathcal{M}B}$, and $P_k.X\rightarrow P_j'.Y\cap P_j.Y\in \mathit{AGG}_{\mathcal{M}B}$ by Definition~\ref{def:abstract-gg}. $\blacksquare$
\end{proof}

\begin{repeattheorem}[\ref{thm:agg-dags}]
For every acyclic relational model structure $\mathcal{M}$ and perspective $B\in\mathcal{E}\cup\mathcal{R}$, the abstract ground graph $\mathit{AGG}_{\mathcal{M}B}$ is directed and acyclic.
\end{repeattheorem}

\begin{proof}
Let $\mathcal{M}$ be an arbitrary acyclic relational model structure, and let $B\in\mathcal{E}\cup\mathcal{R}$ be an arbitrary perspective.
It is clear by Definition~\ref{def:abstract-gg} that every edge in the abstract ground graph $\mathit{AGG}_{\mathcal{M}B}$ is directed by construction.
All edges inserted in any abstract ground graph are drawn from the directed dependencies in a relational model.
Since $\mathcal{M}$ is acyclic, the class dependency graph $G_\mathcal{M}$ is also acyclic by Definition~\ref{def:class-dependency-graph}.
Assume for contradiction that there exists a cycle of length $n$ in $\mathit{AGG}_{\mathcal{M}B}$ that contains both relational variables and intersection variables.
By Definition~\ref{def:abstract-gg}, all edges inserted in $\mathit{AGG}_{\mathcal{M}B}$ are drawn from some dependency in $\mathcal{M}$, even for nodes corresponding to intersection variables.
Retaining only the final item class in each relational path for every node in the cycle will yield a cycle in $G_\mathcal{M}$ by Definition~\ref{def:class-dependency-graph}.
Therefore, $\mathcal{M}$ could not have been acyclic, which contradicts the assumption.
$\blacksquare$
\end{proof}

\section{The Semantics of Bridge Burning}
\label{sec:appendix_burning_bridges}
In this appendix, we provide an example to show that the bridge burning semantics for terminal sets of relational paths yields a strictly more expressive class of relational models than semantics without bridge burning.
The bridge burning semantics produces terminal sets that are necessarily \textit{subsets} of terminal sets which would otherwise be produced without bridge burning.
Paradoxically, this enables a \textit{superset} of relational models.

	\begin{figure}[t]
	\centering
	\subfloat[Relational model]{
	\includegraphics[width=70mm]{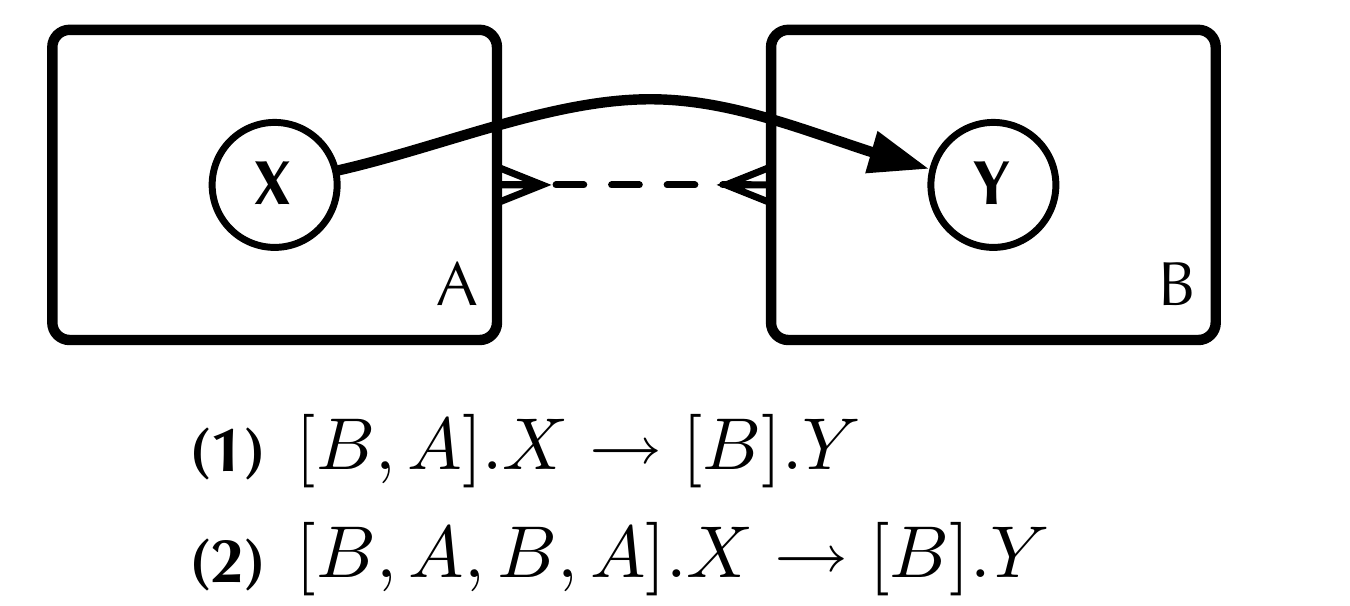}
	\label{fig:bridge-burning-model}}
	\subfloat[Relational skeleton]{
	\includegraphics[width=35mm]{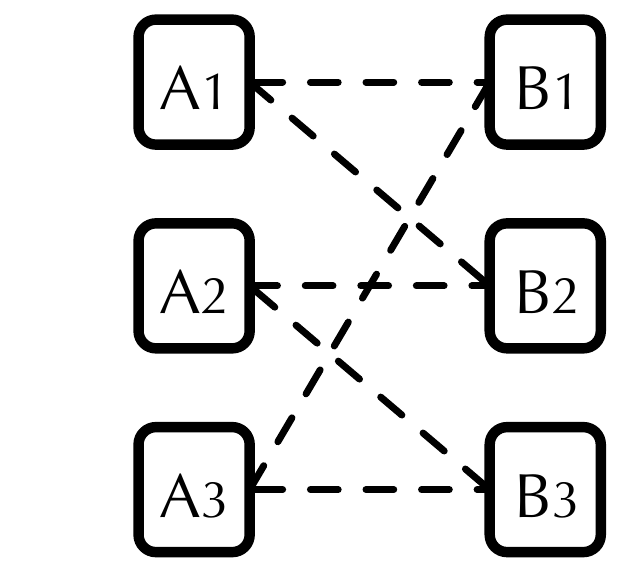}
	\label{fig:bridge-burning-skeleton}}\\
	\subfloat[Ground graphs]{
	\includegraphics[width=130mm]{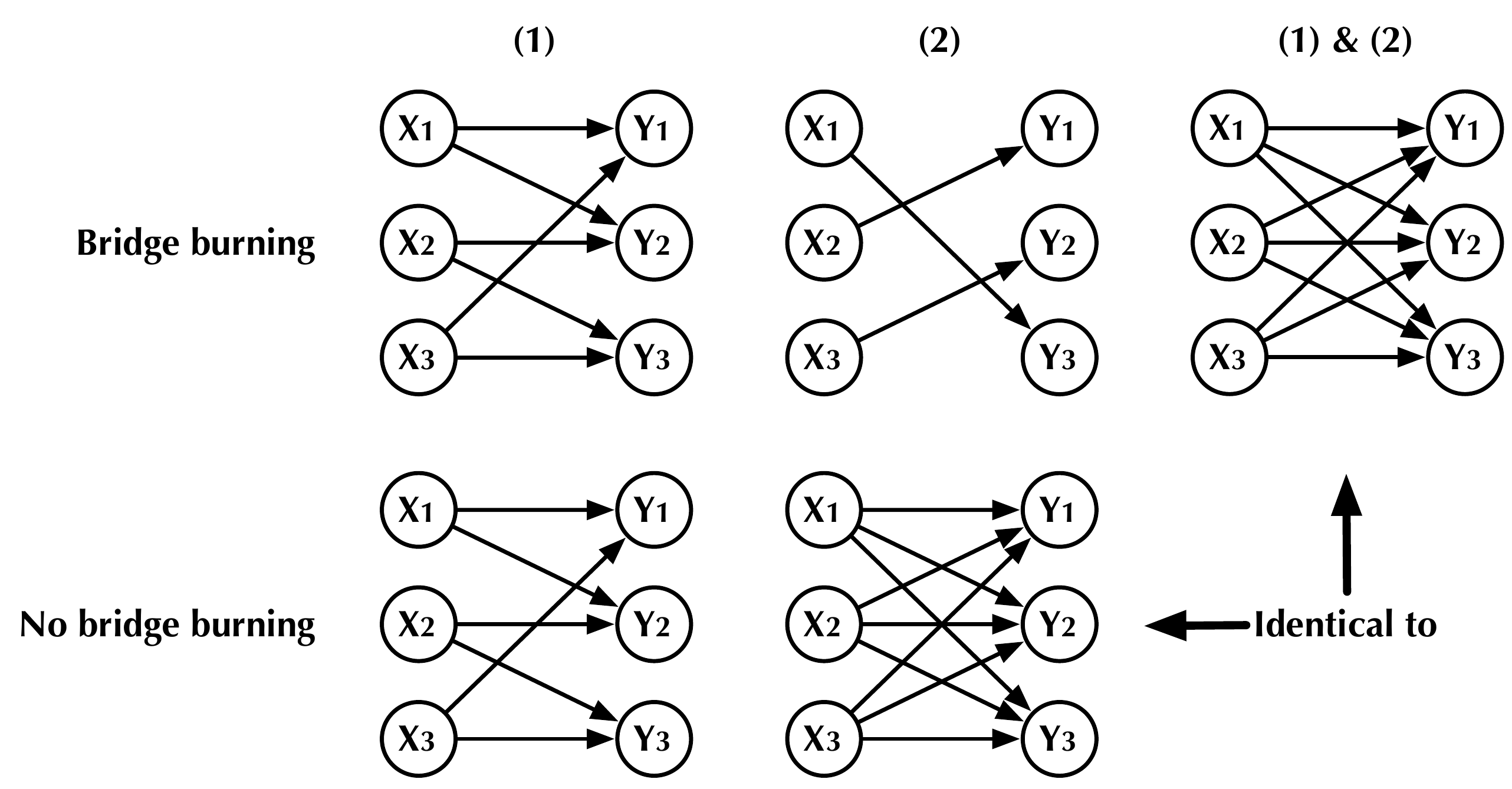}
	\label{fig:bridge-burning-ground-graphs}}
	\caption{Example demonstrating that bridge burning semantics yields a more expressive class of models than semantics without bridge burning. (a) Relational model over a schema with two entity classes and two attributes with two possible relational dependencies (relationship class omitted for simplicity). (b) Simple relational skeleton with three $A$ and three $B$ instances. (c) Bridge burning semantics yields three possible ground graphs with combinations of dependencies (1) and (2), whereas no bridge burning yields two possible ground graphs.  The bridge burning ground graphs subsume the ground graphs without bridge burning.}
	\label{fig:bridge-burning}
	\end{figure}

Recall the definition of a terminal set for a relational path:

\begin{repeatdefinition}[\ref{def:terminal-set}]{Terminal set}
For skeleton $\sigma\in\Sigma_\mathcal{S}$ and $i_j\in \sigma(I_j)$, the \textit{terminal set} $P\vert_{i_j}$ for relational path $P = [I_j,\dots, I_k]$ of length $n$ is defined inductively as
\begin{center}
$P^1\vert_{i_j} = [I_j]\vert_{i_j} = \lbrace i_j \rbrace$

$\vdots$
\end{center}

$P^n\vert_{i_j} = [I_j,\dots, I_k]\vert_{i_j} = \displaystyle\bigcup_{i_{m}\in P^{n-1}\vert_{i_j}} \Big\lbrace  i_k\ \vert\ \big((i_m\in i_k\mbox{ if }I_k\in\mathcal{R})\ \lor\ (i_k\in i_m\mbox{ if }I_k\in\mathcal{E})\big)$
\vspace{-5mm}

\hspace{70mm} $\land\ i_k\notin \displaystyle\bigcup_{l=1}^{n-1} P^{l}\vert_{i_j}\Big\rbrace$
\end{repeatdefinition}

The final condition in the inductive definition ($i_k\notin [I_1,\dots, I_j]\vert_{i_1}$ for $j= 1$ to $k-1$) encodes bridge burning.
The item $i_k$ is only added to the terminal set if it is not a member of the terminal set of any previous subpath.
For example, let $P$ be the relational path $[$\textsc{Employee}, \textsc{Develops}, \textsc{Product}, \textsc{Develops}, \textsc{Employee}$]$.
This relational path produces terminal sets that include the employees that work on the same products (that is, co-workers).
Instantiating this path with the employee Quinn, $P\vert_{\text{Quinn}}$, produces the terminal set $\{$Paul, Roger, Sally$\}$.
Since Quinn $\in [$\textsc{Employee}$]\vert_{\text{Quinn}}$, the bridge burning semantics excludes Quinn from this set.
This makes intuitive sense as well---Quinn should not be considered her own colleague.

A relational model is simply a collection of relational dependencies.
Each relational dependency is primarily described by the relational path of the parent relational variable (because, for canonically specified dependencies, the relational path of the child consists of a single item class).
The relational path specification is used in the construction of ground graphs, connecting variable instances that appear in the terminal sets of the parent and child relational variables.

To characterize the expressiveness of relational models, we can inspect the space of representable ground graphs by choosing an arbitrary relational skeleton and a small set of relational dependencies.
We show with a simple example that the bridge burning semantics for a model over a two-entity, bivariate schema yields more possible ground graphs than without bridge burning.
(We omit the relationship class for simplicity.)
In Figure~\ref{fig:bridge-burning}\subref{fig:bridge-burning-model}, we present such a model with two possible relational dependencies labeled (1) and (2).
Figure~\ref{fig:bridge-burning}\subref{fig:bridge-burning-skeleton} provides a simple relational skeleton involving three $A$ and three $B$ instances (relationship instances are represented as dashed lines for simplicity).
As shown in Figure~\ref{fig:bridge-burning}\subref{fig:bridge-burning-ground-graphs}, the bridge burning semantics leads to three possible ground graphs, one for each combination of the dependencies (1), (2), and both (1) and (2) together.
Without bridge burning, only two ground graphs are possible because dependency (2) completely subsumes dependency (1) with those semantics.

This example generalizes to arbitrary dependencies.  The terminal sets of relational paths that repeat item classes subsume subpaths under the semantics without bridge burning.  This leads to fewer possible relational models, which justifies our choice of semantics for terminal sets of relational paths.

\section{Soundness and Completeness of Relational Paths}
\label{sec:appendix_valid_relational_paths}
In this appendix, we prove that the definition of relational paths (repeated below) is sound and complete with respect to producing non-empty terminal sets for at least one relational skeleton.

\begin{repeatdefinition}[\ref{def:rel-path}]{Relational path}
A \textit{relational path} $[I_j,\dots, I_k]$ for relational schema $\mathcal{S}$ is an alternating sequence of entity and relationship classes $I_j,\dots, I_k\in\mathcal{E}\cup\mathcal{R}$ such that:
\begin{compactenum}
\item[(1)] For every pair of consecutive item classes $[E, R]$ or $[R, E]$ in the path, $E\in R$.
\item[(2)] For every triple of consecutive item classes $[E, R, E']$, $E\neq E'$.
\item[(3)] For every triple of consecutive item classes $[R, E, R']$, if $R = R'$, then $\mathit{card}(R, E)= \textsc{many}$.
\end{compactenum}
\end{repeatdefinition}

\begin{mylemma}
Let $\mathcal{S}$ be a relational schema and $[I_j, \ldots, I_k]$ be a sequence of alternating entity and relationship classes of $\mathcal{S}$ that satisfy participation constraints (condition (1) of Definition~\ref{def:rel-path}). 
The relational path $[I_j, \ldots, I_k]$ satisfies conditions (2) and (3) of Definition ~\ref{def:rel-path} if and only if there exists a relational skeleton $\sigma\in\Sigma_\mathcal{S}$ and an item instance $i_j\in\sigma(I_j)$ such that $[I_j, \ldots, I_k]|_{i_j}\not=\emptyset$. More formally, 
\begin{center}
$\begin{array}{lll}
\exists\sigma\in\Sigma_\mathcal{S}\ \exists i_j\!\in\!\sigma(I_j)\ \big( [I_j, \ldots, I_k]|_{i_j}\not=\emptyset\big) \Leftrightarrow \!\!\!\!&\big( [ERE]\!\! &\not\in [I_j, \ldots, I_k]\big) \\
\!\!\!\!&&\wedge\\
\!\!\!\!&\big( [RER]\!\! &\in [I_j, \ldots, I_k] \rightarrow \mathit{card}(R, E) = \textsc{many}\big)
\end{array}$
\end{center}
\end{mylemma}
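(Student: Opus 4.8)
The plan is to prove the biconditional by handling its two implications separately, reusing the skeleton-construction strategy already employed in the proof of Lemma~\ref{lemma:overlap}. For the forward implication --- if the sequence satisfies conditions (2) and (3) of Definition~\ref{def:rel-path} then some skeleton and base instance yield a non-empty terminal set --- I would write the sequence as $[I_1,\dots,I_n]$ and build a skeleton $\sigma$ containing one fresh instance $x_p$ of class $I_p$ for each position $p$, where each relationship instance $x_p$ has as components its path neighbors $x_{p-1}$ and $x_{p+1}$ in the appropriate slots (admissible by condition (1)) together with fresh entity instances filling any remaining slots. Two checks complete this direction. First, $\sigma$ respects the cardinality constraints of Definition~\ref{def:skeleton}: an entity instance $x_p$ lies in two relationship instances of the same class $R$ only when the path contains the triple $[R,I_p,R]$, and condition (3) forces $\mathit{card}(R,I_p)=\textsc{many}$ in that case, so no \textsc{one}-cardinality is exceeded (each fresh entity lies in a single relationship instance). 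Second, bridge burning does not discard $x_n$: an induction on $p$ shows that $P^p\vert_{x_1}$ consists of $x_p$ together with at most some fresh entities disjoint from all path instances, hence $x_p$ appears in no prefix terminal set before step $p$; here condition (2) is exactly what guarantees that when $I_p$ is a relationship class the step-$(p{+}1)$ traversal reaches the distinct instance $x_{p+1}$ rather than being forced back onto $x_{p-1}$. It follows that $x_n\in[I_1,\dots,I_n]\vert_{x_1}\neq\emptyset$; the cases $n\le 2$ (no triples, so conditions (2)--(3) are vacuous) are immediate.

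For the reverse implication I would argue the contrapositive: if the sequence violates (2) or (3), then for every skeleton and every base instance the terminal set is empty. Fix an offending triple at positions $p,p{+}1,p{+}2$. If it is $[E,R,E]$ (same entity class), any relationship instance $v$ reached at step $p{+}1$ was entered from an $E$-instance $u$ with $u\in v$; since $R$ has a unique $E$-component, $u$ is the only $E$-instance in $v$, and $u$ already lies in $P^{p}\vert_{i_1}$, so the bridge-burning clause of Definition~\ref{def:terminal-set} forbids $u$ at step $p{+}2$, giving $P^{p+2}\vert_{i_1}=\emptyset$. If it is $[R,E,R]$ with $\mathit{card}(R,E)=\textsc{one}$, any $E$-instance reached at step $p{+}1$ came from the unique relationship instance it participates in, which again lies in $P^{p}\vert_{i_1}$ and is bridge-burned at step $p{+}2$. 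In either case emptiness propagates through the inductive definition of terminal sets to the full path, so $[I_1,\dots,I_n]\vert_{i_1}=\emptyset$.

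The step I expect to be the main obstacle is the bridge-burning bookkeeping in the forward construction: one must argue carefully that the auxiliary fresh instances introduced to fill extra relationship slots never coincide with --- or make prematurely reachable --- a later path instance $x_p$, so that $x_n$ genuinely survives into the final terminal set, and that the construction never forces a \textsc{one}-cardinality to be exceeded. A secondary subtlety worth flagging is that the $[E,R,E]$ argument in the reverse direction uses the standing convention (made explicit in the paper's discussion of self-relationships) that a relationship class does not contain the same entity class more than once; without it, an $[E,R,E]$ path can have a non-empty terminal set, so that hypothesis would need to be stated.
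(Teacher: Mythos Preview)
Your proposal is correct and follows essentially the same approach as the paper: the construction direction builds a skeleton with one fresh instance per position (the paper invokes the same procedure used for Lemma~\ref{lemma:overlap}), and the other direction uses exactly the two bridge-burning/cardinality contradictions you describe for the $[E,R,E]$ and $[R,E,R]$ triples. You are in fact more careful than the paper on two points---the explicit induction showing $x_n$ survives bridge burning, and the observation that the $[E,R,E]$ argument relies on the no-self-relationship convention---both of which the paper leaves implicit.
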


\begin{proof}
\textbf{Left-to-right $\Rightarrow$:}
Assume that there exists a skeleton $\sigma\in\Sigma_\mathcal{S}$ and item instance $i_j\in\sigma(I_j)$ such that $[I_j, \ldots, I_k]|_{i_j}\not=\emptyset$.
We must show that $[I_j, \ldots, I_k]$ obeys conditions (2) and (3), i.e., $[I_j, \ldots, I_k]$ does not contain any $[ERE]$ patterns, and if it contains an $[RER]$ pattern, then $\mathit{card}(R, E)=\textsc{many}$.
\begin{itemize}
\item Assume for contradiction that $[I_j, \ldots, I_k]$ contains a pattern of the form $[ERE]$.
From Definition~\ref{def:terminal-set} for terminal sets, it follows that if the terminal set of a path is not empty, then the terminal set of every prefix of that path is not empty:
\begin{align*}
[I_j, \ldots, I_k]|_{i_j}\not=\emptyset \Rightarrow [I_j, \ldots, I_m]|_{i_j}\not=\emptyset \text{ for all } [I_j, \ldots, I_m]\le[I_j, \ldots, I_k]
\end{align*}
By assumption, $[I_j, \ldots, I_k]|_{i_j}\not=\emptyset$; therefore, the prefix $[I_j, \ldots, I_m]$ that ends in the $ERE$ pattern also has a non-empty terminal set:
\begin{align*}
[I_j, \ldots, I_k]|_{i_j}\not=\emptyset &\Rightarrow [I_j, \ldots, E, R, E]|_{i_j}\not=\emptyset\\
[I_j, \ldots, I_k]|_{i_j}\not=\emptyset &\Rightarrow [I_j, \ldots, E, R]|_{i_j}\not=\emptyset\\
[I_j, \ldots, I_k]|_{i_j}\not=\emptyset &\Rightarrow [I_j, \ldots, E]|_{i_j}\not=\emptyset
\end{align*}
Let $e\in\sigma(E)$ be an entity instance in the terminal set $[I_j, \ldots, E]|_{i_j}$.
Since the terminal set $[I_j, \ldots, E, R]|_{i_j}$ is not empty, it follows that there exists a relationship instance $r = \langle \ldots, e, \ldots\rangle$ such that  $r\in[I_j, \ldots, E, R]|_{i_j}$.
However, $[I_j, \ldots, E, R, E]|_{i_j}$ is also not empty; thus, there exists some $e'\in\sigma(E)$ such that $e'\in [I_j, \ldots, E, R, E]|_{i_j}$, where $e'\not=e$, and $e'\in r$.
It follows that both $e$ and $e'$ participate in the relationship instance $r$, which is a contradiction.

\item Assume for contradiction that $[I_j, \ldots, I_k]$ contains a pattern of the form $[R, E, R]$ and $\mathit{card}(R, E) =$ \textsc{one}.
\begin{align}
[I_j, \ldots, R]|_{i_j} \not=\emptyset &\Rightarrow \exists r=\langle e,\ldots\rangle\in [I_j, \ldots, R]|_{i_j}\\
[I_j, \ldots, R, E]|_{i_j} \not=\emptyset &\Rightarrow \exists e\in [I_j, \ldots, R, E]|_{i_j} \text{ and } e\in r\nonumber \\
[I_j, \ldots, R, E, R]|_{i_j} \not=\emptyset &\Rightarrow \exists r'=\langle e, \ldots\rangle\text{ such that } r'\in [I_j, \ldots, R, E, R]|_{i_j}\\
&\hspace{3cm}\text{ and } r'\not=r \text{ (bridge burning semantics)}\nonumber
\end{align}
From (1) and (2) it follows that $e$ participates in two instances of $R$; therefore, $\mathit{card}(R, E)$ must be \textsc{many}, which is a contradiction.
\end{itemize}

\textbf{Right-to-left $\Leftarrow$:} Assume that $[I_j, \ldots, I_k]$ adheres to Definition~\ref{def:rel-path} for relational paths. 
We must show that $\exists\sigma\!\in\!\Sigma_\mathcal{S}\ \exists i_j\!\in\!\sigma(I_j)\ \big( [I_j, \ldots, I_k]|_{i_j}\not=\emptyset\big)$.
We can construct such a skeleton $\sigma$ according to the following procedure:
For each entity class $E$ on the path, add a unique entity instance $e$ to $\sigma(E)$.
Then, for each relationship class $R$ on the path, add a unique relationship instance $r$ connecting the previously created unique entity instances that participate in $R$, and add unique entity instances for classes $E\in R$ not appearing on the path.
This process constructs an admissible skeleton---all instances are unique and this process assumes no cardinality constraints aside from those required by Definition~\ref{def:rel-path}.
By construction, there exists an item instance $i_j\in\sigma(I_j)$ such that $ [I_j, \ldots, I_k]|_{i_j}\not=\emptyset$. $\blacksquare$
\end{proof}

\section{Background on Propositional Data and Models}
\label{sec:appendix_prop_background}
In this appendix, we provide a brief review of Bayesian networks, traditional \textit{d}-separation, and their connection to causality.
We also describe why the class of Bayesian networks is a special case of relational models.
Finally, we give an example of how to propositionalize a data set drawn from a relational domain.

A common assumption in classical statistics, machine learning, and causal discovery is that data instances are independent and identically distributed (IID).
The first condition assumes that the variables on any given data instance are marginally independent of the variables of any other data instance.
The second condition assumes that every data instance is drawn from the same underlying joint probability distribution.
IID data (also referred to as propositional data\footnote{IID data are typically referred to as \textit{propositional} because the data can be equivalently expressed under propositional logic.}) are effectively represented as a single table, where rows correspond to the independent instances and columns are attributes of those instances.
	
A Bayesian network is a widely used probabilistic graphical model of propositional data \citep{pearl-probreas88}.
A Bayesian network is represented as a directed acyclic graph $G = (\mathbf{V}, \mathbf{E})$, where $\mathbf{V}$ is a set of vertices  corresponding to random variables in the data and $\mathbf{E}\subset \mathbf{V}\times \mathbf{V}$ is a set of edges encoding the probabilistic dependencies among the variables.
Each random variable $V\in\mathbf{V}$ is associated with a conditional probability distribution $P\big(V\ \vert\ \mathit{parents}(V)\big)$, where $\mathit{parents}(V)\subseteq \mathbf{V}\setminus \{ V \}$ is the set of parent variables for $V\!$.

If the joint probability distribution $P(\mathbf{V})$ satisfies the Markov condition for $G$, then $P(\mathbf{V})$ can be factored as $\prod_{V\in\mathbf{V}} P\big(V\ \vert\ \mathit{parents}(V)\big)$ using the conditional distributions.
The Markov condition states that every variable $V\in\mathbf{V}$ is conditionally independent of its non-descendants given its parents, where the descendants of $V$ are all variables reachable by a directed path from $V\!$.
Deriving the set of conditional independencies from $G$ based on the Markov condition is cumbersome, requiring complex combinations of probability axioms.
Fortunately, \textit{d}-separation, a set of graphical criteria, provides the foundation for algorithmic derivation of all conditional independencies in $G$ and entails the exact same set of conditional independencies as the Markov condition \citep{verma1988cnse, geiger1988lcm, neapolitan2004learning}.

In the following definition, a path is a sequence of vertices following edges in either direction.  We say that a variable $V$ is a collider on a path $p$ if the two arrowheads point at each other (collide) at $V\!$; otherwise, $V$ is a non-collider on $p$.

\begin{definition}[\textit{d}-separation]
Let $\mathbf{X}$, $\mathbf{Y}$, and $\mathbf{Z}$ be disjoint sets of variables in directed acyclic graph $G$.
A path from some $X\in\mathbf{X}$ to some $Y\in\mathbf{Y}$ is \textit{d-connected} given $\mathbf{Z}$ if and only if every collider $W$ on the path, or a descendant of $W\!$, is a member of $\mathbf{Z}$, and there are no non-colliders in $\mathbf{Z}$.
Then, say that $\mathbf{X}$ and $\mathbf{Y}$ are \textit{d-separated} by $\mathbf{Z}$ if and only if there are no \textit{d}-connecting paths between $\mathbf{X}$ and $\mathbf{Y}$ given $\mathbf{Z}$.
\label{def:d_sep}
\end{definition}
	
Figure~\ref{fig:d_separation}\subref{fig:d_separation_path_elements} depicts the graphical patterns found along paths that lead to \textit{d}-separation or \textit{d}-connection based on Definition~\ref{def:d_sep}, and Figure~\ref{fig:d_separation}\subref{fig:d_separation_examples} provides example \textit{d}-separated and \textit{d}-connected paths.
At first glance, identifying conditional independence facts using the rules of \textit{d}-separation appears computationally intensive, testing a potentially exponential number of paths.
However, \citet{geiger1990iibn} provide a linear-time algorithm based on breadth-first search and reachability on $G$.

\begin{figure}[t]
\centering
\subfloat[Graphical patterns of \textit{d}-separating and \textit{d}-connecting path elements among disjoint sets of variables $\mathbf{X}$ and $\mathbf{Y}$ given $\mathbf{Z}$.  Paths for which there exists a non-collider in $\mathbf{Z}$ or a collider not in $\mathbf{Z}$ are \textit{d}-separating.  Paths for which all non-colliders are not in $\mathbf{Z}$ and all colliders (or a descendant of colliders) are in $\mathbf{Z}$ are \textit{d}-connecting.]{
\includegraphics[width=120mm]{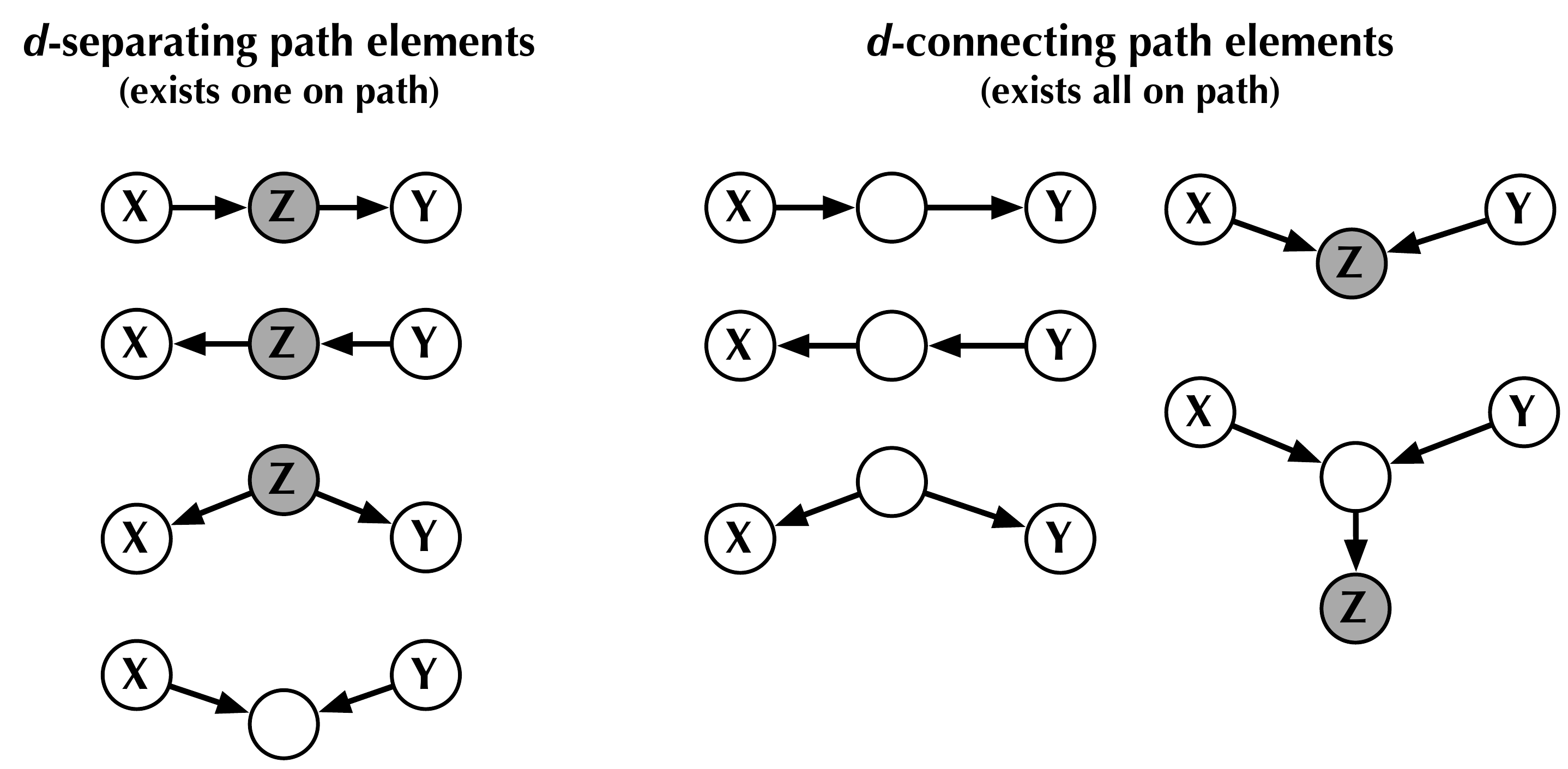}
\label{fig:d_separation_path_elements}}\\

\subfloat[Several example \textit{d}-separated and \textit{d}-connected paths that illustrate the composition of path elements.]{
\includegraphics[width=120mm]{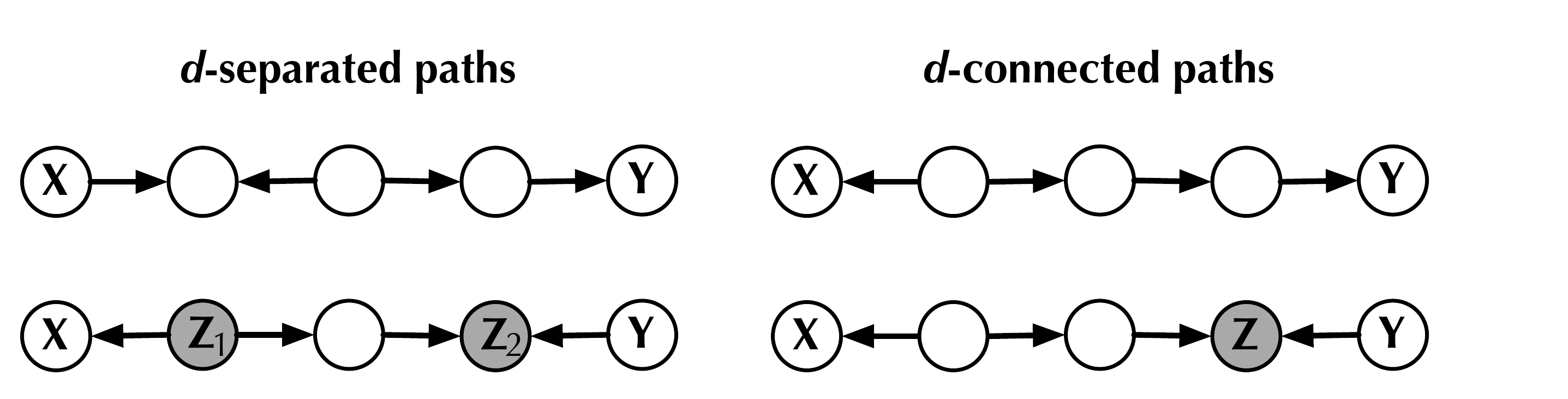}
\label{fig:d_separation_examples}}
\caption{Patterns of \textit{d}-separating and \textit{d}-connecting path elements and example \textit{d}-separating and \textit{d}-connecting paths.}
\label{fig:d_separation}
\end{figure}

Under a few assumptions, Bayesian networks can be interpreted causally, with edges corresponding to direct causal dependencies.
If $X\rightarrow Y$ is an edge in the causal model $G$, then manipulating or changing the value of $X$ will alter the conditional distribution of $Y$---denoted as $P\big(Y\ \vert\ \mathit{do}(X)\big)$ using Pearl's \textit{do}-calculus notation for interventions \citep{pearl-causality00}.
The causal interpretation of $G$ assumes the \textit{causal} Markov condition, which is identical to the Markov condition, replacing parents with direct causes and non-descendants with non-effects.
In order for the causal Markov condition to hold, the variables $\mathbf{V}$ must also be \textit{causally sufficient}: There are no latent common causes for any pair of variables in $\mathbf{V}$.
The causal Markov condition is also equivalent to \textit{d}-separation; therefore, both provide the connection between causal structures and probability distributions.

The conditional independencies entailed by both the causal Markov condition and \textit{d}-separation hold in all distributions that $G$ represents.
A distribution $P$ is \textit{faithful} to $G$ if all conditional independencies in $P$ are entailed by the causal Markov condition on $G$. 
If $P$ is assumed to be faithful to $G$, then there are algorithms that can learn the Markov, or likelihood, equivalent set of causal models.  
These algorithms assume causal sufficiency, faithfulness, and model acyclicity 
to identify the edges in $G$ that are consistent with observed conditional independencies and to determine the direction of causality \citep{spirtes-etal-book00}.

\begin{figure}[t]
\centering
\texttt{
\begin{tabular}{ll}
SELECT & t$_1$.id, t$_1$.salary, t$_2$.success, t$_3$.revenue  \\ 
FROM & \begin{tabular}[t]{lll}
( & SELECT & E.id, E.salary \\ 
  & FROM & Employee E) t$_1$, \\ 
( & SELECT & E.id, P.success \\ 
  & FROM & Employee E, Develops D, Product P \\ 
  & WHERE & E.id = D.e\_id AND D.p\_id = P.id) t$_2$, \\ 
( & SELECT & E.id, B.revenue \\ 
  & FROM & Employee E, Develops D, Product P \\ 
  & & Funds F, Business-Unit B \\ 
  & WHERE & E.id = D.e\_id AND D.p\_id = P.id AND \\ 
  & & P.id = F.p\_id AND F.b\_id = B.id) t$_3$
\end{tabular} \\
WHERE & t$_1$.id = t$_2$.id AND t$_2$.id = t$_3$.id 
\end{tabular}
}
\caption{Sketch of a relational database query that joins the instances of three relational variables having the common perspective \textsc{Employee} used to produce the data instances shown in Table~\ref{tab:employee-propositional}. The three relational variables are (1) $[$\textsc{Employee}$]$.\textit{Salary}, (2) $[$\textsc{Employee}, \textsc{Develops}, \textsc{Product}$]$.\textit{Success}, and (3) $[$\textsc{Employee}, \textsc{Develops}, \textsc{Product}, \textsc{Funds}, \textsc{Business-Unit}$]$.\textit{Revenue}.}
\label{fig:employee-query}
\end{figure}

The relational representation presented in Section~\ref{sec:rel_concepts} is strictly more expressive than the propositional representation used in Bayesian network modeling.
Propositional representations describe domains with a single entity class; thus, they produce schemas with $\vert \mathcal{E}\vert = 1$ (one entity class) and $\vert \mathcal{R}\vert = 0$ (no relationship classes).
For the organization domain example, consider data about only employees ($\mathcal{E} = \{$\textsc{Employee}$\}$).
Variables would include intrinsic attributes, such as salary, but could also include variables describing other related entities, all from the employee perspective.
This technique of translating a relational database down to a single, propositional representation is often referred to as \textit{propositionalization} \citep{kramer-etal-rdmchapter01}.
That is, we could construct a single table for employees that includes columns for the success of developed products, the revenue of all business units they work under, etc.
In Figure~\ref{fig:employee-query}, we show an example SQL-like query that would produce such data, and the resulting data set applied to the example in Figure~\ref{fig:org_example_data}\subref{fig:org_skeleton} is shown in Table~\ref{tab:employee-propositional}.\footnote{Note that modeling propositionalized data with Bayesian networks still requires the IID assumption, which is often violated since variables of one instance can influence variables of another.
For example, according to the model in Figure~\ref{fig:org_example_model}\subref{fig:org_model} the competence of collaborating employees influences the success of products, which affects the revenue of business units, which affects its budget, thereby influencing an employee's salary.
As a result, modeling relational data with a propositional representation may unnecessarily lose valuable information, especially in the context of causal reasoning and accurate estimation of causal effects.}

The relational skeleton of a Bayesian network consists of a set of disconnected entity instances, all drawn from the same entity class.
Consequently, the skeleton has a simple one-to-one mapping with the representation as a table: Each entity instance corresponds to a single row, and each variable is a column.
In this example, each employee would be an entity instance, and no instances of other entity types or relationships would appear in the skeleton.
Because all variables in a Bayesian network are defined for a single entity class and no relationships, the relational path specification becomes trivial and, hence, implicit.
All relational paths, relational variables, and relational dependencies are defined from a single perspective with singleton paths (e.g., $[$\textsc{Employee}$]$).
The ground graph of a Bayesian network, similar to the skeleton, has a very regular structure.
The ground graph consists of a set of identical copies of the model structure, one for each instance in the skeleton.
For a Bayesian network, \textit{d}-separation can be applied directly to the model structure because there is no variability in its ground graphs.

\begin{table}[t]
\centering
\begin{tabular}{c|c|p{40mm}|p{40mm}}
\textsc{Employee} & [\textsc{Employee}].\textit{Salary} & \parbox[t]{5cm}{[\textsc{Employee}, \textsc{Develops}, \\ \textsc{Product}].\textit{Success}} & \parbox[t]{5cm}{[\textsc{Employee}, \textsc{Develops}, \\ \textsc{Product}, \textsc{Funds}, \\ \textsc{Business-Unit}].\textit{Revenue}} \\ 
\hline
Paul & \{Paul.\textit{Salary}\} & \{{\proda}.\textit{Success}\} & \{{\businessa}.\textit{Revenue}\}\\

Quinn & \{Quinn.\textit{Salary}\} & \parbox[t]{5cm}{\{{\proda}.\textit{Success}, \\{\prodb}.\textit{Success}, \\{\prodc}.\textit{Success}\}} & \parbox[t]{5cm}{\{{\businessa}.\textit{Revenue}, \\{\businessb}.\textit{Revenue}\}}\\

Roger & \{Roger.\textit{Salary}\} & \{{\prodc}.\textit{Success}\} & \{{\businessb}.\textit{Revenue}\}\\

Sally & \{Sally.\textit{Salary}\} & \parbox[t]{5cm}{\{{\prodc}.\textit{Success}, \\{\prodd}.\textit{Success}\}} & \{{\businessb}.\textit{Revenue}\}\\

Thomas & \{Thomas.\textit{Salary}\} & \parbox[t]{5cm}{\{{\prodd}.\textit{Success}, \\{\prode}.\textit{Success}\}} & \{{\businessb}.\textit{Revenue}\}
\end{tabular}
\caption{Propositional table consisting of employees, their salary, the success of products they develop, and the revenue of the business units they operate under.  Producing this table requires joining the instances of three relational variables, all from a common perspective---\textsc{Employee}.}
\label{tab:employee-propositional}
\end{table}

%\begin{figure}[t]
%\centering
%\subfloat[]{
%\includegraphics[width=75mm]{prop-org-schema.pdf}
%\label{fig:prop-org-schema}}\\
%\subfloat[]{
%\includegraphics[width=120mm]{prop-org-skeleton.pdf}
%\label{fig:prop-org-skeleton}}
%\caption{Relational schema (a) and skeleton (b) for a propositional representation of employees from the organization domain example.  The skeleton consists of individual, disconnected entity instances.  Each RV$_i$ is a relational variable from the \textsc{Employee} perspective, such as the proportion of products developed successfully or the total revenue of involved business units.}
%\label{fig:prop-org-data}
%\end{figure}

%\begin{figure}[t]
%\centering
%\subfloat[]{
%\includegraphics[width=75mm]{prop-org-model.pdf}
%\label{fig:prop-org-model}}\\
%\subfloat[]{
%\includegraphics[width=120mm]{prop-org-ground-graph.pdf}
%\label{fig:prop-org-ground-graph}}
%\caption{Example Bayesian network (a) and ground graph (b) for the propositional representation of employees from the organization domain.  The model consists of simple dependencies involving employee variables.  The ground graph is a set of identical copies of the model structure, one for each instance in the data (skeleton).}
%\label{fig:prop-org-model-all}
%\end{figure}

\section{Hop Thresholds}
\label{sec:hop_thresholds}
For practical implementations, the size of the abstract ground graphs should be limited by a domain-specific threshold.
In this work, we choose to apply a singular hop threshold to the relational paths that are represented in an abstract ground graph.
In this appendix, we examine the effect of choosing a particular hop threshold. % $h_a$.
%We denote abstract ground graphs for perspective $B$, limited by a hop threshold $h_a$ as $\mathit{AGG}_{\mathcal{M}Bh_a}$.

%For practical implementations, the length of relational paths should be limited by a domain-specific hop threshold.
%In this work, we choose to apply a singular threshold to all relational paths, which also constrains the space of relational dependencies and models.
%In this appendix, we examine the effect of choosing a particular hop threshold $h$ to the soundness and completeness results of Section~\ref{sec:rds}.
%Specifically, we guarantee completeness for abstract ground graphs with a threshold of $3h-2$.

First, we introduce the notion of $(B,h)$-reachability, which describes the conditions under which an edge in a ground graph is represented in an abstract ground graph.

\begin{definition}[$(B,h)$-reachability]
\label{def:reachability}
Let $\mathit{GG}_{\mathcal{M}\sigma}$ be the ground graph for some relational model structure $\mathcal{M}$ and skeleton $\sigma\in\Sigma_\mathcal{S}$.
Then, $i_k.X\rightarrow i_j.Y\in \mathit{GG}_{\mathcal{M}\sigma}$ is $(B,h)$\textit{-reachable} for perspective $B$ and hop threshold $h$ if there exist relational variables $P_k.X=[B,\dots, I_k].X$ and $P_j.Y = [B,\dots, I_j].Y$ such that $\mathit{length}(P_k)\leq h+1$, $\mathit{length}(P_j)\leq h+1$, and there exists an instance $b\in\sigma(B)$ with $i_k\in P_k\vert_b$ and $i_j\in P_j\vert_b$.
\end{definition}

In other words, the edge $i_k.X\rightarrow i_j.Y$ in the ground graph is $(B,h)$-reachable if an instance of the base item $b\in\sigma(B)$ can reach $i_k$ and $i_j$ in at most $h$ hops.

%\begin{example}
%\label{ex:reachability}
%Consider the ground graph shown in Figure~\ref{fig:org_example_model}\subref{fig:org_ground_graph}.
%Let perspective $B$ be \textsc{Employee}, and let the hop threshold $h=6$.
%Let $i_k.X\rightarrow i_j.Y$ be the edge {\prodc}$.\mathit{Success}\rightarrow$ {\businessb}$.\mathit{Revenue}$ in the ground graph.
%This edge is $(B,h)$-reachable because of the following:
%Set $P_k.X = [$\textsc{Employee}, \textsc{Develops}, \textsc{Product}$].\mathit{Success}$, $P_j.Y = [$\textsc{Employee}, \textsc{Develops}, \textsc{Product}, \textsc{Funds}, \textsc{Business-Unit}$].\mathit{Revenue}$, and let $b = $ Sally $\in\sigma($\textsc{Employee}$)$.
%We have $\mathit{length}(P_k) = 3 < 7$, $\mathit{length}(P_j) = 5 < 7$, {\prodc}$\in P_k\vert_{\text{Sally}}$, and {\businessb}$\in P_j\vert_{\text{Sally}}$.
%$\square$
%\end{example}

Since Definition~\ref{def:reachability} pertains to edges reachable via a particular perspective $B$ and hop threshold $h$, it relates to the reachability of edges in abstract ground graphs.
We denote abstract ground graphs for perspective $B$, limited by a hop threshold $h$ as $\mathit{AGG}_{\mathcal{M}Bh}$.
Definition~\ref{def:reachability} implies that (1) for every edge in ground graph $\mathit{GG}_{\mathcal{M}\sigma}$, we can derive a set of abstract ground graphs for which that edge is $(B,h)$-reachable, and (2) for every abstract ground graph $\mathit{AGG}_{\mathcal{M}Bh}$, we can derive the set of $(B,h)$-reachable edges for a given ground graph.
Given $(B,h)$-reachability, we can now express the soundness and completeness of abstract ground graphs.

\begin{mytheorem}
\label{thm:agg-soundness-bh}
For every acyclic relational model structure $\mathcal{M}$, perspective $B\in\mathcal{E}\cup\mathcal{R}$, and hop threshold $h_a\in\mathbb{N}^0$, the abstract ground graph $\mathit{AGG}_{\mathcal{M}Bh_a}$ is sound up to hop threshold $h_a$ for all ground graphs $\mathit{GG}_{\mathcal{M}\sigma}$ with skeleton $\sigma\in\Sigma_\mathcal{S}$.
\end{mytheorem}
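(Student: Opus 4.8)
The plan is to derive this from the unbounded soundness result, Theorem~\ref{thm:agg-abstracts}, by observing that the hop-bounded abstract ground graph sits inside the full one. First I would note that every node of $\mathit{AGG}_{\mathcal{M}Bh_a}$ is a relational variable $[B,\dots,I_j].X$ with $\mathit{length}([B,\dots,I_j])\le h_a+1$, or an intersection variable $P_1.X\cap P_2.X$ whose two constituent paths both have length at most $h_a+1$. Since Definitions~\ref{def:abstract-gg} and~\ref{def:extend} make the node and edge sets of an abstract ground graph a pure function of which relational paths are admitted as node labels, restricting to paths of length at most $h_a+1$ yields precisely the subgraph of $\mathit{AGG}_{\mathcal{M}B}$ induced on those nodes; a routine check of the three edge types ($\mathit{RVE}$ and the two halves of $\mathit{IVE}$) confirms that each edge of $\mathit{AGG}_{\mathcal{M}Bh_a}$ is also an edge of $\mathit{AGG}_{\mathcal{M}B}$.

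Next I would take an arbitrary edge $e$ of $\mathit{AGG}_{\mathcal{M}Bh_a}$, view it as an edge of $\mathit{AGG}_{\mathcal{M}B}$, and invoke the soundness half of Theorem~\ref{thm:agg-abstracts}: there exist a skeleton $\sigma\in\Sigma_\mathcal{S}$, an item $b\in\sigma(B)$, and instances $i_k\in P_k\vert_b$, $i_j\in P_j\vert_b$ with $i_k.X\rightarrow i_j.Y\in \mathit{GG}_{\mathcal{M}\sigma}$, where $P_k$ and $P_j$ are the relational paths labelling the endpoints of $e$ (for an intersection endpoint, $P_k$ or $P_j$ is whichever of its two constituent paths that proof—through Lemmas~\ref{lemma:overlap} and~\ref{lemma:extend}—uses to reach the relevant instance). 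The final step is to note that because $e$ lives in the hop-bounded graph, each such $P_k$ and $P_j$ has length at most $h_a+1$, which is exactly the hypothesis of Definition~\ref{def:reachability}; hence the ground-graph edge $i_k.X\rightarrow i_j.Y$ is $(B,h_a)$-reachable, and $\mathit{AGG}_{\mathcal{M}Bh_a}$ is sound up to hop threshold $h_a$.

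An equivalent route, if one prefers a self-contained argument, is to rerun the three-case soundness argument of Theorem~\ref{thm:agg-abstracts} verbatim while carrying the length bound $h_a+1$ on every relational path, since the skeleton constructions underlying Lemmas~\ref{lemma:overlap} and~\ref{lemma:extend} only introduce item instances for classes that already appear on the (now bounded) paths. I expect the only point requiring care to be the bookkeeping for intersection variables: an intersection node contributes a ground-graph edge through just one of its two constituent relational paths, so one must be sure that the path actually witnessing that edge—the one whose length certifies $(B,h_a)$-reachability—is among the two that the definition of $\mathit{IV}$ already bounds by $h_a+1$. This is immediate from the construction, so no genuine obstacle arises; the bounded proof is a faithful, length-annotated copy of the unbounded one.
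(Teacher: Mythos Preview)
Your proposal is correct and matches the paper's approach. The paper's proof is a single sentence that points back to the soundness half of Theorem~\ref{thm:agg-abstracts}, stating that the argument is identical; this is exactly your ``equivalent route'' of rerunning the three-case soundness proof while carrying the length bound, and your primary route (treating the unbounded result as a black box and then appealing to Definition~\ref{def:reachability}) is just a repackaging of the same observation.
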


\begin{proof}
Soundness means that for every edge $[B,\ldots, I_j].X \rightarrow [B,\ldots, I_k].Y$ in the abstract ground graph $\mathit{AGG}_{\mathcal{M}Bh_a}$, there exists a skeleton $\sigma\in\Sigma_\mathcal{S}$, a base item instance $b\in\sigma(B)$, an instance $i_j\in[B,\ldots, I_j]|b$, and an instance $i_k\in[B,\ldots, I_k]|b$ such that 
$i_j.X \rightarrow i_k.Y$ is a $(B,h_a)$-reachable edge in $\mathit{GG}_{\mathcal{M}\sigma}$.
The proof is identical to the proof of soundness for Theorem~\ref{thm:agg-abstracts} (see Appendix~\ref{sec:appendix_proofs}). $\blacksquare$
\end{proof}

\begin{mytheorem}
\label{thm:agg-completeness-bh}
For every acyclic relational model structure $\mathcal{M}$, perspective $B\in\mathcal{E}\cup\mathcal{R}$, and hop threshold $h_r\in\mathbb{N}^0$, the abstract ground graph $\mathit{AGG}_{\mathcal{M}Bh_a}$ is complete up to hop threshold $h_r$ for all ground graphs $\mathit{GG}_{\mathcal{M}\sigma}$ with skeleton $\sigma\in\Sigma_\mathcal{S}$, where $h_a = \max(h_r + h_m, h_r+2h_m-2)$ and $h_m$ is the maximum number of hops for a dependency in $\mathcal{M}$.
\end{mytheorem}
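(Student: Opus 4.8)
The plan is to re-run the completeness half of the proof of Theorem~\ref{thm:agg-abstracts} with its case structure intact, and to layer on top of it a length-bookkeeping argument showing that every relational path the argument manufactures has at most $h_a+1$ item classes, so that the node (or intersection node) and edge it relies on survive truncation at $h_a$. Here ``complete up to $h_r$'' should be read as: for every skeleton $\sigma\in\Sigma_\mathcal{S}$ and every edge $i_k.X\rightarrow i_j.Y\in\mathit{GG}_{\mathcal{M}\sigma}$ that is $(B,h_r)$-reachable (Definition~\ref{def:reachability}), the corresponding set of edges in $\mathit{AGG}_{\mathcal{M}Bh_a}$ — those relating the relational variables of at most $h_r$ hops whose $b$-instances contain $i_k$, resp.\ $i_j$, together with the intersection variables they subsume — is present. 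Such a ground-graph edge arises from a dependency $[I_j,\dots,I_k].X\rightarrow[I_j].Y\in\mathcal{D}$ whose relational path has $h_{\mathit{dep}}\le h_m$ hops, and by $(B,h_r)$-reachability one may fix witnesses $P_j=[B,\dots,I_j]$ and $P_k=[B,\dots,I_k]$ with at most $h_r$ hops each; since $h_a\ge h_r$ these are already nodes of $\mathit{AGG}_{\mathcal{M}Bh_a}$.

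First I would handle cases (a) and (b) of the original argument. In case (a), $P_k\in\mathit{extend}(P_j,[I_j,\dots,I_k])$ and the edge $P_k.X\rightarrow P_j.Y$ already lies in $\mathit{AGG}_{\mathcal{M}Bh_a}$, since both endpoints have at most $h_r+1\le h_a+1$ item classes. In case (b), the source-side witness $P_k'$ is drawn from $\mathit{extend}(P_j,[I_j,\dots,I_k])$; because $\mathit{extend}$ concatenates its two inputs at a pivot, $P_k'$ has at most $(\text{hops of }P_j)+h_{\mathit{dep}}\le h_r+h_m$ hops, so $P_k'$ and the intersection node $P_k.X\cap P_k'.X$ both fit inside $\mathit{AGG}_{\mathcal{M}Bh_a}$ provided $h_a\ge h_r+h_m$ — the first term of the maximum. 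The required edges $P_k'.X\rightarrow P_j.Y$ and $P_k.X\cap P_k'.X\rightarrow P_j.Y$ then follow from Definition~\ref{def:abstract-gg} exactly as in Theorem~\ref{thm:agg-abstracts}. This term is binding only for ``local'' dependencies with $h_m<2$; for genuinely relational dependencies the case-(c) bound dominates.

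Case (c) — when no extension of $P_j$ reaches $i_k$, a bridge-burning artifact — is the crux and the origin of the second term. Lemma~\ref{lemma:extend-reverse} supplies an alternative base path $P_j'$ with $i_j\in P_j'|_b$ and $P_j|_b\cap P_j'|_b\neq\emptyset$, and a source-side path in $\mathit{extend}(P_j',[I_j,\dots,I_k])$ that does reach $i_k$. Re-inspecting the construction inside the proof of Lemma~\ref{lemma:extend-reverse}, $P_j'$ is assembled from a prefix of $P_j$ up to a reconvergence class together with a reversed prefix of the dependency path, so $P_j'$ has at most roughly $h_r+h_m$ hops; re-extending $P_j'$ with the dependency path to obtain the source-side relational variable adds at most another $h_m$ hops, and a sharper count that exploits the minimum two-hop length of any divergence/reconvergence detour brings the total down to $h_r+2h_m-2$. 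Taking $h_a=\max(h_r+h_m,\,h_r+2h_m-2)$ then makes $P_j'$, the source-side path, $P_k$, $P_j$, and every intersection node joining them genuine nodes of $\mathit{AGG}_{\mathcal{M}Bh_a}$, and the rest of the argument (existence of the promised RVE and IVE edges) is verbatim the completeness part of Theorem~\ref{thm:agg-abstracts}. The main obstacle I anticipate is exactly this length accounting: carefully controlling the case of several interleaved divergences and reconvergences between $P_j$ and the dependency path, and confirming that the worst case really collapses to a single extra factor of $h_m$ rather than scaling with the number of reconvergences, so that the stated $h_a$ is sufficient.
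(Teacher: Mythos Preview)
Your proposal is correct and follows essentially the same route as the paper: re-run the three-case completeness argument of Theorem~\ref{thm:agg-abstracts}, bound the hop count of the auxiliary paths produced in each case, and read off $h_r+h_m$ from case~(b) and $h_r+2h_m-2$ from case~(c) by inspecting the construction inside Lemma~\ref{lemma:extend-reverse}. The obstacle you flag---multiple interleaved divergences/reconvergences---is one the paper's proof also elides (it assumes ``without loss of generality'' a single reconvergence), so you are not missing anything the paper supplies.
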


\begin{proof}
Let $\mathcal{M}=(\mathcal{S}, \mathcal{D})$ be an arbitrary acyclic relational model structure, let $B\in\mathcal{E}\cup\mathcal{R}$ be an arbitrary perspective, and let $h_r\in\mathbb{N}^0$ be an arbitrary hop threshold.

To prove that the abstract ground graph $AGG_{\mathcal{M}Bh_a}$ is complete up to hop threshold $h_r$, we show that for every $(B,h_r)$-reachable edge $i_k.X\rightarrow i_j.Y$ in every ground graph $GG_{\mathcal{M}\sigma}$ with $\sigma\in\Sigma_{\mathcal{S}}$, there is a set of corresponding edges in $AGG_{\mathcal{M}Bh_a}$.
Specifically, the $(B,h_r)$-reachable edge $i_k.X\rightarrow i_j.Y$ yields two sets of relational variables for some $b\in\sigma(B)$, namely $\mathbf{P_k.X} = \{ P_k.X\ \vert\ i_k\in P_k\vert_b\ \land\ \mathit{length}(P_k)\leq h_r+1 \}$ and $\mathbf{P_j.Y} = \{ P_j.Y\ \vert\ i_j\in P_j\vert_b\ \land\ \mathit{length}(P_j)\leq h_r+1 \}$ by Definition~\ref{def:reachability}.
Note that all relational variables in both $\mathbf{P_k.X}$ and $\mathbf{P_j.Y}$ are nodes in $AGG_{\mathcal{M}Bh_a}$.
We show that for all $P_k.X\!\in\!\mathbf{P_k.X}$ and for all $P_j.Y\!\in\!\mathbf{P_j.Y}$ either 
(a) $P_k.X\rightarrow P_j.Y\in \mathit{AGG}_{\mathcal{M}Bh_a}$, 
(b) $P_k.X\cap P_k'.X\rightarrow P_j.Y\in \mathit{AGG}_{\mathcal{M}Bh_a}$
or
$P_k.X\cap P_k'.X\rightarrow P_j'.Y\in \mathit{AGG}_{\mathcal{M}Bh_a}$,
where $i_k\in P_k'\vert_b$ and $i_j\in P_j'\vert_b$,
or
(c) $P_k.X\rightarrow P_j.Y\cap P_j'.Y\in \mathit{AGG}_{\mathcal{M}Bh_a}$
or
$P_k'.X\rightarrow P_j.Y\cap P_j'.Y\in \mathit{AGG}_{\mathcal{M}Bh_a}$,
where $i_k\in P_k'\vert_b$ and $i_j\in P_j'\vert_b$.

Let $\sigma\in\Sigma_\mathcal{S}$ be an arbitrary skeleton, let $i_k.X\rightarrow i_j.Y\in \mathit{GG}_{\mathcal{M}\sigma}$ be an arbitrary $(B,h_r)$-reachable edge drawn from $[I_j,\dots, I_k].X\rightarrow [I_j].Y\in\mathcal{D}$ where $\mathit{length}([I_j,\dots,I_k])\le h_m+1$, and let $P_k.X\in\mathbf{P_k.X}, P_j.Y\in\mathbf{P_j.Y}$ be an arbitrary pair of relational variables.
There are three cases:

(a) $P_k\in \mathit{extend}(P_j, [I_j,\dots, I_k])$.
Then, $\mathit{length}(P_k)\le (h_r+1)+(h_m+1)-1 = h_r + h_m + 1 \le h_a +1$. 
Therefore, $P_k.X$ is a node in the abstract ground graph, and $P_k.X\rightarrow P_j.Y\in AGG_{\mathcal{M}Bh_a}$ by Definition~\ref{def:abstract-gg}.

(b) $P_k\notin \mathit{extend}(P_j, [I_j,\dots, I_k])$, but $\exists P_k'\in \mathit{extend}(P_j, [I_j,\dots, I_k])$ such that $i_k\in P_k'\vert_b$.
Then, $\mathit{length}(P_k')\le (h_r+1) + (h_m+1) - 1 = h_r + h_m + 1 \le h_a + 1$.
Therefore, $P_k'$ is a node in the abstract ground graph, $P_k'.X\rightarrow P_j.Y\in AGG_{\mathcal{M}Bh_a}$, and $P_k.X\cap P_k'.X\rightarrow P_j.Y\in AGG_{\mathcal{M}Bh_a}$ by Definition~\ref{def:abstract-gg}.

(c) For all $P_k\in \mathit{extend}(P_j, [I_j,\dots, I_k])$, it is the case that $i_k\notin P_k.X\vert_b$.
Then by Lemma~\ref{lemma:extend-reverse}, there exists a $P_j'$ such that $i_j\in P_j'\vert_b$ and there exists a $P_k''\in \mathit{extend}(P_j', [I_j,\dots, I_k])$.
Given the way $P_j'$ is constructed, its length is bounded by: 
\[\mathit{length}(P'_j) \le \mathit{length}(P_j) + \mathit{length}([I_j, \ldots, I_k]) - 3\le (h_r+1)+(h_m+1)-3 = h_r+h_m-1\]
$P_k''$ intersects with $P_k$ since they both reach $i_k$, and the length of $P_k''$ is bounded by:
\[\mathit{length}(P_k'') \le \mathit{length}(P_j') + \mathit{length}([I_j, \ldots, I_k]) - 1 \le (h_r+h_m-1)+(h_m+1)-1 = h_r+2h_m-1 \]
Also by Lemma~\ref{lemma:extend-reverse}, we know that $P_j$ and $P_j'$ intersect.  
Since $\mathit{length}(P_k'')\le h_r+2h_m-1 \le h_a+1$, $P_k''$ is a node in the abstract ground graph, $P_k''.X\rightarrow P_j'.Y\in AGG_{\mathcal{M}Bh_a}$ $P_k''.X\rightarrow P_j'.Y\cap P_j.Y\in AGG_{\mathcal{M}Bh_a}$, and $P_k.X\cap P_k''.X\rightarrow P_j'.Y\in AGG_{\mathcal{M}Bh_a}$ by Definition~\ref{def:abstract-gg}. 

From the above three cases, it follows that to guarantee completeness up to $h_r$, the abstract ground graph must contain nodes up to the hop threshold $h_a = \max(h_r+h_m, h_r+2h_m-2)$.
$\blacksquare$
\end{proof}

Theorems~\ref{thm:agg-soundness-bh} and \ref{thm:agg-completeness-bh} guarantee that if an abstract ground graph is constructed with a hop threshold of $h_a$ from perspective $B$, it captures all paths of dependence in all ground graphs, where (1) the variables along those paths are reachable in $h_r$ hops from instances of $B$ and (2) the underlying dependencies are bounded by a threshold of $h_m$.

In the following, we say that \textit{d}-separation holds up to a specified hop threshold $h$ if there are no \textit{d}-connecting paths involving relational variables of length greater than $h+1$.

\begin{mytheorem}
\label{thm:relational-d-sep-bh}
Relational \textit{d}-separation is sound and complete for abstract ground graphs up to a specified hop threshold.
Let $\mathcal{M}$ be an acyclic relational model structure, and let $h_m$ be the maximum number of hops for a dependency in $\mathcal{M}$. 
Let $\mathbf{X}$, $\mathbf{Y}$, and $\mathbf{Z}$ be three distinct sets of relational variables for perspective $B\in\mathcal{E}\cup\mathcal{R}$ defined over relational schema $\mathcal{S}$, and let $h_r$ be the maximum number of hops of relational variables in $\mathbf{X}, \mathbf{Y}$, and $\mathbf{Z}$.
Then, $\mathbf{\bar{X}}$ and $\mathbf{\bar{Y}}$ are \textit{d}-separated by $\mathbf{\bar{Z}}$ on the abstract ground graph $\mathit{AGG}_{\mathcal{M}Bh_a}$ if and only if for all skeletons $\sigma\in\Sigma_\mathcal{S}$ and for all $b\in \sigma(B)$, $\mathbf{X}\vert_b$ and $\mathbf{Y}\vert_b$ are \textit{d}-separated by $\mathbf{Z}\vert_b$ up to hop threshold $h_r$ in ground graph $\mathit{GG}_{\mathcal{M}\sigma}$, where $h_a = \max(h_r + h_m, h_r+2h_m-2)$.
\end{mytheorem}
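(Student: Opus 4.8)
The plan is to follow the proof of Theorem~\ref{thm:relational-d-sep} essentially line for line, substituting the hop-bounded abstraction results (Theorems~\ref{thm:agg-soundness-bh} and~\ref{thm:agg-completeness-bh}) for the unconditional one (Theorem~\ref{thm:agg-abstracts}) and threading the hop-count bookkeeping through the path-translation argument. As before, I would prove each direction by contraposition on the existence of a $d$-connecting path, and I would first record (from Theorem~\ref{thm:d-sep}, Lemma~\ref{lemma:rolled-out-dags}, and Theorem~\ref{thm:agg-dags}) that $d$-separation is legitimately defined on both the acyclic ground graphs and the acyclic $\mathit{AGG}_{\mathcal{M}Bh_a}$, so that every appeal to $d$-connection below is well-posed.

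For \textbf{soundness}, I would assume $\bar{\mathbf{X}}$ and $\bar{\mathbf{Y}}$ are $d$-separated by $\bar{\mathbf{Z}}$ in $\mathit{AGG}_{\mathcal{M}Bh_a}$ and suppose, for contradiction, that some $\sigma$ and $b\in\sigma(B)$ admit a $d$-connecting path $p$ from an element of $\mathbf{X}\vert_b$ to an element of $\mathbf{Y}\vert_b$ given $\mathbf{Z}\vert_b$ in $\mathit{GG}_{\mathcal{M}\sigma}$ that stays within hop threshold $h_r$. The key step is that every edge of $p$ is then $(B,h_r)$-reachable (Definition~\ref{def:reachability}), so by Theorem~\ref{thm:agg-completeness-bh}---which is stated precisely for $h_a=\max(h_r+h_m,h_r+2h_m-2)$---each edge of $p$, together with the intersection nodes it induces, appears in $\mathit{AGG}_{\mathcal{M}Bh_a}$. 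I would then reassemble these into a path in $\mathit{AGG}_{\mathcal{M}Bh_a}$ exactly as in the completeness half of Theorem~\ref{thm:agg-abstracts}---at each instance shared by consecutive edges of $p$, pass through the relational or intersection node that preserves collider/non-collider status---and observe that augmenting the query sets to $\bar{\mathbf{X}},\bar{\mathbf{Y}},\bar{\mathbf{Z}}$ makes this path $d$-connecting, contradicting the hypothesis.

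For \textbf{completeness}, I would assume $\mathbf{X}\vert_b$ and $\mathbf{Y}\vert_b$ are $d$-separated by $\mathbf{Z}\vert_b$ up to hop threshold $h_r$ in every ground graph, and suppose, for contradiction, a $d$-connecting path $p$ between $\bar{\mathbf{X}}$ and $\bar{\mathbf{Y}}$ given $\bar{\mathbf{Z}}$ in $\mathit{AGG}_{\mathcal{M}Bh_a}$. By Theorem~\ref{thm:agg-soundness-bh}, each edge of $p$ corresponds to a $(B,h_a)$-reachable edge in some ground graph; taking a sufficiently rich common skeleton (as in the soundness construction behind Theorem~\ref{thm:agg-abstracts}) yields a single $\sigma$ and $b$ in whose ground graph the image of $p$ is a $d$-connecting path between an instance of an $\mathbf{X}$-variable and an instance of a $\mathbf{Y}$-variable given the instances of $\mathbf{Z}\vert_b$. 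Since the variables of $\mathbf{X},\mathbf{Y},\mathbf{Z}$ all have length at most $h_r+1$, this contradicts $d$-separation up to hop threshold $h_r$ in $\mathit{GG}_{\mathcal{M}\sigma}$, and the equivalence follows.

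The hard part, as in Theorem~\ref{thm:agg-abstracts}, is the path-translation step itself: at each intermediate instance I have to select whether the corresponding abstract-ground-graph node is a relational variable or an intersection variable so that collider structure is faithfully transferred, which is precisely why the $\bar{\cdot}$ augmentation is unavoidable. The extra difficulty specific to this theorem is purely the hop arithmetic---checking that a $d$-connecting path among variables of length $\le h_r+1$ may need to pass through intermediate variables as long as $h_a+1$, with the worst case arising from the Lemma~\ref{lemma:extend-reverse} detour in case~(c) of the proof of Theorem~\ref{thm:agg-completeness-bh}, giving the bound $h_r+2h_m-1$ and hence the stated $h_a$. I expect no genuinely new obstacle beyond verifying that this accounting is tight in both directions.
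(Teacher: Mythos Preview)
Your proposal is correct and follows essentially the same approach as the paper's own proof: both directions are argued by contraposition on the existence of a $d$-connecting path, invoking Theorem~\ref{thm:agg-completeness-bh} in the soundness direction and Theorem~\ref{thm:agg-soundness-bh} in the completeness direction, exactly as the unconditional Theorem~\ref{thm:relational-d-sep} invokes the two halves of Theorem~\ref{thm:agg-abstracts}. Your write-up is somewhat more explicit than the paper's about the path-translation mechanics (collider preservation, intersection-node selection, building a common skeleton) and about where the hop bound $h_a$ originates, but these are elaborations of the same argument rather than a different route.
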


\begin{proof}
We must show that \textit{d}-separation on an abstract ground graph implies \textit{d}-separation on all ground graphs it represents (soundness) and that \textit{d}-separation facts that hold across all ground graphs are also entailed by \textit{d}-separation on the abstract ground graph (completeness).

\textbf{Soundness}: Assume that $\mathbf{\bar{X}}$ and $\mathbf{\bar{Y}}$ are \textit{d}-separated by $\mathbf{\bar{Z}}$ on $\mathit{AGG}_{\mathcal{M}Bh_a}$.
Assume for contradiction that there exists a skeleton $\sigma\in\Sigma_\mathcal{S}$ and an item instance $b\in\sigma(B)$ such that $\mathbf{X}\vert_b$ and $\mathbf{Y}\vert_b$ are \textit{not d}-separated by $\mathbf{Z}\vert_b$ in the ground graph $\mathit{GG}_{\mathcal{M}\sigma}$.
Then, there must exist a \textit{d}-connecting path $p$ from some $x\in\mathbf{X}\vert_b$ to some $y\in\mathbf{Y}\vert_b$ given all $z\in\mathbf{Z}\vert_b$ such that every edge of $p$ is $(B, h_r)$-reachable.
By Theorem~\ref{thm:agg-completeness-bh}, $\mathit{AGG}_{\mathcal{M}Bh_a}$ is $(B,h_r)$-reachably complete, so all $(B,h_r)$-reachable edges in $\mathit{GG}_{\mathcal{M}\sigma}$ are captured by edges in $\mathit{AGG}_{\mathcal{M}Bh_a}$.
Thus, path $p$ must be represented from some node in $\{N_x\ \vert\ x\in N_x\vert_b\}$ to some node in $\{N_y\ \vert\ y\in N_y\vert_b\}$, where $N_x, N_y$ are nodes in $\mathit{AGG}_{\mathcal{M}Bh_a}$.
If $p$ is \textit{d}-connecting in $\mathit{GG}_{\mathcal{M}\sigma}$, then it is \textit{d}-connecting in $\mathit{AGG}_{\mathcal{M}Bh_a}$, which implies that $\mathbf{\bar{X}}$ and $\mathbf{\bar{Y}}$ are \textit{not d}-separated by $\mathbf{\bar{Z}}$.
Therefore, $\mathbf{X}\vert_b$ and $\mathbf{Y}\vert_b$ must be \textit{d}-separated by $\mathbf{Z}\vert_b$.

\textbf{Completeness}: Assume that $\mathbf{X}\vert_b$ and $\mathbf{Y}\vert_b$ are \textit{d}-separated by $\mathbf{Z}\vert_b$ in the ground graph $\mathit{GG}_{\mathcal{M}\sigma}$ for all skeletons $\sigma\in\Sigma_\mathcal{S}$ and for all $b\in\sigma(B)$.
Assume for contradiction that $\mathbf{\bar{X}}$ and $\mathbf{\bar{Y}}$ are \textit{not d}-separated by $\mathbf{\bar{Z}}$ on $\mathit{AGG}_{\mathcal{M}Bh_a}$. %for some hop threshold $h\in\mathbb{N}^0$.
Then, there must exist a \textit{d}-connecting path $p$ for some relational variable $X\in\mathbf{\bar{X}}$ to some $Y\in\mathbf{\bar{Y}}$ given all $Z\in\mathbf{\bar{Z}}$.
By Theorem~\ref{thm:agg-soundness-bh}, $\mathit{AGG}_{\mathcal{M}Bh_a}$ is $(B,h_a)$-reachably sound, so every edge in $\mathit{AGG}_{\mathcal{M}Bh}$ must correspond to some pair of variables in some ground graph.
Thus, if $p$ is \textit{d}-connecting in $\mathit{AGG}_{\mathcal{M}Bh_a}$, then there must exist some skeleton $\sigma$ such that $p$ is \textit{d}-connecting in $\mathit{GG}_{\mathcal{M}\sigma}$ for some $b\in\sigma(B)$, which implies that \textit{d}-separation does not hold for that ground graph.
Therefore, $\mathbf{\bar{X}}$ and $\mathbf{\bar{Y}}$ must be \textit{d}-separated by $\mathbf{\bar{Z}}$ on $\mathit{AGG}_{\mathcal{M}Bh_a}$.
$\blacksquare$
\end{proof}

%Theorem~\ref{thm:relational-d-sep-bh} proves that \textit{d}-separation on abstract ground graphs is a sound and complete solution to identifying independence in relational models.
%Given Theorem~\ref{thm:d-sep}, we can also say that the set of conditional independence facts derived from \textit{d}-separation on abstract ground graphs is exactly the same (up to a specified hop threshold) as the set of conditional independencies in common with all faithful distributions represented by all possible ground graphs.

\section{Experimental Details---Equivalence of a Na\"{i}ve Approach}
\label{sec:appendix_condindequiv_prediction}
In this appendix, we provide additional details for the experiment in Section~\ref{sec:naive_rds}.
The main goal of this experiment is to quantify how often traditional \textit{d}-separation applied directly to relational model structures produces incorrect conditional independence facts.
This provides a rough measurement for the additional representational power of relational \textit{d}-separation on abstract ground graphs.
Here, we present an analysis of which factors influence the number of equivalent and non-equivalent conditional independence judgments between both approaches (na\"{i}vely applying traditional \textit{d}-separation versus relational \textit{d}-separation).

Specifically, we show here the results of running log-linear regression to predict the number of equivalent and non-equivalent judgments for varying schemas and models.  We first applied lasso for feature selection \citep{tibshirani1996lasso} to minimize the number of predictors while maximizing model fit.  We also standardized the input variables by dividing by two standard deviations, as recommended by \citet{gelman2008scaling}.  Since the predictor for the number of dependencies is log-transformed, the standardization for that variable occurs after taking the logarithm.

In predicting the (log of the) number of equivalent conditional independencies, the following variables were significantly and substantively predictive (in order of decreasing predictive power):

\begin{table}
\centering
\begin{tabular}{r|c|c|c}
 Predictor & Coefficient & Partial & Semipartial \\ 
\hline
log(\# dependencies) $\times$ \# entities & 1.38 & 0.232 & 0.085 \\
log(\# dependencies) & 1.14 & 0.135 & 0.044 \\
log(\# dependencies) $\times$ \# \textsc{many} cardinalities & -0.71 & 0.092 & 0.028 \\
\# entities $\times$ \# relational variables & -0.32 & 0.044 & 0.013 \\
\end{tabular}
\caption{Number of equivalent conditional independence judgments: estimated standardized coefficient, squared partial correlation coefficient, and squared semipartial correlation coefficient for each predictor.}
\label{tab:cond-ind-equiv-model-stats}
\end{table}

\begin{itemize}
\item Interaction between the log of the number of dependencies and the number of entities (positive)
\item Log of the number of dependencies (positive)
\item Interaction between the log of the number of dependencies and the number of \textsc{many} cardinalities (negative)
\item Number of entities (negative)
\item Interaction between the number of entities and the number of relational variables in the AGG (negative)
\end{itemize}

The fit for the equivalent model has an $R^2 = 0.721$ for $n=4,000$, and Table~\ref{tab:cond-ind-equiv-model-stats} contains the standardized coefficients as well as the squared partial and semipartial correlation coefficients for each predictor.  
For lasso, $\lambda = 0.0076$ offered the fewest predictors while increasing the model fit by at least 0.01.

In predicting the (log of the) number of non-equivalent conditional independencies, the following variables were significantly and substantively predictive (in order of decreasing predictive power):

\begin{table}
\centering
\begin{tabular}{r|c|c|c}
 Predictor & Coefficient & Partial & Semipartial \\ 
\hline
\# \textsc{many} cardinalities $\times$ \# entities & -2.22 & 0.207 & 0.064 \\ 
log(\# dependencies) $\times$ \# entities & 0.90 & 0.165 & 0.048 \\
\# \textsc{many} cardinalities & 3.24 & 0.128 & 0.036 \\
log(\# dependencies) $\times$ \# \textsc{many} cardinalities & 1.47 & 0.127 & 0.036
\end{tabular}
\caption{Number of non-equivalent conditional independence judgments: estimated standardized coefficient, squared partial correlation coefficient, and squared semipartial correlation coefficient for each predictor.}
\label{tab:cond-ind-nonequiv-model-stats}
\end{table}

\begin{itemize}
\item Interaction between the number of \textsc{many} cardinalities and the number of entities (negative)
\item Interaction between the log of the number of dependencies and the number of entities (positive)
\item Number of \textsc{many} cardinalities (positive)
\item Interaction between the log of the number of dependencies and the number of \textsc{many} cardinalities (positive)
\end{itemize}

The fit for the non-equivalent model has an $R^2 = 0.755$ for $n=4,000$, and Table~\ref{tab:cond-ind-nonequiv-model-stats} contains the standardized coefficients and the squared partial and semipartial correlation coefficients for each predictor.
For lasso, $\lambda = 0.0155$ offered the fewest predictors while increasing the model fit by at least 0.01.

\section{Experimental Details---Abstract Ground Graph Size}
\label{sec:appendix_aggsize_prediction}
In this appendix, we provide additional details for the experiment in Section~\ref{sec:agg-size}.
The goal of this experiment is to determine which factors influence the size of abstract ground graphs because the computational complexity of relational \textit{d}-separation depends on their size.
Specifically, we show here the results of running log-linear regression to predict the size of abstract ground graphs for varying schemas and models.  We first applied lasso for feature selection \citep{tibshirani1996lasso} to minimize the number of predictors while maximizing model fit.  We also standardized the input variables by dividing by two standard deviations, as recommended by \citet{gelman2008scaling}.  Since the predictor for the number of dependencies is log-transformed, the standardization for that variable occurs after taking the logarithm.

In predicting the (log of the) number of nodes, the following variables were significantly and substantively predictive (in order of decreasing predictive power):

\begin{table}
\centering
\begin{tabular}{r|c|c|c}
 Predictor & Coefficient & Partial & Semipartial \\ 
\hline
\# relationships & 3.24 & 0.452 & 0.150 \\
\# \textsc{many} cardinalities $\times$ isEntity=F & 3.09 & 0.349 & 0.109 \\
\# entities & -2.11 & 0.359 & 0.102 \\
\# \textsc{many} cardinalities $\times$ isEntity=T & 2.51 & 0.216 & 0.053 \\
\# \textsc{many} cardinalities $\times$ \# relationships & -0.88 & 0.100 & 0.020 \\
\# attributes & 0.23 & 0.024 & 0.004
\end{tabular}
\caption{Number of nodes in an abstract ground graph: estimated standardized coefficient, squared partial correlation coefficient, and squared semipartial correlation coefficient for each predictor.}
\label{tab:agg-node-model-stats}
\end{table}

\begin{itemize}
\item Number of relationships (positive)
\item Interaction between \textsc{many} cardinalities and an indicator variable for whether the abstract ground graph is from an entity or relationship perspective (positive)
\item Number of entities (negative)
\item Interaction between the number of \textsc{many} cardinalities and relationships (negative)
\item Total number of attributes (positive)
\end{itemize}

The fit for the nodes model has an $R^2 = 0.818$ for $n=450,000$, and Table~\ref{tab:agg-node-model-stats} contains the standardized coefficients as well as the squared partial and semipartial correlation coefficients for each predictor.  
For lasso, $\lambda = 0.0095$ offered the fewest predictors while increasing the model fit by at least 0.01.

In predicting the (log of the) number of edges, the following variables were significantly and substantively predictive (in order of decreasing predictive power):

\begin{table}
\centering
\begin{tabular}{r|c|c|c}
 Predictor & Coefficient & Partial & Semipartial \\ 
\hline
log(\# dependencies) & 1.44 & 0.440 & 0.165 \\
\# relationships & 3.86 & 0.395 & 0.138 \\
\# \textsc{many} cardinalities $\times$ isEntity=F & 4.27 & 0.356 & 0.123 \\
\# entities & -2.78 & 0.353 & 0.115 \\
\# \textsc{many} cardinalities $\times$ isEntity=T & 3.52 & 0.231 & 0.067 \\ 
\# \textsc{many} cardinalities $\times$ \# relationships & -1.35 & 0.127 & 0.031
\end{tabular}
\caption{Number of edges in an abstract ground graph: estimated standardized coefficient, squared partial correlation coefficient, and squared semipartial correlation coefficient for each predictor.}
\label{tab:agg-edge-model-stats}
\end{table}

\begin{itemize}
\item Log of the number of dependencies (positive)
\item Number of relationships (positive)
\item Interaction between \textsc{many} cardinalities and an indicator variable for whether the abstract ground graph is from an entity or relationship perspective (positive)
\item Number of entities (negative)
\item Interaction between the number of \textsc{many} cardinalities and relationships (negative)
\end{itemize}

The fit for the edges model has an $R^2 = 0.789$ for $n=450,000$, and Table~\ref{tab:agg-edge-model-stats} contains the standardized coefficients and the squared partial and semipartial correlation coefficients for each predictor.
For lasso, $\lambda = 0.0164$ offered the fewest predictors while increasing the model fit by at least 0.01.

\bibliography{jmlr2012}

\end{document}